\documentclass[12pt,a4paper]{article}

\usepackage{amsmath,amssymb,amsthm}
\usepackage{graphicx}
\usepackage[top=2.54cm,bottom=2.54cm,left=3.0cm,right=3.0cm]{geometry}
\usepackage{setspace}
\usepackage{hyperref}
\usepackage[round,sort&compress]{natbib}
\usepackage{booktabs}
\usepackage{multirow}
\usepackage{tabularx}
\usepackage{array}
\usepackage{enumitem}
\usepackage{xcolor}
\usepackage{microtype}
\PassOptionsToPackage{hyphens}{url}

\hypersetup{colorlinks=true,linkcolor=black,citecolor=black,urlcolor=black}
\newtheorem{theorem}{Theorem}[section]
\newtheorem{proposition}[theorem]{Proposition}

\newtheorem{corollary}[theorem]{Corollary}
\theoremstyle{remark}
\newtheorem{remark}[theorem]{Remark}
\theoremstyle{definition}
\newtheorem{definition}{Definition}[section]
\newtheorem{assumption}{Assumption}[section]

\newcommand{\E}{\mathbb{E}}
\newcommand{\R}{\mathbb{R}}
\newcommand{\F}{\mathcal{F}}
\renewcommand{\P}{\mathbb{P}}
\newcommand{\norm}[1]{\left\|#1\right\|}
\newcommand{\abs}[1]{\left|#1\right|}
\newcommand{\given}{\,|\,}
\newcommand{\iid}{\stackrel{\text{iid}}{\sim}}
\newcommand{\Lrm}{\mathcal{L}_{\mathrm{RM}}}

\newcommand{\KL}{\mathrm{KL}}

\DeclareMathOperator*{\argmin}{arg\,min}

\usepackage{fancyhdr}
\title{\textbf{Backward Coherence and Hidden-State Stability
in Recurrent Neural Networks:\\
A Quasi-Reverse-Martingale Theory}}

\author{%
  Yuan-chin Ivan Chang\\[4pt]
  \small Institute of Statistical Science, Academia Sinica\\
  \small 128 Academia Road, Section~2, Nankang, Taipei~115, Taiwan\\
  \small \texttt{ycchang@as.edu.tw}}

\date{}

\begin{document}
\maketitle
\thispagestyle{fancy}

\begin{abstract}
Recurrent neural networks maintain a hidden state $h_t$ whose probabilistic
meaning has remained largely uncharacterised.  We develop a theory of
hidden-state stability via \emph{backward coherence}: the degree to which
$h_t$ can be recovered from its successor $h_{t+1}$ through a learned
backward projector $g_\phi$.  Under a contraction condition and summable backward drift, the hidden-state
sequence is shown to be a \emph{quasi-reverse-martingale}, implying
almost-sure convergence with a rate under mixing, an interpretable limit
representation, and pathwise stopping rules with finite stopping times.
A framework for time-uniform confidence sequences is also established
theoretically.  Simulation experiments confirm the core predictions:
backward-coherence regularisation reduces the empirical quasi-martingale total
$\hat{Q}$ by $43$--$58\%$, reaches the stability threshold $28$--$44\%$
earlier than the unregularised baseline, and produces tracking-error recovery
consistent with the geometric bounds.  Two further experiments probe additional predictions: a proxy-free echo-state
forgetting test confirms empirical rates are bounded above by $\rho$, and a
direct computation of the increment-sum tube $R_t$ confirms the pathwise
enclosure of Proposition~3.18\,(ii) at $100\%$ simultaneous coverage; $R_t$
is valid but loose in practice (median inflation $17$--$30$), making the
defect-tail proxy $\hat{Q}_t$ the practically useful monitoring instrument.
Scope and limitations are discussed in Section~5.
These guarantees apply when the stated conditions hold; universality is not claimed.
Minimising the backward-coherence loss is equivalent to minimising a
Kullback--Leibler divergence in a Gaussian backward model, connecting to
variational inference.  Extensions cover $\phi$-mixing inputs, geometric
tracking-error bounds at change points, and finite-sample concentration
inequalities.  Three real-data studies validate the theory: on the PhysioNet~2012 ICU
Challenge, the Reverse Martingale RNN (RMRNN) matches RNN mortality-prediction AUC whilst reaching a
stable representation 13~hours earlier in a 48-hour stay; on FRED-MD
macroeconomic data, it reduces one-month-ahead forecast error approximately
fourfold relative to the unregularised RNN under concept drift; and on UCI
Human Activity Recognition, post-transition tracking error decays
geometrically and RMRNN maintains a consistently lower tracking error than
the unregularised baseline, consistent with the theoretical bound.
\end{abstract}

\smallskip\noindent
\textbf{Keywords.}
backward coherence;
hidden-state convergence;
$\phi$-mixing;
quasi-reverse-martingale;
recurrent neural network;
reverse martingale.

\smallskip\noindent
\textbf{MSC 2020 subject classifications.}
Primary 60G48, 60F15;
Secondary 68T07, 62M10.

\newpage

\section{Introduction}
\label{sec:intro}

Recurrent neural networks (RNNs) are the canonical tool for sequential
learning.  Given an input sequence $x_1, x_2, \ldots \in \R^d$, an RNN
maintains a hidden state $h_t \in \R^p$ updated by $h_t = f_\theta(h_{t-1},
x_t)$, accumulating information about the past and providing a running
representation used for prediction or control.  Gated variants---the Long
Short-Term Memory \citep{hochreiter1997} and the Gated Recurrent Unit
\citep{cho2014}---use multiplicative gates to selectively retain or discard
information, enabling practical learning over long sequences.  Despite
widespread empirical success, a fundamental probabilistic question has
remained unanswered: \emph{does the hidden state $h_t$ converge, and if so,
to what?}  Without an answer, the practitioner cannot know when to trust the
network's representation, how to set stopping rules, or what statistical
meaning to attach to the accumulated summary.  The question arises in
clinical sequential prediction, where $h_t$ encodes a patient's evolving
physiological profile and an early stable representation enables earlier
clinical decisions \citep{rajpurkar2022,reyna2020,wiens2019}; in
macroeconomic forecasting, where structural breaks and slow concept drift
require the hidden state to track a shifting target distribution; and in
activity and regime recognition, where abrupt discrete switches demand
fast recovery of the hidden state to its new steady-state mean.

A complete theory of hidden-state dynamics must simultaneously guarantee
almost-sure pathwise convergence of $h_t$, provide a statistically
interpretable limit identifiable as a conditional expectation, automatically
control the first moment without auxiliary assumptions, and yield anytime-valid
sequential confidence sets without a pre-specified sample size.  Standard
analytical tools each address part of this problem but leave at least one
requirement unmet.  Lyapunov stability \citep{khalil2002} certifies
boundedness of the deterministic map but provides no statistical
interpretation of the limit.  Ergodic theory and Markov chain methods
\citep{meyn2009} guarantee convergence of time-averages but not of individual
sample paths, and do not identify the limit as a conditional expectation.
Forward martingale convergence requires uniform integrability for both $L^1$
convergence and conditional-expectation identification of the limit, a
condition that cannot in general be verified without prior knowledge of the
limit itself.  Bayesian state-space models \citep{sarkka2013,shumway2000}
satisfy all four requirements but only under correct parametric
specification---a strong and often unverifiable assumption for modern deep
networks.

We argue that \emph{reverse martingales} \citep{doob1953,neveu1975} provide
the correct framework.  A reverse martingale is adapted to a
\emph{decreasing} filtration $\F_1 \supseteq \F_2 \supseteq \cdots$ and
satisfies $\E\!\left[M_n \given \F_{n+1}\right] = M_{n+1}$ almost surely.  As the
conditioning set contracts, the sequence converges almost surely and in $L^1$
to a limit identified as a conditional expectation by L\'{e}vy's backward
theorem---with no uniform integrability assumption required.  Rather than
imposing a Gaussian or linear model on backward dynamics, as in the Kalman
smoother \citep{sarkka2013} or Bayesian state-space methods
\citep{shumway2000}, we train a backward projector $g_\phi$ as an empirical
risk minimiser for the conditional mean $\E\!\left[h_t \given h_{t+1}\right]$.  The
resulting loss $\Lrm$ is a distribution-free quasi-likelihood objective that
requires only first-moment structure of the backward dynamics, replacing
brittle parametric assumptions with an approach valid under model
misspecification.

The paper makes five contributions.  We first establish that the
running-average linear RNN on i.i.d.\ inputs is an exact reverse martingale,
so its almost-sure convergence to the population mean is the Strong Law of
Large Numbers in disguise (Proposition~\ref{prop:linear_rm}).  For general
nonlinear RNNs, we prove that the drift-summability condition
$D_\infty+\sum_t\varepsilon_t<\infty$, where
$D_\infty=\sum_t\E[\norm{g_\phi(h_{t+1})-h_{t+1}}]$
measures the observable reverse-martingale drift, implies almost-sure
convergence of $h_t$, $L^1$ convergence under an additional
uniform-integrability condition, and an explicit $O(\alpha^t)$ geometric
convergence rate under exponential $\phi$-mixing, with an interpretable
conditional-expectation representation of the limit in the exact
reverse-martingale case
(Theorem~\ref{thm:quasi}, Corollary~\ref{cor:rate}).
We extend the theory to dependent inputs via a
$\phi$-mixing conditional-mean bound and to piecewise-stationary inputs via
geometric tracking-error bounds at change points
(Propositions~\ref{prop:phi_defect}--\ref{prop:tracking}).  We show that
minimising $\Lrm$ is equivalent to minimising the Kullback--Leibler
divergence in a Gaussian backward model, connecting the framework to
variational inference (Proposition~\ref{prop:vi}).  Finally, a
McDiarmid-type concentration inequality \citep{mcdiarmid1989} quantifies finite-sample stability
of $h_T$, and the quasi-martingale structure yields pathwise confidence tubes
that become time-uniform confidence sequences after an observable calibration
(Propositions~\ref{prop:concentration}--\ref{prop:anytime}).

Section~\ref{sec:framework} develops the reverse-martingale framework,
introduces RNN backward coherence, and analyses the linear case.
Section~\ref{sec:theory} develops the convergence theory, extensions to
dependent and non-stationary inputs, and statistical inference.
Section~\ref{sec:numerical} reports numerical validation on synthetic
benchmarks and three real-data domains.  Section~\ref{sec:discussion}
discusses connections to related work and open problems.  All proofs are
in the Supplementary Material.

\section{Reverse Martingales and RNN Backward Coherence}
\label{sec:framework}

\subsection{Reverse martingale theory}
\label{sec:revmart}


\begin{definition}[Reverse martingale]
\label{def:revmart}
Let $(\F_n)_{n \ge 1}$ be a \emph{decreasing} filtration,
$\F_1 \supseteq \F_2 \supseteq \cdots$, on a probability space
$(\Omega, \mathcal{A}, \P)$.  A real-valued sequence $(M_n)_{n \ge 1}$ with
$M_n \in L^1$ is a \emph{reverse martingale} with respect to $(\F_n)$ if
$M_n$ is $\F_n$-measurable and
\begin{equation}
  \E\!\left[M_n \given \F_{n+1}\right] = M_{n+1} \quad \text{a.s.\ for all } n \ge 1.
  \label{eq:revmart}
\end{equation}
\end{definition}

The canonical construction: fix $Z \in L^1$ and set
$M_n := \E\!\left[Z \given \F_n\right]$.  The tower property immediately yields~\eqref{eq:revmart} since
$\F_{n+1} \subseteq \F_n$.  Every reverse martingale arises this way.

\begin{proposition}[Canonical form]
\label{prop:canonical}
If $(M_n, \F_n)_{n \ge 1}$ is a reverse martingale, then
$M_n = \E\!\left[M_1 \given \F_n\right]$ almost surely for every $n \ge 1$.
\end{proposition}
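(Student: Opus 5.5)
We prove the claim by induction on $n$, using only the defining identity~\eqref{eq:revmart} and the tower property of conditional expectation. The case $n=1$ is immediate: $M_1$ is $\F_1$-measurable and integrable, so $\E[M_1 \given \F_1] = M_1$ almost surely.

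For the inductive step, assume $M_n = \E[M_1 \given \F_n]$ almost surely for some $n \ge 1$. Condition both sides on $\F_{n+1}$. Conditional expectation respects almost-sure equality, so
\begin{equation*}
  \E\!\left[M_n \given \F_{n+1}\right]
  = \E\!\left[\E\!\left[M_1 \given \F_n\right] \given \F_{n+1}\right]
  = \E\!\left[M_1 \given \F_{n+1}\right] \quad \text{a.s.}
\end{equation*}
The second equality is the tower property, which applies because the filtration is decreasing, $\F_{n+1} \subseteq \F_n$. By~\eqref{eq:revmart}, the left-hand side equals $M_{n+1}$ almost surely, so $M_{n+1} = \E[M_1 \given \F_{n+1}]$ almost surely, closing the induction.

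There is no genuine obstacle here; the only points needing care are bookkeeping. First, each step produces a new exceptional null set, but we only need the identity separately for each fixed $n$; if a simultaneous statement is wanted, a countable union of null sets is still null. Second, the tower property must be applied in the correct direction, conditioning on the \emph{smaller} $\sigma$-algebra $\F_{n+1}$, which is exactly what the reverse-martingale ordering provides. Integrability is given, since $M_1 \in L^1$ by Definition~\ref{def:revmart}.

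As a remark that could accompany the proof, the identity also follows directly without induction. Iterating~\eqref{eq:revmart} gives $\E[M_k \given \F_n] = M_n$ for all $k \le n$, and taking $k=1$ yields the statement. Justifying that iteration, however, is the same tower-property argument, so the induction is the cleanest presentation.
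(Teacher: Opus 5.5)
Your proof is correct and is essentially the paper's argument: the paper proves the same induction in one line, $M_{n+1} = \E[M_n \mid \mathcal{F}_{n+1}] = \E[\E[M_1 \mid \mathcal{F}_n] \mid \mathcal{F}_{n+1}] = \E[M_1 \mid \mathcal{F}_{n+1}]$, using the tower property for $\mathcal{F}_{n+1} \subseteq \mathcal{F}_n$. Your explicit base case and your remarks about null sets add rigour that the paper leaves implicit, but they do not change the route.
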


\begin{proof}
Induction: $M_{n+1} = \E\!\left[M_n \given \F_{n+1}\right]
= \E\!\left[\E\!\left[M_1 \given \F_n\right] \given \F_{n+1}\right]
= \E\!\left[M_1 \given \F_{n+1}\right]$ by the tower law.
\end{proof}

Proposition~\ref{prop:canonical} has an important interpretation: every
reverse martingale is a \emph{progressive reduction of uncertainty about a
latent quantity}.  As $n$ increases, $\F_n$ contracts, and $M_n$ becomes a
more constrained conditional expectation of $M_1$---precisely the structure
a well-designed RNN hidden state should exhibit.


\begin{theorem}[Doob's reverse martingale convergence theorem]
\label{thm:revmart}
Let $(M_n, \F_n)_{n \ge 1}$ be a reverse martingale with
$\E[\abs{M_1}] < \infty$ and
$\F_\infty := \bigcap_{n \ge 1} \F_n$
(the tail $\sigma$-algebra of the reverse filtration
\citep{neveu1975,williams1991}).
Then:
\begin{enumerate}[label=(\roman*)]
  \item $\sup_{n \ge 1}\,\E[\abs{M_n}] \le \E[\abs{M_1}] < \infty$.
  \item $M_n \to M_\infty := \E\!\left[M_1 \given \F_\infty\right]$ almost surely
    and in $L^1$ as $n \to \infty$.
\end{enumerate}
\end{theorem}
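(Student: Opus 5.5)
The plan is to combine Proposition~\ref{prop:canonical} with the standard ingredients of backward convergence: Jensen's inequality for part (i), and Doob's upcrossing inequality plus uniform integrability for part (ii).

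\emph{Part (i).} By Proposition~\ref{prop:canonical}, $M_n = \E[M_1 \given \F_n]$ almost surely. The conditional Jensen inequality then gives $\abs{M_n} \le \E[\abs{M_1} \given \F_n]$. Taking expectations yields $\E\abs{M_n} \le \E\abs{M_1}$ for every $n$, which is (i).

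\emph{Part (ii), almost-sure convergence.} Fix $N$ and reverse time: set $X_k := M_{N-k+1}$ and $\mathcal{G}_k := \F_{N-k+1}$ for $k=1,\dots,N$. Then $(\mathcal{G}_k)$ is increasing and $(X_k,\mathcal{G}_k)$ is a forward martingale. For rationals $a<b$, let $U_N[a,b]$ be the number of upcrossings of $[a,b]$ by $M_N,\dots,M_1$. Doob's upcrossing inequality applied to the forward martingale gives
\[
\E\, U_N[a,b] \le \frac{\E(X_N-a)^+}{b-a} \le \frac{\E\abs{M_1}+\abs{a}}{b-a}.
\]
This bound does not depend on $N$, because the terminal variable of the reversed martingale is always $M_1$. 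Letting $N\to\infty$ and using monotone convergence, the total number of crossings of $[a,b]$ by the whole sequence $(M_n)$ is almost surely finite. Taking the union over rational $a<b$ shows that $\liminf M_n = \limsup M_n$ almost surely. By Fatou and (i), $\E \liminf_n \abs{M_n} \le \E\abs{M_1} < \infty$, so the limit $M_\infty$ is finite almost surely. I expect the main care to be needed in this time-reversal bookkeeping: making the constant uniform in $N$ and ensuring that crossings in reversed time correspond to oscillations of the original sequence.

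\emph{Part (ii), $L^1$ convergence and identification of the limit.} The family $\{\E[M_1\given\F_n]\}_{n\ge1}$ is uniformly integrable, by the standard fact that conditional expectations of a single $L^1$ variable are uniformly integrable. The short argument: $\E[\abs{M_n};\abs{M_n}>K] \le \E[\abs{M_1};\abs{M_n}>K]$, while $\P(\abs{M_n}>K)\le \E\abs{M_1}/K$, and absolute continuity of the integral finishes it. Almost-sure convergence together with uniform integrability gives $M_n\to M_\infty$ in $L^1$.

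For measurability, $M_\infty = \limsup_{n\ge m} M_n$ is $\F_m$-measurable for every $m$, hence $\F_\infty$-measurable. For the identification, take $A\in\F_\infty\subseteq\F_n$; then $\E[M_n\mathbf{1}_A]=\E[M_1\mathbf{1}_A]$. Passing to the limit via $L^1$ convergence gives $\E[M_\infty\mathbf{1}_A]=\E[M_1\mathbf{1}_A]$, so $M_\infty=\E[M_1\given\F_\infty]$ almost surely.
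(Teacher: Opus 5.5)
Your argument is correct, and for part~(ii) it takes a different route from the paper. Part~(i) is the same Jensen argument. The paper iterates the one-step bound $\E\abs{M_{n+1}}\le\E\abs{M_n}$ rather than passing through the canonical form, which makes no difference.

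For part~(ii) the paper proceeds in three steps:
\begin{itemize}
\item It first treats $M_1\in L^2$, viewing $M_n=\E[M_1\given\F_n]$ as orthogonal projections onto the decreasing closed subspaces $L^2(\F_n)$, which converge in $L^2$ to the projection onto $L^2(\F_\infty)$.
\item It then truncates $M_1$ and uses the $L^1$-contractivity of conditional expectation to show that $(M_n)$ is Cauchy in $L^1$.
\item For the almost-sure convergence and the identification of the limit, it simply cites L\'{e}vy's backward theorem.
\end{itemize}
You instead prove almost-sure convergence directly, via the upcrossing inequality on the time-reversed blocks with the $N$-independent bound $(\E\abs{M_1}+\abs{a})/(b-a)$. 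You then get $L^1$ convergence from uniform integrability of $\{\E[M_1\given\F_n]\}_{n\ge1}$. Finally, you identify the limit from the $\F_\infty$-measurability of $\limsup_n M_n$ together with $\E[M_n\mathbf{1}_A]=\E[M_1\mathbf{1}_A]$ for $A\in\F_\infty$.

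What your route buys is self-containedness. Given Proposition~\ref{prop:canonical}, L\'{e}vy's backward theorem \emph{is} part~(ii), so the paper's pathwise step is really a citation. Since $L^2$ or $L^1$ convergence yields almost-sure convergence only along a subsequence, some pathwise tool such as the upcrossing or a maximal inequality is genuinely needed, and you supply it. The paper's projection-plus-truncation argument gets $L^1$ convergence without any uniform-integrability lemma. Carried one line further, it would also identify the $L^1$ limit as $\E[M_1\given\F_\infty]$ directly. But it cannot deliver the almost-sure statement on its own.

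Two bookkeeping points are worth writing out in your version:
\begin{itemize}
\item The reversed upcrossing count $U_N[a,b]$ is nondecreasing in $N$, because prepending $M_{N+1}$ cannot destroy upcrossings. This is what licenses monotone convergence.
\item These reversed upcrossings are exactly the downcrossings of $M_1,M_2,\ldots$. Their almost-sure finiteness for all rational $a<b$ therefore excludes $\liminf_n M_n<\limsup_n M_n$.
\end{itemize}
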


\begin{proof}
See the Supplementary Material, \S A.1.
\end{proof}

In contrast to forward martingales, where $L^1$ convergence requires
uniform integrability as an additional hypothesis, for a reverse martingale
$L^1$ control is automatic from part~(i) and the limit is identified
as $\E\!\left[M_1\given\F_\infty\right]$ by Proposition~\ref{prop:canonical}
without further conditions.


The Strong Law of Large Numbers is, at root, a theorem about reverse
martingales.

\begin{proposition}[Strong Law via reverse martingales]
\label{prop:slln}
Let $x_1, x_2, \ldots \iid F$ with $\mu = \E[x_1]$ and
$\E[\abs{x_1}] < \infty$.  Set $\bar{x}_n = n^{-1}\sum_{i=1}^n x_i$ and
$\F_n = \sigma(S_n,\, x_{n+1}, x_{n+2}, \ldots)$, where $S_n =
\sum_{i=1}^n x_i$.  Then $(\bar{x}_n,\, \F_n)_{n \ge 1}$ is a reverse
martingale and $\bar{x}_n \to \mu$ almost surely and in $L^1$.
\end{proposition}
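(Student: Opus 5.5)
The proof has three steps: verify the reverse-martingale property, invoke Theorem~\ref{thm:revmart}, and identify the limit with $\mu$. I assume $x_i$ are real-valued; the vector case follows coordinatewise.

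\textbf{Step 1: the filtration and the martingale property.} First I check that $(\F_n)$ is decreasing. Since $S_n = S_{n+1} - x_{n+1}$, we have $\F_n=\sigma(S_n,x_{n+1},x_{n+2},\ldots)\supseteq \sigma(S_{n+1},x_{n+2},\ldots)=\F_{n+1}$. Integrability follows from $\E[\abs{\bar x_n}]\le \E[\abs{x_1}]$, and $\bar x_n$ is $\F_n$-measurable.

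The core claim is $\E[x_i\given\F_{n+1}] = S_{n+1}/(n+1)$ for every $i\le n+1$. I prove it by exchangeability. For $i\le n+1$, the vectors $(x_i,S_{n+1},x_{n+2},\ldots)$ have the same joint law for all such $i$, because permuting $x_1,\dots,x_{n+1}$ leaves $S_{n+1}$ and the tail unchanged. Hence the conditional expectations $\E[x_i\given\F_{n+1}]$ coincide a.s. I will justify this by checking $\E[x_i\mathbf 1_A]$ for $A$ in the generating $\pi$-system of $\F_{n+1}$, which suffices via the $\pi$--$\lambda$ theorem. Summing over $i$ gives $(n+1)\E[x_1\given\F_{n+1}]=\E[S_{n+1}\given\F_{n+1}]=S_{n+1}$. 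Therefore $\E[\bar x_n\given\F_{n+1}] = n^{-1}\sum_{i\le n}\E[x_i\given\F_{n+1}]=\bar x_{n+1}$. This step is the main technical point. Independence of the tail $(x_{n+2},\ldots)$ from $(x_1,\dots,x_{n+1})$ is what allows the symmetry argument to pass through the full $\sigma$-field rather than only $\sigma(S_{n+1})$.

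\textbf{Step 2: convergence.} Theorem~\ref{thm:revmart} now gives $\bar x_n\to M_\infty=\E[x_1\given\F_\infty]$ almost surely and in $L^1$.

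\textbf{Step 3: identifying the limit.} I use the Hewitt--Savage zero--one law, or alternatively Kolmogorov's. The limit $M_\infty = \lim \bar x_n$ (a.s.) does not depend on any finite set of the $x_i$, since changing finitely many coordinates alters $\bar x_n$ by $O(1/n)$. So $M_\infty$ agrees a.s. with a random variable measurable with respect to the tail $\sigma$-field $\bigcap_m\sigma(x_m,x_{m+1},\ldots)$. Concretely, I take $\limsup_n \bar x_n$, which is tail-measurable. Kolmogorov's 0--1 law then makes it a.s. constant, say $c$. $L^1$ convergence gives $c=\lim\E[\bar x_n]=\mu$, which completes the proof.
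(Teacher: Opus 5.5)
Your proof is correct and follows the same route as the paper: symmetry of the i.i.d.\ summands gives the reverse-martingale identity, Theorem~\ref{thm:revmart} gives convergence, and a zero--one law identifies the limit. There are two small differences. First, you verify the identity against the full $\F_{n+1}$, whereas the paper conditions only on $S_{n+1}$ and leaves implicit the role of the independent tail $x_{n+2},x_{n+3},\ldots$. Second, you apply Kolmogorov's 0--1 law to the tail-measurable variable $\limsup_n \bar x_n$ and then pin the constant to $\mu$ via $L^1$ convergence, while the paper asserts directly that $\F_\infty$ is trivial by Hewitt--Savage (valid because $\F_\infty$ lies in the exchangeable $\sigma$-field).
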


\begin{proof}
See the Supplementary Material.  The reverse martingale property follows
from the symmetry of i.i.d.\ summands given their sum; convergence is
Theorem~\ref{thm:revmart} with $\F_\infty$ trivial by the Hewitt--Savage
zero-one law \citep{durrett2019}.
\end{proof}

The tail sigma-algebra $\F_\infty = \bigcap_n\F_n$ encodes the irreducible
long-run uncertainty that cannot be removed by finite conditioning;
L\'{e}vy's backward theorem \citep{neveu1975,williams1991} identifies
$M_\infty = \E\!\left[M_1\given\F_\infty\right]$ as a constant when $\F_\infty$ is
trivial (as in the i.i.d.\ case).

\subsection{RNN architectures and backward coherence}
\label{sec:rnn}

Three canonical RNN architectures are considered.  The \emph{Elman RNN}
\citep{elman1990}:
\begin{equation}
  h_t = \sigma_h(W_h\,h_{t-1} + W_x\,x_t + b),
  \label{eq:elman}
\end{equation}
where $W_h \in \R^{p \times p}$, $W_x \in \R^{p \times d}$, $b \in \R^p$,
and $\sigma_h$ is an elementwise nonlinearity.  The \emph{LSTM}
\citep{hochreiter1997}:
\begin{equation}
  c_t = f_t \odot c_{t-1} + i_t \odot \tilde{c}_t,
  \quad
  h_t = o_t \odot \tanh(c_t),
  \label{eq:lstm}
\end{equation}
with effective forgetting factor $\rho^* := \sup_t \E[\norm{f_t}_\infty]$.
The \emph{GRU} \citep{cho2014}:
\begin{equation}
  h_t = (1 - z_t) \odot h_{t-1} + z_t \odot \tilde{h}_t,
  \label{eq:gru}
\end{equation}
with contraction factor $\rho^* = \sup_t \E[\norm{1-z_t}_\infty]$.


Define the \emph{backward filtration}:
\begin{equation}
  \F_t^{\mathrm{bwd}} := \sigma(h_t,\, h_{t+1},\, \ldots,\, h_T).
  \label{eq:bwd_filt}
\end{equation}
The filtration is decreasing in $t$ by simple generator inclusion:
$\F_t^{\mathrm{bwd}}$ is generated by $\{h_t, h_{t+1},\ldots,h_T\}$,
a strict superset of the generators of $\F_{t+1}^{\mathrm{bwd}}$,
so $\F_{t+1}^{\mathrm{bwd}} \subseteq \F_t^{\mathrm{bwd}}$:
$\F_T^{\mathrm{bwd}} \subseteq \F_{T-1}^{\mathrm{bwd}} \subseteq \cdots
\subseteq \F_1^{\mathrm{bwd}}$.
The backward filtration is therefore \emph{decreasing} in $t$, in exact
correspondence with Definition~\ref{def:revmart}.
For the infinite-horizon theory (convergence as $T\to\infty$), we extend
the definition to $\F_t^{\mathrm{bwd}} := \sigma(h_s : s \ge t)$, the
sigma-algebra generated by all hidden states from time $t$ onward; the
decreasing property $\F_{t+1}^{\mathrm{bwd}}\subseteq\F_t^{\mathrm{bwd}}$
continues to hold by the same generator-inclusion argument.
The tail sigma-algebra
$\F_\infty^{\mathrm{bwd}} = \bigcap_t \F_t^{\mathrm{bwd}}$
(the tail $\sigma$-algebra of the backward reverse filtration,
in exact analogy with $\F_\infty$ in Theorem~\ref{thm:revmart})
encodes the irreducible long-run uncertainty in the hidden-state sequence.


We formalise \emph{backward coherence} as the property that
$h_t \approx \E\!\left[h_t \given h_{t+1}\right]$ for all $t$.  In a perfect reverse
martingale this holds with equality; for a general nonlinear RNN it holds
approximately, with the discrepancy measured by the defect
$\delta_t := h_t - \E\!\left[h_t \given h_{t+1}\right]$.

Since $\E\!\left[h_t \given h_{t+1}\right]$ is not directly accessible, we approximate
it with a \emph{backward projector} $g_\phi: \R^p \to \R^p$ trained
jointly with the forward RNN.  We use a residual architecture,
\begin{equation}
  g_\phi(h) = h + W_2\,\mathrm{ReLU}(W_1\,h + b_1) + b_2,
  \label{eq:g_phi}
\end{equation}
zero-initialised so that $g_\phi = \mathrm{Id}$ at the start of training.
The \emph{reverse martingale regularisation loss} is
\begin{equation}
  \Lrm(\theta, \phi)
    := \frac{1}{T-1} \sum_{t=1}^{T-1} \norm{h_t - g_\phi(h_{t+1})}^2.
  \label{eq:rm_loss}
\end{equation}
The total training objective is
$\mathcal{L}_{\mathrm{total}} = \mathcal{L}_{\mathrm{task}} + \lambda
\Lrm$, with $\lambda > 0$ a regularisation weight.  Minimising $\Lrm$
drives the hidden states towards a quasi-reverse-martingale.

\begin{remark}[Relation to bidirectional RNNs]
Bidirectional RNNs \citep{schuster1997} process the sequence in both
directions but do not impose the probabilistic backward model that is
central here.  The backward projector $g_\phi$ is a discriminatively
trained approximation to the conditional backward mean
$\E\!\left[h_t \given h_{t+1}\right]$, analogous to the Kalman smoother backward pass \citep{sarkka2013}
but without requiring a parametric distributional assumption.
\end{remark}

\subsection{The linear case and counterexample}
\label{sec:linear}


We begin with the simplest nontrivial RNN: a linear running average with
decaying learning rate $\alpha_t = 1/t$,
\begin{equation}
  h_t = \Bigl(1 - \tfrac{1}{t}\Bigr)h_{t-1} + \tfrac{1}{t}\,x_t,
  \quad h_0 = 0, \quad t = 1, 2, \ldots,
  \label{eq:toy}
\end{equation}
with $x_1, x_2, \ldots \iid F$, $\mu = \E[x_1]$, $\E[\abs{x_1}] < \infty$.

\begin{proposition}[Linear RNN as an exact reverse martingale]
\label{prop:linear_rm}
Under~\eqref{eq:toy}:
\begin{enumerate}[label=(\roman*)]
  \item $h_t = \bar{x}_t$ (the running sample mean) for all $t \ge 1$.
  \item $(h_t,\, \F_t^{\mathrm{bwd}})_{t \ge 1}$ is a reverse martingale.
  \item $h_t \to \mu$ almost surely and in $L^1$.
\end{enumerate}
\end{proposition}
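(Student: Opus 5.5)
We will prove the three parts in order. Part~(i) is a one-line induction, part~(ii) reduces to identifying the backward filtration with the filtration of Proposition~\ref{prop:slln}, and part~(iii) then follows by invoking that proposition.

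For (i), multiply the recursion~\eqref{eq:toy} by $t$ to obtain $t\,h_t = (t-1)\,h_{t-1} + x_t$. With $h_0=0$ this telescopes to $t\,h_t = \sum_{i=1}^t x_i = S_t$. Hence $h_t = \bar{x}_t$ for every $t\ge 1$. A formal induction on $t$ is immediate: the base case is $h_1 = x_1$, since the coefficient $1-1/1$ vanishes.

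For (ii), we use the infinite-horizon backward filtration $\F_t^{\mathrm{bwd}} = \sigma(h_s : s\ge t)$. The key step is the identity
\[
\F_t^{\mathrm{bwd}} = \sigma(S_t,\, x_{t+1}, x_{t+2}, \ldots) =: \F_t,
\]
where $\F_t$ is the filtration of Proposition~\ref{prop:slln}. We prove both inclusions.
\begin{itemize}
\item[$\subseteq$:] By (i), each $h_s = S_s/s$ with $s\ge t$ can be written as $S_s = S_t + x_{t+1} + \cdots + x_s$, which is $\F_t$-measurable.
\item[$\supseteq$:] From the states we recover $S_t = t\,h_t$ and $x_{s+1} = S_{s+1}-S_s = (s+1)h_{s+1} - s\,h_s$ for all $s\ge t$, each a measurable function of $(h_s)_{s\ge t}$.
\end{itemize}
With the two filtrations equal, Proposition~\ref{prop:slln} states that $(\bar{x}_t, \F_t)$ is a reverse martingale. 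Integrability holds because $\E\abs{x_1}<\infty$ gives $\E\abs{\bar{x}_t}\le \E\abs{x_1}$. So $(h_t, \F_t^{\mathrm{bwd}})$ satisfies Definition~\ref{def:revmart}, i.e.\ $\E[h_t\given \F_{t+1}^{\mathrm{bwd}}] = h_{t+1}$. If $x_t$ is vector-valued, the argument is applied componentwise.

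For (iii), Proposition~\ref{prop:slln} gives $\bar{x}_t\to\mu$ almost surely and in $L^1$, and (i) transfers this to $h_t$. Equivalently, Theorem~\ref{thm:revmart} gives $h_t\to \E[h_1\given \F_\infty^{\mathrm{bwd}}]$, which equals $\E[x_1]=\mu$ because $\F_\infty^{\mathrm{bwd}}=\bigcap_t\F_t$ is trivial by Hewitt--Savage.

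The main obstacle is the $\sigma$-algebra identification in (ii). One must use the infinite-horizon definition rather than the finite-$T$ one, since the latter would not match $\F_t$. One must also check that the reverse-martingale index direction in Definition~\ref{def:revmart} ($\F_{t+1}\subseteq\F_t$, conditioning on the later, coarser $\sigma$-algebra) agrees with that of Proposition~\ref{prop:slln}. Everything else is bookkeeping.
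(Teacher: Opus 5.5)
Your proposal is correct and follows the paper's route. Part (i) is done by induction, and parts (ii)--(iii) are obtained by a direct appeal to Proposition~\ref{prop:slln}. Your explicit identification $\F_t^{\mathrm{bwd}}=\sigma(S_t,x_{t+1},x_{t+2},\ldots)$ supplies a step that the paper's one-line proof leaves implicit. That identification is exactly what justifies transferring the reverse-martingale property from $(\bar{x}_t,\F_t)$ to $(h_t,\F_t^{\mathrm{bwd}})$.
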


\begin{proof}
See the Supplementary Material.
\end{proof}

Proposition~\ref{prop:linear_rm} reveals the Strong Law as a special case
of Theorem~\ref{thm:revmart}.  The backward conditional mean is exact:
$\E\!\left[h_t \given h_{t+1}\right] = h_{t+1}$ (since $h_{t+1} = \bar{x}_{t+1}$
is a sufficient statistic for $\mu$ and $h_t = \bar{x}_t$ is a coarser
summary of the same data).  Consequently the reverse-martingale drift is
identically zero: $\rho_t = \E\!\left[h_t \given h_{t+1}\right] - h_{t+1} = 0$
and $D = 0$, so the sequence is an exact reverse martingale.  Note that
the backward prediction residual $\delta_t = h_t - g_\phi(h_{t+1}) =
\bar{x}_t - \bar{x}_{t+1} \neq 0$ in general; the exact reverse-martingale
property is characterised by zero drift $D=0$, not by zero prediction
residual.

The condition $\alpha_t = 1/t$ is essential.  A constant learning rate
$\alpha \in (0,1)$ yields an exponentially weighted moving average that
converges in distribution but is not a reverse martingale: the backward
property $\E\!\left[h_t \given h_{t+1}\right] = h_{t+1}$ fails because $h_{t+1}$ does
not contain sufficient information to recover $h_t$, and almost-sure
convergence therefore fails.  By contrast, if $h_t = \tanh(W_h h_{t-1}
+ W_x x_t)$ with $\norm{W_h} > 1$, the spectral radius violates
Assumption~\ref{ass:contraction} below; the hidden state can exhibit
exponentially growing oscillations \citep{bengio1994,pascanu2013}, and in
this regime $D = \infty$ almost surely, so neither almost-sure convergence
nor $L^1$ control obtains.

\section{Convergence Theory and Statistical Inference}
\label{sec:theory}

\subsection{Quasi-reverse-martingale convergence}
\label{sec:nonlinear}

\begin{definition}[Backward prediction residual, reverse-martingale drift, and diagnostics]
\label{def:defect}
The learned one-step \emph{backward prediction residual} is
\begin{equation}
  \delta_t := h_t - g_\phi(h_{t+1}).
  \label{eq:defect}
\end{equation}
The \emph{observable reverse-martingale drift}---the empirical proxy for
the quasi-martingale defect $\E\!\left[h_t \given h_{t+1}\right] - h_{t+1}$---is
\begin{equation}
  r_t^\phi := g_\phi(h_{t+1}) - h_{t+1}.
  \label{eq:drift}
\end{equation}
For a finite observed sequence of length $T$, the \emph{empirical
backward-prediction total} and \emph{empirical drift total} are
\begin{equation}
  Q_T := \sum_{t=1}^{T-1} \E[\norm{\delta_t}],
  \qquad
  D_T := \sum_{t=1}^{T-1} \E[\norm{r_t^\phi}].
  \label{eq:Q}
\end{equation}
For infinite sequences: $Q_\infty := \sum_{t=1}^{\infty} \E[\norm{\delta_t}]$
and $D_\infty := \sum_{t=1}^{\infty} \E[\norm{r_t^\phi}]$, each finite
whenever summable.
\end{definition}

Three quantities play distinct roles.  The backward prediction residual
$\delta_t$ is \emph{observable} and enters the training loss $\Lrm$;
$Q_T$ measures accumulated prediction error and serves as a practical
stability diagnostic over the observed horizon.  The reverse-martingale
drift $r_t^\phi = g_\phi(h_{t+1}) - h_{t+1}$ is also observable: it
measures how far $g_\phi(h_{t+1})$ departs from $h_{t+1}$, which is
precisely the quasi-martingale defect that must be small for $h_{t+1}$ to
function as the backward conditional mean of $h_t$.  The drift total
$D_T$ is the observable proxy for the true quasi-martingale defect
$\sum_t \E[\norm{\E[h_t \mid h_{t+1}] - h_{t+1}}]$ and is the quantity
that appears in the convergence theorem.  The ideal quasi-martingale
drift $\rho_t := \E\!\left[h_t \given h_{t+1}\right] - h_{t+1}$
(the population counterpart of $r_t^\phi$, introduced in the theorem
below) is a \emph{population quantity} unobservable without knowledge of
the data-generating process; Assumption~\ref{ass:backward} bounds the gap
$\E[\norm{r_t^\phi - \rho_t}] \le \varepsilon_t$ through the approximation
error.  Minimising $\Lrm$ drives $\norm{\delta_t}$ downward, which
partially controls $\norm{r_t^\phi}$ through the triangle inequality
$\norm{r_t^\phi} \le \norm{\delta_t} + \norm{h_t - h_{t+1}}$.
The second term $\norm{h_t - h_{t+1}}$ is not controlled by $\Lrm$
alone; hence $D_\infty = \sum_t\E[\norm{r_t^\phi}]$ must be verified
directly from the observable sequence $(r_t^\phi)$ rather than inferred
solely from the training loss.  In practice the empirical total
$\hat{D}_T = \sum_{t=1}^{T-1}\norm{r_t^\phi}$ serves as the primary
stability diagnostic: a non-stabilising or growing $\hat{D}_T$ signals
that the summability condition of Theorem~\ref{thm:quasi}(iv) may fail,
even when $\Lrm$ has converged.

\begin{assumption}[Contraction]
\label{ass:contraction}
The recurrent weight matrix satisfies $\norm{W_h} \le \rho < 1$ in operator
norm.  For the Elman architecture~\eqref{eq:elman} with a $1$-Lipschitz
activation (e.g., tanh), this implies that $h \mapsto f_\theta(h, x)$ is
a $\rho$-contraction in $h$ for each fixed $x$.  For general $f_\theta$,
we assume this Lipschitz property holds directly:
$\norm{f_\theta(h,x)-f_\theta(h',x)}\le\rho\norm{h-h'}$ for all $h,h',x$.
\end{assumption}

\begin{assumption}[Finite first moment]
\label{ass:moment}
The inputs satisfy $\E[\norm{x_1}]<\infty$.
\end{assumption}

\begin{assumption}[Backward Markov sufficiency]
\label{ass:markov_suff}
$\E\!\left[h_t\given \F_{t+1}^{\mathrm{bwd}}\right] = \E\!\left[h_t\given h_{t+1}\right]$
a.s.\ for all $t$.  Equivalently, $h_{t+1}$ is sufficient for the
backward conditional mean.
\end{assumption}

\begin{assumption}[Backward approximation]
\label{ass:backward}
Define the backward-projector error
$\varepsilon_t := \E\!\left[\bigl\|g_\phi(h_{t+1})-\E[h_t\mid h_{t+1}]\bigr\|\right]$.
For finite-horizon diagnostics: $\sup_{1\le t<T}\varepsilon_t\le\varepsilon<\infty$.
For infinite-horizon convergence: $\sum_{t=1}^{\infty}\varepsilon_t<\infty$.
\end{assumption}

\begin{remark}[On the assumptions]
\label{rem:assumptions}
Assumption~\ref{ass:contraction} is the echo-state stability condition
\citep{jaeger2001}, achievable by spectral normalisation \citep{miyato2018}
or weight clipping.  In gated architectures the effective contraction factor
varies with input and gate state; the empirical total $\hat{Q}_T$ remains a
valid finite-horizon diagnostic of backward incoherence even when the global
spectral bound cannot be directly verified.

Assumption~\ref{ass:markov_suff} is a modelling condition, not a consequence
of contraction.  Jointly minimising $\Lrm$ alongside the task loss actively
incentivises $h_{t+1}$ to be a sufficient statistic for the backward prediction
of $h_t$, providing the mechanism through which the condition is approximately
enforced in practice.  It holds exactly for linear dynamical systems
($h_t = Ah_{t-1}+Bx_t+\epsilon_t$ with $\epsilon_t$ independent of
$\{h_s,x_s\}_{s\ne t}$, i.e.\ the standard state-space Markov property,
which implies backward sufficiency independently of whether $A$ is invertible)
and coordinatewise for AR(1) dynamics; for general nonlinear RNNs it is an approximation that
improves as $\Lrm \to 0$.  It can be verified empirically by comparing the
one-step error of $g_\phi$ against a post-hoc full-horizon LSTM smoother
on frozen hidden states: in all three real-data experiments in
Section~\ref{sec:realdata}, $g_\phi$ achieves equal or lower backward prediction
error than the LSTM smoother, confirming that $h_{t+1}$ is an adequate
sufficient statistic.

Assumption~\ref{ass:backward} separates finite training error from the summable
error required by the infinite-horizon theorem.  The summability
$\sum_t\varepsilon_t<\infty$ holds when the projector error decays geometrically
(stationary training distribution) or when inputs are exponentially
$\phi$-mixing \citep{bradley2007}---the class that covers stationary ARMA
processes, geometrically ergodic Markov chains, and the three application
domains studied here.  Violation (persistent non-stationary drift with no
recurrent pattern) is detectable via an unbounded or non-stabilising $\hat{Q}_T$.
\end{remark}

Define the ideal quasi-martingale drift
$\rho_t := \E\!\left[h_t\given \F_{t+1}^{\mathrm{bwd}}\right] - h_{t+1}$
(the population analogue of $r_t^\phi$; this is zero for an exact reverse
martingale).  Under Assumption~\ref{ass:markov_suff}, $h_{t+1}$ is a
sufficient statistic for $\F_{t+1}^{\mathrm{bwd}}$, so
$\E[h_t\mid\F_{t+1}^{\mathrm{bwd}}]=\E[h_t\mid h_{t+1}]$ and the
two formulations of $\rho_t$ in Definition~\ref{def:defect} and above
coincide.  Hence
$\E[\norm{\rho_t}] \le \E[\norm{r_t^\phi}]+\varepsilon_t$.

\begin{sloppypar}
\begin{theorem}[Quasi-reverse-martingale convergence]
\label{thm:quasi}
Assumptions~\ref{ass:contraction}--\ref{ass:backward} are standing
conditions throughout.  Part~(i) and part~(v) hold under these assumptions
alone.  Parts~(ii)--(iv) additionally require summability of the
quasi-martingale drift $\sum_t\E[\norm{\rho_t}]<\infty$; the observable
sufficient condition in part~(iv) implies this summability but is not itself
one of Assumptions~\ref{ass:contraction}--\ref{ass:backward}.
Under these conditions:
\begin{enumerate}[label=(\roman*)]
  \item \emph{Moment control.}  $\sup_{t\ge1}\E[\norm{h_t}]\le C/(1-\rho)<\infty$,
    where $C=\norm{W_x}\E[\norm{x_1}]+\norm{b}$.

  \item \emph{Almost-sure convergence.}  If
    $\sum_{t=1}^{\infty}\E[\norm{\rho_t}]<\infty$,
    then $h_t\to h_\infty$ almost surely.

  \item \emph{$L^1$ convergence.}  If additionally $\{h_t\}$ is uniformly
    integrable---in particular, if $\norm{h_t}\le H$ a.s.\ (bounded
    activations such as $\tanh$) or $\sup_t\E[\norm{h_t}^{1+\eta}]<\infty$
    for some $\eta>0$---then $h_t\to h_\infty$ in $L^1$ as well.

  \item \emph{Observable sufficient condition.}
    $D_\infty+\sum_{t=1}^{\infty}\varepsilon_t<\infty$ implies~(ii) and,
    under the uniform integrability condition of~(iii), also~(iii).

  \item \emph{Finite-horizon approximation.}
    $\sum_{t=1}^{T-1}\E[\norm{\rho_t}] \le D_T+(T-1)\varepsilon<\infty$,
    justifying finite-horizon diagnostics.
\end{enumerate}
\end{theorem}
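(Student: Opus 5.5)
Parts (i), (iv) and (v) are elementary bookkeeping; parts (ii) and (iii) carry the real content.

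\emph{Part (i).} Iterate the contraction of Assumption~\ref{ass:contraction} against the fixed point $0$. For the Elman recursion~\eqref{eq:elman} with a $1$-Lipschitz activation satisfying $\sigma_h(0)=0$ (e.g.\ $\tanh$) we have
\[
\norm{h_t}\le\rho\norm{h_{t-1}}+\norm{W_x}\norm{x_t}+\norm{b}.
\]
Taking expectations, stationarity of the input marginals and Assumption~\ref{ass:moment} give $\E\norm{h_t}\le\rho\,\E\norm{h_{t-1}}+C$. Unrolling from $h_0$ gives $\E\norm{h_t}\le\rho^t\E\norm{h_0}+C\sum_{k<t}\rho^k\le C/(1-\rho)$, taking $h_0=0$ or absorbing it. For general $f_\theta$ I would use $\norm{f_\theta(h,x)}\le\rho\norm{h}+\norm{f_\theta(0,x)}$ and define $C$ through $\E\norm{f_\theta(0,x_1)}$.

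\emph{Parts (iv) and (v).} These follow from the inequality recorded just before the theorem. Under Assumption~\ref{ass:markov_suff}, $\rho_t=\E[h_t\mid h_{t+1}]-h_{t+1}$, and the triangle inequality gives
\[
\norm{\rho_t}\le\norm{r_t^\phi}+\norm{g_\phi(h_{t+1})-\E[h_t\mid h_{t+1}]}.
\]
Taking expectations yields $\E\norm{\rho_t}\le\E\norm{r_t^\phi}+\varepsilon_t$. Summing to $T-1$ with $\varepsilon_t\le\varepsilon$ gives (v). Summing to infinity shows that $D_\infty+\sum_t\varepsilon_t<\infty$ implies $\sum_t\E\norm{\rho_t}<\infty$, so (iv) reduces to (ii) and (iii).

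\emph{Part (ii).} I would mimic the quasimartingale (Rao/Fisk) decomposition in reverse time. Set $A_t:=\sum_{s\ge t}\rho_s$. This series converges absolutely a.s., and in $L^1$, because $\sum_s\E\norm{\rho_s}<\infty$. Each $\rho_s$ is $\F_{s+1}^{\mathrm{bwd}}$-measurable, so $A_t$ is $\F_{t+1}^{\mathrm{bwd}}$-measurable for $s\ge t$. Define $M_t:=h_t-A_t$. Then
\[
\E[M_t\mid\F^{\mathrm{bwd}}_{t+1}]=\E[h_t\mid\F^{\mathrm{bwd}}_{t+1}]-A_t=h_{t+1}+\rho_t-A_t=h_{t+1}-A_{t+1}=M_{t+1},
\]
so $(M_t,\F_t^{\mathrm{bwd}})$ is a reverse martingale, coordinatewise. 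It is integrable by (i) together with $\E\norm{A_t}<\infty$. By Proposition~\ref{prop:canonical} and Theorem~\ref{thm:revmart}, $M_t\to M_\infty=\E[M_1\mid\F_\infty^{\mathrm{bwd}}]$ a.s.\ and in $L^1$. Since $A_t\to0$ a.s.\ as the tail of an absolutely convergent series, $h_t\to h_\infty:=M_\infty$ a.s.

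\emph{Part (iii).} $A_t\to0$ in $L^1$ because $\E\norm{A_t}\le\sum_{s\ge t}\E\norm{\rho_s}\to0$, and $M_t\to M_\infty$ in $L^1$ by Theorem~\ref{thm:revmart}, which already gives $L^1$ convergence of $h_t$. Uniform integrability then yields the stated conclusion redundantly via Vitali, and I would also note it for that reason.

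\emph{Main obstacle.} The hard step is the measurability and integrability bookkeeping in (ii). One must verify that the infinite-horizon filtration $\F_t^{\mathrm{bwd}}=\sigma(h_s:s\ge t)$ makes $A_t$ measurable with respect to $\F_{t+1}^{\mathrm{bwd}}$. One must also justify exchanging the infinite sum with the conditional expectation, using dominated convergence for conditional expectations with dominating function $\sum_s\norm{\rho_s}\in L^1$. Finally, the identity $\E[h_t\mid\F_{t+1}^{\mathrm{bwd}}]=h_{t+1}+\rho_t$ must be read from the definition of $\rho_t$ with the full backward filtration, and Assumption~\ref{ass:markov_suff} is what ties this to the observable drift.
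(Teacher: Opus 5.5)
Your proposal is correct. For parts (ii)--(iii) it follows a genuinely different and lighter route than the paper; parts (i), (iv) and (v) match the paper's Steps~1--2.

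The paper decomposes each coordinate as $h_t^{(j)}=M_t^{(j)}-N_t^{(j)}$, with $N_t^{(j)}=\sum_{s\ge t}\E[(\rho_s^{(j)})^-\given\F_t^{\mathrm{bwd}}]$ and $M_t^{(j)}=h_t^{(j)}+N_t^{(j)}$. It invokes Krickeberg--Neveu to assert that both pieces are non-negative reverse supermartingales, gets a.s.\ convergence from that, and then needs uniform integrability for $L^1$. You instead use the additive Doob/Rao-type split $h_t=M_t+A_t$. Here $A_t=\sum_{s\ge t}\rho_s$ is $\F_{t+1}^{\mathrm{bwd}}$-measurable, acting as a reverse-time compensator, so $M_t$ is an exact reverse martingale and Theorem~\ref{thm:revmart} alone finishes the job.

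This buys three things.
\begin{enumerate}
\item You need no sign condition. The paper's $M_t^{(j)}$ need not be non-negative: if every $\rho_s=0$, then $N\equiv0$ and $M=h$. So the paper's appeal to non-negative reverse supermartingales needs repair, which your argument sidesteps.
\item You get $L^1$ convergence without the hypothesis of (iii). $M_t=\E[M_1\given\F_t^{\mathrm{bwd}}]$ is automatically uniformly integrable, and $\E\norm{A_t}\le\sum_{s\ge t}\E\norm{\rho_s}\to0$. This is exactly where reverse time helps: in forward time the martingale part of such a split is merely $L^1$-bounded.
\item You get the explicit limit $h_\infty=\E[h_1-A_1\given\F_\infty^{\mathrm{bwd}}]$, hence $\E\norm{h_\infty-\E[h_1\given\F_\infty^{\mathrm{bwd}}]}\le\sum_{s}\E\norm{\rho_s}$. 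This compares $h_\infty$ directly with the canonical conditional expectation, rather than with the Krickeberg component used in Proposition~\ref{prop:perturbation}.
\end{enumerate}
What the paper's positive/negative split is meant to buy is the one-sided bookkeeping it reuses in Corollary~\ref{cor:rate} and Proposition~\ref{prop:perturbation}.

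Two minor points. First, the interchange of sum and conditional expectation that you flag as an obstacle is unnecessary: $A_t$ is integrable and $\F_{t+1}^{\mathrm{bwd}}$-measurable, so it simply pulls out. Second, your version of (i) is more careful than the paper's. You use $\sigma_h(0)=0$, identically distributed inputs and $h_0=0$, and take expectations only after the pathwise recursion. The paper's pathwise inequality $\norm{h_{t+1}}\le\rho\norm{h_t}+C$ instead mixes a realisation with an expectation.
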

\end{sloppypar}

\begin{proof}
See the Supplementary Material.  The five steps are: (1)~contraction gives
uniform $L^1$ control; (2)~Assumption~\ref{ass:markov_suff} and the triangle
inequality give $\E[\norm{\rho_t}] \le \E[\norm{r_t^\phi}]+\varepsilon_t$,
so $D_\infty+\sum_t\varepsilon_t<\infty$ implies
$\sum_t\E[\norm{\rho_t}]<\infty$; (3)~this summability of the
quasi-martingale drift $\rho_t = \E\!\left[h_t \given h_{t+1}\right] - h_{t+1}$
verifies the quasi-martingale criterion for the reverse filtration;
the reverse-filtration analogue of Rao's criterion is covered by
\citet{neveu1975} \S\,V-3 (reverse sub/supermartingales under a decreasing
filtration) combined with the directed-index-set framework of
\citet{krickeberg1956}, which encompasses decreasing filtrations as a special
case of his general directed-set theory (see Supplementary Material~\S\,A.3
for the explicit adaptation);
(4)~coordinatewise application of this reverse quasi-martingale convergence,
using the Krickeberg decomposition into non-negative reverse supermartingales
\citep{krickeberg1956,neveu1975}, establishes pathwise almost-sure convergence
of each coordinate $h_t^{(j)} \to h_\infty^{(j)}$, $j = 1,\ldots,p$,
without requiring a uniform integrability assumption on the raw innovations;
for fixed finite $p$, joint almost-sure convergence $\|h_t - h_\infty\| \to 0$
follows because the union of $p$ null sets (one per non-converging coordinate)
is itself a null set---equivalently, the intersection of $p$ probability-one
convergence sets has probability one;
this argument is specific to fixed $p$ and does not extend to architectures
with growing hidden dimension;
(5)~uniform integrability---from bounded activations or a
$\sup_t\E[\norm{h_t}^{1+\eta}]<\infty$ condition---then upgrades a.s.\
convergence to $L^1$ convergence.
\end{proof}

\begin{corollary}[Exact reverse martingale case]
\label{cor:exact_rm}
If $\E\!\left[h_t\given\F_{t+1}^{\mathrm{bwd}}\right]=h_{t+1}$ a.s.\ for every $t$,
then $(h_t,\F_t^{\mathrm{bwd}})$ is an exact reverse martingale and
$h_t\to h_\infty=\E\!\left[h_1\given\F_\infty^{\mathrm{bwd}}\right]$ a.s.\ and in $L^1$.
\end{corollary}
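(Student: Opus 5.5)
The plan is to check that $(h_t,\F_t^{\mathrm{bwd}})_{t\ge1}$ meets Definition~\ref{def:revmart}, applied coordinatewise, and then read off the conclusion from Proposition~\ref{prop:canonical} and Theorem~\ref{thm:revmart}. Everything is done one coordinate $h_t^{(j)}$, $j=1,\ldots,p$, at a time, and the results are assembled at the end, exactly as in step~(4) of the proof of Theorem~\ref{thm:quasi}.

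\textbf{Verifying the definition.} Adaptedness is immediate: $h_t$ is one of the generators of $\F_t^{\mathrm{bwd}}=\sigma(h_s:s\ge t)$, so it is $\F_t^{\mathrm{bwd}}$-measurable. The decreasing property $\F_{t+1}^{\mathrm{bwd}}\subseteq\F_t^{\mathrm{bwd}}$ was already established in Section~\ref{sec:rnn}. Integrability follows from Theorem~\ref{thm:quasi}(i), which gives $\sup_t\E[\norm{h_t}]\le C/(1-\rho)<\infty$ under Assumptions~\ref{ass:contraction}--\ref{ass:moment}; in particular $\E[\abs{h_1^{(j)}}]<\infty$. The hypothesis $\E[h_t\mid\F_{t+1}^{\mathrm{bwd}}]=h_{t+1}$ is precisely~\eqref{eq:revmart} for each coordinate. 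So each $(h_t^{(j)},\F_t^{\mathrm{bwd}})$ is a real-valued reverse martingale.

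\textbf{Applying the convergence results.} Proposition~\ref{prop:canonical} gives $h_t^{(j)}=\E[h_1^{(j)}\mid\F_t^{\mathrm{bwd}}]$. Theorem~\ref{thm:revmart}(ii), with $\F_\infty=\F_\infty^{\mathrm{bwd}}=\bigcap_t\F_t^{\mathrm{bwd}}$, then gives
\[
h_t^{(j)}\to\E\!\left[h_1^{(j)}\given\F_\infty^{\mathrm{bwd}}\right]
\]
almost surely and in $L^1$. To combine coordinates, take the intersection of the $p$ almost-sure convergence events, which still has probability one. For the $L^1$ statement, use $\E\norm{h_t-h_\infty}\le c_p\sum_j\E\abs{h_t^{(j)}-h_\infty^{(j)}}\to0$, where $c_p$ is the constant from norm equivalence on $\R^p$. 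Since conditional expectation acts coordinatewise, the limit is $h_\infty=\E[h_1\mid\F_\infty^{\mathrm{bwd}}]$. As a consistency check, in this case $\rho_t\equiv0$, so the result agrees with Theorem~\ref{thm:quasi}(ii), although that theorem is not needed here.

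\textbf{Main obstacle: the index set.} The construction needs the infinite-horizon filtration $\sigma(h_s:s\ge t)$, not the finite-$T$ one. With a finite horizon $T$, the filtration terminates at $\F_T^{\mathrm{bwd}}$ and convergence as $t\to\infty$ is meaningless. I will therefore state explicitly that the hypothesis holds for every $t\ge1$ on the infinite sequence. I will also note that Theorem~\ref{thm:revmart} needs only $h_1\in L^1$, not uniform integrability; the $L^1$ convergence comes for free from the reverse-martingale structure, which is why no condition like Theorem~\ref{thm:quasi}(iii) appears in the corollary.
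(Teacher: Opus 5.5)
Your proposal is correct and follows essentially the same route as the paper (Step~5 of the supplementary proof of Theorem~\ref{thm:quasi}). There, the hypothesis makes $(h_t,\F_t^{\mathrm{bwd}})$ an exact reverse martingale, the tower property gives $h_t=\E[h_1\given\F_t^{\mathrm{bwd}}]$, and L\'evy's backward theorem (equivalently Theorem~\ref{thm:revmart}(ii)) yields convergence to $\E[h_1\given\F_\infty^{\mathrm{bwd}}]$ almost surely and in $L^1$, with your coordinatewise bookkeeping, integrability check, and insistence on the infinite-horizon filtration $\sigma(h_s:s\ge t)$ simply making explicit details the paper leaves implicit.
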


\begin{corollary}[Geometric convergence rate]
\label{cor:rate}
Under Assumptions~\ref{ass:contraction}--\ref{ass:backward} and the uniform
integrability condition of Theorem~\ref{thm:quasi}(iii), suppose additionally
that the quasi-martingale defect satisfies
$\E[\norm{\rho_t}_1] \le D\,\alpha^t$
(using the coordinatewise $\ell^1$ norm $\norm{v}_1=\sum_j|v^{(j)}|$)
for some $D < \infty$ and $\alpha \in (0,1)$.  Then
\begin{equation}
  \E\bigl[\norm{h_t - h_\infty}\bigr] \;\le\; \frac{D\,\alpha^t}{1 - \alpha}.
  \label{eq:rate}
\end{equation}
A verifiable sufficient condition for the geometric defect bound is
\emph{geometric convergence of the observable drift and approximation error}:
suppose there exist $\alpha\in(0,1)$ and $\beta\in(0,1)$ such that
the observable drift satisfies $\E[\norm{r_t^\phi}] \le D_\infty\alpha^t$
and the projector approximation error satisfies $\varepsilon_t \le
C_\varepsilon\beta^t$.
Then by Assumption~\ref{ass:backward} and the triangle inequality,
$\E[\norm{\rho_t}] \le \E[\norm{r_t^\phi}] + \varepsilon_t
\le D_\infty\alpha^t + C_\varepsilon\beta^t
\le (D_\infty+C_\varepsilon)\max(\alpha,\beta)^t
=: \widetilde{D}\,\bar\alpha^t$,
where $\widetilde{D} = D_\infty + C_\varepsilon$ and
$\bar\alpha = \max(\alpha,\beta)$.
Since $\norm{\rho_t}_1\le\sqrt{p}\norm{\rho_t}$,
this gives $\E[\norm{\rho_t}_1]\le\sqrt{p}\,\widetilde{D}\bar\alpha^t$; the
bound~\eqref{eq:rate} holds with $D$ replaced by $\sqrt{p}\,\widetilde{D}$
and $\alpha$ by $\bar\alpha$.
The observable rate $\alpha$ can be estimated from the empirical sequence
$(\norm{r_t^\phi})$ by log-linear regression; all constants are explicit
in $\rho$ (spectral norm of $W_h$) and $C_\varepsilon$
(backward-projector training error schedule).

\emph{Remark on $\phi$-mixing.}
Proposition~\ref{prop:phi_defect} and Corollary~\ref{cor:exp_mixing} show
that exponential $\phi$-mixing, $\phi(k)\le C_\phi e^{-\gamma k}$,
implies $B_t(k)\to0$ geometrically in lag~$k$, \emph{uniformly in~$t$}.
This controls backward-prediction difficulty for stationary inputs and
supports $D_\infty<\infty$.  However, because $B_t(k)$ is $t$-uniform
for stationary processes, exponential $\phi$-mixing does \emph{not} by
itself imply the time-$t$ decay $\E[\norm{\rho_t}]\le D\alpha^t$; that
decay must be verified empirically from $(\norm{r_t^\phi})$ or assumed
directly.
\end{corollary}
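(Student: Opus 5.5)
The plan is a reverse-time Doob decomposition of $(h_t)$ along the backward filtration $\F_t^{\mathrm{bwd}}$, with all of the geometric decay placed in a summable predictable part. Define the drift tail
\begin{equation*}
A_t := \sum_{s\ge t}\rho_s,
\qquad
\rho_s=\E\!\left[h_s\given\F_{s+1}^{\mathrm{bwd}}\right]-h_{s+1}.
\end{equation*}
By hypothesis $\E[\norm{\rho_s}_1]\le D\alpha^s$, so the series converges absolutely in $L^1$ (and hence a.s.). Coordinatewise this gives
\begin{equation*}
\E[\norm{A_t}_1]\le \sum_{s\ge t}D\alpha^s = \frac{D\,\alpha^t}{1-\alpha}.
\end{equation*}
This is exactly the right-hand side of~\eqref{eq:rate}. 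Since $\norm{v}\le\norm{v}_1$, the same bound holds in the Euclidean norm used on the left.

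Next set $M_t:=h_t-A_t$. A direct computation using $\rho_s\in\F_{s+1}^{\mathrm{bwd}}$ for $s\ge t$ gives
\begin{equation*}
\E\!\left[M_t\given\F_{t+1}^{\mathrm{bwd}}\right]
=\E\!\left[h_t\given\F_{t+1}^{\mathrm{bwd}}\right]-\rho_t-A_{t+1}
=h_{t+1}-A_{t+1}=M_{t+1}.
\end{equation*}
Thus $(M_t,\F_t^{\mathrm{bwd}})$ is an exact reverse martingale, which is integrable by Theorem~\ref{thm:quasi}(i). By Theorem~\ref{thm:revmart} and Proposition~\ref{prop:canonical} (equivalently Corollary~\ref{cor:exact_rm}), $M_t\to M_\infty=\E[M_1\given\F_\infty^{\mathrm{bwd}}]$ a.s.\ and in $L^1$. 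Since $A_t\to0$ a.s.\ and in $L^1$, the limit $h_\infty$ from Theorem~\ref{thm:quasi}(ii)--(iii) equals $M_\infty$. This yields the identity
\begin{equation*}
h_t-h_\infty=(M_t-M_\infty)+A_t .
\end{equation*}

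The remaining and hardest step is to show that the reverse-martingale component contributes nothing to the bound, i.e.\ $\E[\norm{M_t-M_\infty}]=0$. Otherwise the triangle inequality only gives $\E[\norm{M_t-M_\infty}]+D\alpha^t/(1-\alpha)$. My first attempt will be to show that $M_t$ is already $\F_\infty^{\mathrm{bwd}}$-measurable, which forces $M_t=\E[M_t\given\F_\infty^{\mathrm{bwd}}]=M_\infty$. The idea is to use Assumption~\ref{ass:markov_suff}, so that every backward conditional mean is a function of the next state alone. Iterating the martingale identity then writes $M_t=\E[M_{t'}\given h_{t+1}]$-type compositions, and I would try to show these are measurable with respect to $\sigma(h_s:s\ge n)$ for every $n$.

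If that fails in general, the fallback is to use uniform integrability (Theorem~\ref{thm:quasi}(iii)) together with the contraction of Assumption~\ref{ass:contraction}. The aim would be to show that the martingale increments $M_s-M_{s+1}$ vanish in $L^1$. This relies on the fact that, under contraction and backward sufficiency, the backward innovation $h_s-\E[h_s\given h_{s+1}]$ is determined by $h_{s+1}$ up to the approximation error already charged to $\rho_s$. I expect this identification of the martingale part with its tail limit to be the genuine obstacle, and the place where the hypotheses must be used carefully. Once it is settled, the displayed bound follows immediately from the estimate on $A_t$.

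The sufficient condition stated in the corollary then only requires the triangle-inequality chain already written there. This gives $\E[\norm{\rho_t}]\le\widetilde D\bar\alpha^t$, and the norm comparison $\norm{\rho_t}_1\le\sqrt p\,\norm{\rho_t}$ converts it to $\E[\norm{\rho_t}_1]\le\sqrt p\,\widetilde D\bar\alpha^t$. Plugging this into the main bound with $D\mapsto\sqrt p\,\widetilde D$ and $\alpha\mapsto\bar\alpha$ completes the argument.
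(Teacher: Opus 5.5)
Your reduction is correct, and cleaner than the paper's route. $A_t=\sum_{s\ge t}\rho_s$ has $\E\norm{A_t}\le D\alpha^t/(1-\alpha)$, $M_t=h_t-A_t$ is an exact reverse martingale, and $h_t-h_\infty=(M_t-M_\infty)+A_t$. The gap is exactly the step you flagged, and it cannot be closed: $\E\norm{M_t-M_\infty}=0$ fails under the stated hypotheses, and so does the corollary itself.

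\emph{Counterexample.} Take $p=1$, $h_0=1$, $x_t$ i.i.d.\ uniform on $\{-1,1\}$, and $f_\theta(h,x)=\rho\,\mathrm{sgn}(x)\,h$ with $0<\rho<1/2$. Then $h_t=\rho^t\pi_t$ with $\pi_t=x_1\cdots x_t$, and the $\pi_t$ are themselves i.i.d.\ uniform signs. Hence $\F_t^{\mathrm{bwd}}=\sigma(\pi_s:s\ge t)$ and $\E[h_t\given\F_{t+1}^{\mathrm{bwd}}]=\E[h_t\given h_{t+1}]=0$. All hypotheses hold: contraction for every $x$, boundedness, backward Markov sufficiency, and $\varepsilon_t=0$ with $g_\phi\equiv0$. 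The drift is $\rho_t=-h_{t+1}$, so $\E\abs{\rho_t}=\rho\cdot\rho^t$, i.e.\ $D=\alpha=\rho$, and $h_\infty=0$. Yet $\E\abs{h_t-h_\infty}=\rho^t>\rho^{t+1}/(1-\rho)$. No other admissible pair helps: one needs $\alpha\ge\rho$ and $D\ge\rho^2/\alpha$, whence $D\alpha^t/(1-\alpha)\ge\rho^{t+1}/(1-\rho)$.

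In your notation, $M_t=\sum_{s\ge t}h_s$, $\F_\infty^{\mathrm{bwd}}$ is trivial by Kolmogorov's 0--1 law, and $\E\abs{M_t-M_\infty}\ge\rho^t(1-2\rho)/(1-\rho)>0$. So $M_t$ is not $\F_\infty^{\mathrm{bwd}}$-measurable, and your first attempt fails. The increments $M_s-M_{s+1}=h_s-\E[h_s\given\F_{s+1}^{\mathrm{bwd}}]=h_s$ do not vanish either, so your fallback fails. Forward contraction never makes $h_s$ recoverable from $h_{s+1}$; it destroys information.

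Even exact reverse martingales break the bound. Assumption~\ref{ass:moment} permits non-identically distributed inputs. Take the linear Elman recursion $h_t=\rho h_{t-1}+x_t$, $h_0=0$, with independent $x_1\sim N(0,1)$ and $x_{t+1}\sim N(0,(1-\rho)\rho^t)$. It satisfies $\E[h_t\given\F_{t+1}^{\mathrm{bwd}}]=h_{t+1}$, so $\rho_t\equiv0$ and every $D>0$ is admissible, while $\E\abs{h_t-h_\infty}=\E\abs{h_t}>0$.

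Following the paper will not rescue this, because its proof steps over the same point. After the coordinatewise split $h^{(j)}=U^{(j)}-V^{(j)}$, it asserts $\E\abs{h_t^{(j)}-h_\infty^{(j)}}\le\E[U_t^{(j)}-U_\infty^{(j)}]+\E[V_t^{(j)}-V_\infty^{(j)}]$. This silently replaces $\E\abs{U_t-U_\infty}$ by $\E[U_t-U_\infty]$, which would require $U_t\ge U_\infty$ almost surely. The tail-sum identities for expectations see only the drift, never the martingale fluctuation you isolated. In the first example above, the paper's canonical $U_t=h_t+V_t$ is even negative on $\{\pi_t=-1\}$.

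What your decomposition does prove is $\E\norm{h_t-h_\infty}\le\E\norm{M_t-M_\infty}+D\alpha^t/(1-\alpha)$. Set $\eta_s:=M_s-M_{s+1}=h_s-\E[h_s\given\F_{s+1}^{\mathrm{bwd}}]$. For bounded states, orthogonality of these increments together with Fatou's lemma gives $\E\norm{M_t-M_\infty}^2\le\sum_{s\ge t}\E\norm{\eta_s}^2$. A geometric rate therefore needs a separate geometric hypothesis on the backward innovations $\eta_s$, not only on the drift $\rho_s$. The sufficient-condition part of the statement (the triangle-inequality chain and the $\sqrt p$ conversion) is fine as you wrote it.
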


\begin{proof}
Apply the Krickeberg decomposition coordinatewise: write
$h_t^{(j)} = U_t^{(j)} - V_t^{(j)}$ where $U_t^{(j)}$ and $V_t^{(j)}$
are non-negative reverse supermartingales
\citep{krickeberg1956,neveu1975,rao1969}.
Since non-negative reverse supermartingales are decreasing in expectation,
the reverse-filtration analogue of Rao's characterisation
(see Supplementary Material~\S\,A.4 for the derivation) gives
$\E[U_t^{(j)}] - \E[U_\infty^{(j)}] = \sum_{s\ge t}\E[\rho_s^{(j),+}]$
and
$\E[V_t^{(j)}] - \E[V_\infty^{(j)}] = \sum_{s\ge t}\E[\rho_s^{(j),-}]$,
where $\rho_s^{(j),+}$ and $\rho_s^{(j),-}$ are the positive and negative
parts of the $j$-th coordinate drift.
Hence, for each coordinate,
\[
  \E\bigl[|h_t^{(j)} - h_\infty^{(j)}|\bigr]
  \;\le\; \E[U_t^{(j)}-U_\infty^{(j)}] + \E[V_t^{(j)}-V_\infty^{(j)}]
  \;=\; \sum_{s\ge t}\E\bigl[|\rho_s^{(j)}|\bigr].
\]
The uniform integrability condition (from Theorem~\ref{thm:quasi}(iii))
ensures $L^1$ convergence so that $h_\infty$ is well-defined.
Summing over $j=1,\ldots,p$ and using
$\norm{h_t-h_\infty}\le\norm{h_t-h_\infty}_1=\sum_j|h_t^{(j)}-h_\infty^{(j)}|$:
\[
  \E[\norm{h_t-h_\infty}]
  \;\le\;\sum_j\E\bigl[|h_t^{(j)}-h_\infty^{(j)}|\bigr]
  \;\le\;\sum_{s\ge t}\sum_j\E\bigl[|\rho_s^{(j)}|\bigr]
  \;=\;\sum_{s\ge t}\E[\norm{\rho_s}_1].
\]
Applying the $\ell^1$ geometric defect bound $\E[\norm{\rho_s}_1]\le D\alpha^s$
yields $\sum_{s=t}^{\infty}\E[\norm{\rho_s}_1]\le D\sum_{s=t}^{\infty}\alpha^s
= D\alpha^t/(1-\alpha)$.
\end{proof}

\begin{remark}
Corollary~\ref{cor:rate} requires a geometric decay assumption on the defect
that is stronger than the plain summability of Theorem~\ref{thm:quasi}.
The rate $D\alpha^t/(1-\alpha)$ decreases to zero as $\alpha\to 0$,
recovering the exact reverse-martingale case (instantaneous convergence).
For the three real-data studies, $\alpha$ is bounded above by the empirical
contraction factor $\hat\rho$; in the UCI HAR experiment, $\hat\rho = 0.998$
gives a rate envelope $D\cdot 0.998^t/0.002$, consistent with the
quantitative tracking-error profile in Section~\ref{sec:realdata}.
The corollary provides a finite-$t$ guarantee; it does not claim that the
bound is tight for all architectures or input processes outside the stated
sufficient conditions.
\end{remark}

\begin{remark}[Zero backward loss versus exact reverse martingales]
\label{rem:zero_loss}
Zero empirical backward-coherence loss implies $h_t=g_\phi(h_{t+1})$ and,
under a perfectly trained projector, $g_\phi(h_{t+1})=\E\!\left[h_t\given h_{t+1}\right]$.
This is a strong backward-coherence property but is not identical to the
reverse martingale condition of Corollary~\ref{cor:exact_rm} unless the
compatibility $\E\!\left[h_t\given\F_{t+1}^{\mathrm{bwd}}\right]=h_{t+1}$ holds.
The paper uses zero loss as an empirical route to small quasi-martingale
variation, not as a proof of exact reverse-martingale structure.
\end{remark}

\begin{remark}[Gated architectures: when contraction holds and when it does not]
\label{rem:gated}
Theorem~\ref{thm:quasi} extends to GRU and LSTM architectures provided
Assumption~\ref{ass:contraction} holds for the effective hidden-to-hidden
Jacobian.  For an LSTM, the cell-state Jacobian is
$\partial c_t / \partial c_{t-1} = \mathrm{diag}(f_t)$, so the effective
contraction factor is $\rho^* = \sup_t \|f_t\|_\infty$.  Since
$\|f_t\|_\infty \le 1$ always (sigmoid outputs), we need $\rho^* < 1$
strictly, which fails whenever any forget-gate coordinate saturates to
unity---a common occurrence in tasks that require very long-term memory
(e.g., language modelling with long-range dependencies, or time series with
slow trends).  In such settings the theorem does not apply and the framework
should not be used without modification.

There are, however, important and commonly encountered settings where
$\rho^* < 1$ is plausible, verifiable, and empirically beneficial:
\emph{(i)} short- to medium-horizon sequential prediction tasks---clinical
early warning, activity recognition, short-term macro forecasting---where
deliberate recency bias in forget-gate initialisation or regularisation
prevents saturation near~1 and is already standard practice;
\emph{(ii)} piecewise-stationary inputs where the forget gate resets at
detected regime boundaries, as in UCI HAR activity segments; and
\emph{(iii)} architectures trained with Zoneout \citep{krueger2017},
spectral normalisation on $W_f$, or $L^2$ weight decay on recurrent weights,
any of which caps $\rho^*$ below unity as a side effect of controlling
overfitting.  These three settings collectively cover a substantial share
of applied sequential modelling practice.  In each, $\rho^*$ is estimable
from the empirical distribution of $\|f_t\|_\infty$ over training
trajectories, and the Supplementary Material gives sufficient conditions and
proofs for GRU.  For architectures or tasks outside these settings,
Assumption~\ref{ass:contraction} should be verified empirically before the
theoretical guarantees are invoked.
\end{remark}

\begin{remark}[Interpretable limit]
\label{rem:limit}
In the exact reverse-martingale case, $h_\infty=\E\!\left[h_1\given\F_\infty^{\mathrm{bwd}}\right]$
is the conditional expectation of the initial representation given the
tail event of the hidden-state sequence.  In the general quasi-martingale
case, Theorem~\ref{thm:quasi} guarantees existence of a limit but does not
by itself yield this canonical Doob representation.
Proposition~\ref{prop:perturbation} below fills this gap: it shows that
$h_\infty$ lies within the observable budget $D_\infty+\sum_t\varepsilon_t$
of the Krickeberg supermartingale limit $M_\infty$ in $L^1$; since
$M_\infty = \E\!\left[h_1\given\F_\infty^{\mathrm{bwd}}\right]$ in the exact
case, the limit remains near-interpretable whenever $\Lrm$ is small.
\end{remark}

\begin{remark}[Rao--Krickeberg decomposition structure]
\label{rem:rkd}
Under the conditions of Theorem~\ref{thm:quasi}, the Krickeberg decomposition
\citep{krickeberg1956} guarantees that each coordinate
$h_t^{(j)}$ decomposes as $h_t^{(j)} = M_t^{(j)} - N_t^{(j)}$, where
$(M_t^{(j)},\F_t^{\mathrm{bwd}})$ and $(N_t^{(j)},\F_t^{\mathrm{bwd}})$
are non-negative reverse supermartingales with
$\sup_t \E[M_t^{(j)}]+\sup_t \E[N_t^{(j)}]<\infty$.
Krickeberg~(1956) works with martingales indexed by \emph{directed sets};
a decreasing filtration is a directed set under reverse order, so his
framework directly covers the present reverse-filtration setting.
The bounded-variation characterisation of the decomposition in the reverse
direction follows from \citet{neveu1975} \S\,V-3 together with
\citet{rao1969}; see the Supplementary Material~\S\,A.3 for the explicit
argument.
The bounded-variation component equals
$\sum_t \E[\abs{\rho_t^{(j)}}]\le D_\infty+\sum_t\varepsilon_t$, precisely
the summability condition driving convergence.
This decomposition explicitly separates the
\emph{pure reverse-martingale trend component} from the
\emph{backward-incoherence correction}: since each non-negative reverse
supermartingale converges almost surely without requiring a uniform
integrability condition \citep{neveu1975}, the sample paths of $h_t^{(j)}$
converge almost surely as a direct consequence of the bounded-variation
property.  This highlights a core advantage over forward quasi-martingales:
in the forward direction, almost-sure convergence requires stringent uniform
bounds on forward innovations, whereas in the reverse direction the
contraction of the filtration $(\F_t^{\mathrm{bwd}})$ already enforces the
$L^1$-boundedness needed for the decomposition to apply.
When $D_\infty=0$ the decomposition collapses to the pure reverse martingale
of Corollary~\ref{cor:exact_rm}.
\end{remark}

\begin{proposition}[Perturbation bound for the quasi-martingale limit]
\label{prop:perturbation}
Under the conditions of Theorem~\ref{thm:quasi} with
$D_\infty + \sum_{t=1}^\infty \varepsilon_t < \infty$,
let $M_\infty$ denote the coordinatewise almost-sure limit of the
non-negative Krickeberg component $(M_t^{(j)})_{t\ge1}$ from
Remark~\ref{rem:rkd}.  For the $\ell^1$ norm
$\norm{\cdot}_1 = \sum_{j=1}^p \abs{\,\cdot\,}^{(j)}$ over coordinates,
\begin{equation}
  \E\!\left[\norm{h_\infty - M_\infty}_1\right]
  \;\le\; D_\infty^{(1)} + \sum_{t=1}^{\infty}\varepsilon_t^{(1)},
  \label{eq:perturbation}
\end{equation}
where $D_\infty^{(1)} := \sum_t\E[\norm{r_t^\phi}_1]$ and
$\varepsilon_t^{(1)} := \E[\norm{g_\phi(h_{t+1})-\E[h_t\given h_{t+1}]}_1]$
are the $\ell^1$-norm versions of $D_\infty$ and $\varepsilon_t$.
(These satisfy $D_\infty^{(1)}\le\sqrt{p}\,D_\infty$ and
$\varepsilon_t^{(1)}\le\sqrt{p}\,\varepsilon_t$ by the standard inequality
$\norm{v}_1\le\sqrt{p}\norm{v}_2$; the bound in~\eqref{eq:perturbation} is
thus also bounded by $\sqrt{p}(D_\infty+\sum_t\varepsilon_t)$.)
In the exact reverse-martingale case ($D_\infty = 0$, $\sum_t\varepsilon_t = 0$),
the total quasi-martingale variation vanishes, so the bounded-variation
(negative) component satisfies $\sup_t\E[N_t^{(j)}]=0$ and hence
$N_\infty^{(j)}=0$ a.s., giving
$M_\infty = h_\infty = \E\!\left[h_1\given\F_\infty^{\mathrm{bwd}}\right]$
almost surely, recovering Corollary~\ref{cor:exact_rm}.
(Note: the Krickeberg decomposition is defined for general signed processes;
the vanishing of the quasi-martingale defect component, not non-negativity
of $h_t$, is the operative condition.)
\end{proposition}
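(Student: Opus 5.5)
The plan is to work coordinatewise with a Doob-type reverse decomposition rather than the abstract Krickeberg existence statement. Fix a coordinate $j$ and write $\rho_s^{(j)} = \E[h_s^{(j)}\given\F_{s+1}^{\mathrm{bwd}}] - h_{s+1}^{(j)}$. Under Assumption~\ref{ass:markov_suff}, this is the $j$-th coordinate of the ideal drift. Summability gives $\sum_s\E[\abs{\rho_s^{(j)}}]<\infty$, so we may define the tail-sum process
\[
  A_t^{(j)} := \E\Bigl[\sum_{s\ge t}\rho_s^{(j)}\Bigm|\F_t^{\mathrm{bwd}}\Bigr],
\]
which lies in $L^1$ with $\E[\abs{A_t^{(j)}}]\le\sum_{s\ge t}\E[\abs{\rho_s^{(j)}}]$. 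The tower property then shows that $h_t^{(j)} - A_t^{(j)}$ is an exact reverse martingale plus a correction whose $L^1$ norm is at most the tail sum. The cleanest formulation is as follows. The process $M_t^{(j)} := h_t^{(j)} + \sum_{s\ge t}\E[\rho_s^{(j)}\given\F_t^{\mathrm{bwd}}]$ satisfies $\E[M_t^{(j)}\given\F_{t+1}^{\mathrm{bwd}}] = M_{t+1}^{(j)}$, by a direct telescoping check using the definition of $\rho_t$. We identify this $M^{(j)}$, split into its Krickeberg positive and negative parts, with the component of Remark~\ref{rem:rkd}. Its limit $M_\infty^{(j)}$ exists a.s.\ and in $L^1$ by Theorem~\ref{thm:revmart} and equals $\E[M_1^{(j)}\given\F_\infty^{\mathrm{bwd}}]$.

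Next we compare limits. By construction,
\[
  \abs{h_t^{(j)}-M_t^{(j)}}\le\sum_{s\ge t}\E\bigl[\abs{\rho_s^{(j)}}\bigm|\F_t^{\mathrm{bwd}}\bigr],
\]
whose expectation is $\sum_{s\ge t}\E[\abs{\rho_s^{(j)}}]\le\sum_{s\ge1}\E[\abs{\rho_s^{(j)}}]$. Theorem~\ref{thm:quasi}(ii) gives $h_t\to h_\infty$ a.s., and Theorem~\ref{thm:revmart} gives $M_t\to M_\infty$ a.s. Fatou's lemma then yields
\[
  \E\abs{h_\infty^{(j)}-M_\infty^{(j)}}\le\liminf_t\E\abs{h_t^{(j)}-M_t^{(j)}}\le\sum_s\E\abs{\rho_s^{(j)}}.
\]
This liminf is in fact $0$, but the weaker uniform bound already suffices for the stated inequality. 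Summing over $j$ gives $\E\norm{h_\infty-M_\infty}_1\le\sum_s\E\norm{\rho_s}_1$.

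It remains to pass from the population drift to observables. The coordinatewise triangle inequality $\abs{\rho_s^{(j)}}\le\abs{(r_s^\phi)^{(j)}}+\abs{g_\phi(h_{s+1})^{(j)}-\E[h_s\given h_{s+1}]^{(j)}}$, summed over $j$, gives $\E\norm{\rho_s}_1\le\E\norm{r_s^\phi}_1+\varepsilon_s^{(1)}$. This yields \eqref{eq:perturbation}. The $\sqrt p$ comparisons follow from Cauchy--Schwarz. For the exact case, every $\rho_s=0$, so the correction term vanishes identically, $M_t=h_t$, and Corollary~\ref{cor:exact_rm} gives $M_\infty=h_\infty=\E[h_1\given\F_\infty^{\mathrm{bwd}}]$.

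The main obstacle is the first step. We must verify that $\sum_{s\ge t}\E[\rho_s\given\F_t^{\mathrm{bwd}}]$ is well defined, which needs $L^1$ summability and dominated convergence for conditional expectations. We must also check the telescoping reverse-martingale identity carefully, since $\rho_t$ is $\F_{t+1}^{\mathrm{bwd}}$-measurable whereas the conditioning runs backward. A further subtlety is matching this explicit decomposition with the ``Krickeberg component'' named in the statement. If that identification is delicate, we will simply define $M_\infty$ through this construction and note that it coincides with the Krickeberg martingale part, in line with Remark~\ref{rem:rkd}.
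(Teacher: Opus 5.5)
Your core mechanism matches the paper's. You bound $\E\abs{h_t^{(j)}-M_t^{(j)}}$ uniformly by the drift tail sum, pass to the limit with Fatou, and then split $\rho$ into observable drift plus projector error. The gap is that you apply this to a different process from the one named in the statement, and the step meant to connect the two is false.

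The $M_\infty$ in the statement is the limit of the paper's canonical component $M_t^{(j)} = h_t^{(j)} + N_t^{(j)}$, where $N_t^{(j)} = \sum_{s\ge t}\E[(\rho_s^{(j)})^-\mid\F_t^{\mathrm{bwd}}]$. This process is not a reverse martingale: its one-step defect $\E[M_t^{(j)}\mid\F_{t+1}^{\mathrm{bwd}}]-M_{t+1}^{(j)}$ equals $(\rho_t^{(j)})^+$. It is also not obtained by splitting your Doob-type martingale into positive and negative parts. So your identification sentence is incorrect, and the fallback of ``defining $M_\infty$ through this construction'' proves a different statement.

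The repair is one line, and it is exactly the paper's proof. Write $h_t^{(j)} - M_t^{(j)} = -N_t^{(j)}$. Here $N_t^{(j)}\ge 0$, which is the only sign information needed, and $\E[N_t^{(j)}] = \sum_{s\ge t}\E[(\rho_s^{(j)})^-]\le\sum_s\E\abs{\rho_s^{(j)}}$. Since $h_t^{(j)}\to h_\infty^{(j)}$ and $M_t^{(j)}\to M_\infty^{(j)}$ a.s., Fatou gives $\E\abs{h_\infty^{(j)}-M_\infty^{(j)}} = \E[N_\infty^{(j)}]\le\sum_s\E\abs{\rho_s^{(j)}}$. After that, your coordinatewise triangle inequality, the $\sqrt{p}$ comparison and the exact case go through as you wrote them. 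As with your own construction, the $\liminf$ here is actually $0$. Hence $N_\infty^{(j)}=0$ a.s., and the statement's $M_\infty$ coincides a.s.\ with $h_\infty$ and with your martingale limit. That observation is what your identification step was missing.

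Separately, your displayed ``cleanest formulation'' has a sign error. Take $M_t^{(j)} := h_t^{(j)} + \sum_{s\ge t}\E[\rho_s^{(j)}\mid\F_t^{\mathrm{bwd}}]$, and use $\E[h_t^{(j)}\mid\F_{t+1}^{\mathrm{bwd}}] = h_{t+1}^{(j)}+\rho_t^{(j)}$ together with the $\F_{t+1}^{\mathrm{bwd}}$-measurability of $\rho_t^{(j)}$. This gives $\E[M_t^{(j)}\mid\F_{t+1}^{\mathrm{bwd}}] = M_{t+1}^{(j)} + 2\rho_t^{(j)}$, so it is not a reverse martingale. The reverse martingale is $h_t^{(j)} - A_t^{(j)}$, as in your first formulation. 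Your later bound on $\abs{h_t^{(j)}-M_t^{(j)}}$ is sign-symmetric, so nothing downstream breaks.

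Once corrected, your route is worth keeping because it proves more. Since $h_\infty$ is a.s.\ the limit of the genuine reverse martingale $h_t - A_t$, Theorem~\ref{thm:revmart} applied coordinatewise gives $h_\infty = \E[h_1 - A_1\mid\F_\infty^{\mathrm{bwd}}]$. It follows that $\E\norm{h_\infty - \E[h_1\mid\F_\infty^{\mathrm{bwd}}]}_1 \le \E\norm{A_1}_1 \le D_\infty^{(1)} + \sum_t\varepsilon_t^{(1)}$. This bounds the distance of $h_\infty$ to the Doob representation itself, rather than to the auxiliary $M_\infty$.
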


\begin{proof}
By the Rao--Krickeberg decomposition (Remark~\ref{rem:rkd}),
$h_t^{(j)} = M_t^{(j)} - N_t^{(j)}$ coordinatewise, where both
$(M_t^{(j)})$ and $(N_t^{(j)})$ are non-negative reverse supermartingales
converging almost surely to $M_\infty^{(j)}$ and $N_\infty^{(j)}$
respectively.  In the exact reverse-martingale case ($D_\infty=0$,
$\sum_t\varepsilon_t=0$), the general bound below gives
$\E[\abs{h_\infty^{(j)}-M_\infty^{(j)}}]\le0$, hence
$h_\infty^{(j)}=M_\infty^{(j)}$ a.s.; combined with
Corollary~\ref{cor:exact_rm}, $M_\infty = \E\!\left[h_1\given\F_\infty^{\mathrm{bwd}}\right]$ a.s.
(Note: non-negativity of $M_t^{(j)}$ and $N_t^{(j)}$ is guaranteed by the
Krickeberg--Neveu decomposition theorem \citep{krickeberg1956,neveu1975}
as part of its conclusion; it follows from the directed-set decomposition
theory applied to the decreasing filtration, not merely from the specific
construction formula.  In the canonical construction
$N_t^{(j)}=\sum_{s\ge t}\E[(\rho_s^{(j)})^-\given\F_t^{\mathrm{bwd}}]$
and $M_t^{(j)}=h_t^{(j)}+N_t^{(j)}$, the signed nature of $h_t^{(j)}$
is absorbed into $M_t^{(j)}$; the theorem guarantees $M_t^{(j)}\ge0$ a.s.)
In the general case, the Krickeberg--Neveu characterisation
\citep{krickeberg1956,neveu1975,rao1969} gives
$\sup_t \E[N_t^{(j)}] \le \sum_t \E[\abs{\rho_t^{(j)}}]$
(the reverse-filtration analogue of Rao's Theorem~3; see Supplementary
Material \S\,A.4), and since $N_t^{(j)}\ge0$ and $N_t^{(j)}\to N_\infty^{(j)}$
a.s., Fatou's lemma gives $\E[N_\infty^{(j)}]\le\liminf_t\E[N_t^{(j)}]\le
\sup_t\E[N_t^{(j)}]$, hence
\begin{equation}
  \E\!\left[\abs{h_\infty^{(j)} - M_\infty^{(j)}}\right]
  = \E[N_\infty^{(j)}]
  \;\le\; \sup_t\E[N_t^{(j)}]
  = \sum_t\E\!\left[\abs{\rho_t^{(j)}}\right]
  \;\le\; D_\infty^{(j)} + \sum_t\varepsilon_t^{(j)}.
  \label{eq:coord_bound}
\end{equation}
Summing~\eqref{eq:coord_bound} over $j=1,\ldots,p$ yields~\eqref{eq:perturbation}.
\end{proof}

Proposition~\ref{prop:perturbation} closes the interpretability gap identified
in Remark~\ref{rem:limit}: the quasi-martingale limit $h_\infty$ deviates from
its Krickeberg upper component $M_\infty$ by at most the observable
backward-incoherence budget $D_\infty + \sum_t\varepsilon_t$.  Only in the
exact case does $M_\infty$ coincide with the conditional expectation
$\E[h_1\mid\F_\infty^{\mathrm{bwd}}]$; in the general quasi case $M_\infty$
is the a.s.\ limit of a reverse supermartingale majorant, which remains a
meaningful and well-defined benchmark.  When the budget is
small---as enforced by minimising $\Lrm$---the hidden state converges to a
near-conditional-expectation with quantifiable error.
The bound is tight: it equals zero if and only if the hidden-state process is
an exact reverse martingale (Corollary~\ref{cor:exact_rm}).

\begin{remark}[Finite-horizon role of $Q_T$]
The empirical total $Q_T$ remains central even when the infinite-horizon
condition is not asserted: it measures accumulated backward incoherence over
the observed trajectory and supports finite-horizon comparisons of
architectures, regularisation strengths, and stopping rules.
\end{remark}

\subsection{Extensions: dependent and non-stationary inputs}
\label{sec:extensions}

\subsubsection{\texorpdfstring{$\phi$-Mixing}{phi-Mixing} input processes}
\label{sec:phi}

A stationary sequence $(x_t)_{t\ge1}$ is \emph{$\phi$-mixing} with
coefficients $\phi(k)\downarrow0$ if
\[
  \phi(k):=\sup_{\substack{A\in\sigma(x_s,s\le t)\\
                           B\in\sigma(x_s,s\ge t+k)}}
  \abs{\P(B\given A)-\P(B)}\to0\quad\text{as }k\to\infty .
\]
Exponential $\phi$-mixing, $\phi(k)\le C e^{-ak}$, is satisfied by many
stable autoregressive and geometrically ergodic Markov processes
\citep{bradley2007,meyn2009,rio2017}.

The following result controls the conditional-mean dependence component.
Because $h_{t+k}=f_\theta^{(k)}(h_t,x_{t+1:t+k})$ depends on $h_t$
through the recurrent dynamics, conditioning on $h_{t+k}$ provides
information about $h_t$ even when the inputs are i.i.d.; the bound
therefore has two terms: a recurrent memory term (controlled by $\rho^k$)
and an input-mixing term (controlled by $\phi(k)$).

\begin{proposition}[$\phi$-mixing bound for conditional-mean dependence]
\label{prop:phi_defect}
Under Assumption~\ref{ass:contraction}, suppose $\norm{h_t}_\infty\le H_h$
a.s.\ For any lag $k\ge1$, let
$B_t(k):=\E\!\left[\norm{\E\!\left[h_t\given h_{t+k}\right]-\E[h_t]}\right]$.
Then
\begin{equation}
  B_t(k) \;\le\; C_1\,\rho^k \;+\; C_2\,\phi(k),
  \qquad C_1=C_2=4\sqrt{p}\,H_h.
  \label{eq:phi_bound}
\end{equation}
For $k=1$: $B_t(1)\le C_1\rho+C_2\phi(1)$.  Under i.i.d.\ inputs
$(\phi(k)=0)$, $B_t(k)\le C_1\rho^k\to0$ geometrically, reflecting
pure recurrent memory decay under the contraction.
Both constants equal $4\sqrt{p}\,H_h$, arising from the $\phi$-mixing
covariance inequality \citep[Vol.~1, Thm.~3.11]{bradley2007} applied to the
geometrically ergodic chain $(h_t)$: the inequality gives
$|\mathrm{Cov}(X,Y)|\le 4\|X\|_\infty\|Y\|_\infty\phi(k)$ at lag~$k$;
applying this coordinatewise with $\|X\|_\infty\le 2H_h$ and $\|Y\|_\infty\le1$
gives the per-coordinate bound, and the $\sqrt{p}$ factor converts to
the Euclidean norm.
\end{proposition}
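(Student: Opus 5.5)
The plan is to split $h_t$ into a part that depends on the ``old'' input past and a part driven only by recent inputs, and to bound the two resulting error terms separately. Write $h_{t+k}=f_\theta^{(k)}(h_t,x_{t+1:t+k})$. Let $\tilde h_{t+k}:=f_\theta^{(k)}(h^\circ,x_{t+1:t+k})$ be the state obtained by restarting the recursion at time $t$ from a fixed reference point $h^\circ$ (say $h^\circ=0$). Assumption~\ref{ass:contraction}, iterated $k$ times, gives
\[
\norm{h_{t+k}-\tilde h_{t+k}}\le\rho^k\norm{h_t-h^\circ}\le \rho^k\sqrt{p}\,H_h .
\]
The random variable $\tilde h_{t+k}$ is measurable with respect to $\sigma(x_s:s\ge t+1)$ only. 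The first term in \eqref{eq:phi_bound} will come from replacing $h_{t+k}$ by $\tilde h_{t+k}$. The second will come from the mixing between $h_t\in\sigma(x_s:s\le t)$ and the future inputs.

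The mixing step goes first, working coordinatewise. Fix $j$ and set $X:=h_t^{(j)}-\E[h_t^{(j)}]$, so $\abs{X}\le 2H_h$. For a measurable $\psi$ with $\abs{\psi}\le1$, the $\phi$-mixing covariance inequality of \citet[Vol.~1, Thm.~3.11]{bradley2007} applies to $X$ and $Y=\psi(\tilde h_{t+k})$. Since $Y$ is measurable with respect to the future $\sigma$-field at the relevant lag (after shifting the lag by one if necessary), it gives
\[
\abs{\E[X\,\psi(\tilde h_{t+k})]}\le 4\cdot 2H_h\,\phi(k).
\]
Taking $\psi=\mathrm{sign}\bigl(\E[X\mid\tilde h_{t+k}]\bigr)$ turns this into $\E\abs{\E[X\mid\tilde h_{t+k}]}\lesssim H_h\phi(k)$.

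Next comes the comparison step, which passes from conditioning on $\tilde h_{t+k}$ to conditioning on $h_{t+k}$. I would use the variational form
\[
\E\abs{\E[X\mid h_{t+k}]}=\sup_{\abs{\psi}\le1}\E[X\psi(h_{t+k})].
\]
I would then write $\psi(h_{t+k})=\psi(\tilde h_{t+k})+\bigl(\psi(h_{t+k})-\psi(\tilde h_{t+k})\bigr)$. The first piece is handled by the mixing step. For the second piece, I would restrict to Lipschitz test functions, approximating the sign function, and bound it by $\E\bigl[\abs{X}\,\norm{h_{t+k}-\tilde h_{t+k}}\bigr]\lesssim H_h\rho^k$. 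Summing the $p$ coordinates, and converting the $\ell^1$ norm to the Euclidean one with the factor $\sqrt{p}$, then yields $B_t(k)\le C_1\rho^k+C_2\phi(k)$, with the bookkeeping arranged to give $C_1=C_2=4\sqrt{p}H_h$.

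The main obstacle is this comparison step. Conditional expectation is not continuous in the conditioning variable. A small perturbation $h_{t+k}$ of $\tilde h_{t+k}$ can reveal much more about $h_t$, for instance when $f_\theta$ is injective in $h$, in which case $h_{t+k}$ determines $h_t$ outright. So the reduction to Lipschitz $\psi$ is not free. I would first try to justify it using the contraction together with boundedness. If that fails, I would instead interpret the memory term through the population decomposition $\E[h_t\mid h_{t+k}]-\E[h_t]$ into a part explained by $\tilde h_{t+k}$ and a residual. The residual would then be controlled in $L^1$ by a coupling argument: replace $h_t$ with an independent copy $h_t'$ having the same law and the same future inputs, and use $\norm{h_{t+k}-h'_{t+k}}\le\rho^k\cdot 2\sqrt{p}H_h$. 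That route should deliver the $\rho^k$ term with the stated constant, accepting that the bound is on the dependence component rather than a sharp identity.
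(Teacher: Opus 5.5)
\textbf{The gap is the comparison step you flagged yourself, and it cannot be repaired: your injectivity remark gives a counterexample to the proposition.}

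Take $p=1$, $x_t$ i.i.d.\ uniform on $\{-1,1\}$ (so $\phi(k)=0$ for all $k\ge1$), $h_0=0$, and $h_t=h_{t-1}/3+2x_t/3$. This is an Elman map with identity activation. It satisfies the contraction assumption with $\rho=1/3$, and $\abs{h_t}\le1=H_h$. Since $\abs{h_{t-1}/3}\le1/3$, we get $x_t=\mathrm{sign}(h_t)$ and $h_{t-1}=3h_t-2\,\mathrm{sign}(h_t)$. So $h_t$ is a Borel function of $h_{t+k}$ for every $k$, and $\E[h_t\given h_{t+k}]=h_t$. By symmetry $\E[h_t]=0$, hence $B_t(k)=\E\abs{h_t}\ge1/3$ for all $k$. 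But $C_1\rho^k+C_2\phi(k)=4\cdot3^{-k}<1/3$ once $k\ge3$. The map $h\mapsto\tanh(h/2+3x)$ behaves the same way, with $B_t(k)>0.98$ against $4\cdot2^{-k}$.

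Both of your fallbacks break for exactly this reason. Restricting to Lipschitz $\psi$ only bounds $\E[X\psi(h_{t+k})]$ by something proportional to $\mathrm{Lip}(\psi)$. The same holds for coupling $h_t$ with an independent copy so that $\norm{h_{t+k}-h'_{t+k}}\le2\sqrt{p}H_h\rho^k$. Either way you control a Wasserstein-type dependence coefficient, not $B_t(k)$. $B_t(k)$ is the supremum over all measurable $\abs{\psi}\le1$. That $h_{t+k}$ is close in norm to a variable independent of $h_t$ says nothing about how much $\sigma(h_{t+k})$ reveals about $h_t$.

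\textbf{The paper's proof does not get around this either.} Its memory term rests on the claim that $W_1$-contraction of the $k$-step kernel makes $(h_t)$ $\phi$-mixing with $\phi_h(k)\le\rho^k$. In the example, $\{h_t>0\}$ lies in both $\sigma(h_t)$ and $\sigma(h_{t+k})$, so $\phi_h(k)\ge1/2$ for every $k$. This is the same phenomenon that makes the classical AR(1) with Bernoulli innovations non-strong-mixing. Its input-mixing term passes $a(h^{(M)}_{t+k})\to a(h_{t+k})$ by dominated convergence for a merely bounded Borel $a$, which is your comparison problem in another guise. So your coupling route would reproduce the paper's unjustified step rather than repair it. A true statement needs either a weaker dependence measure (Lipschitz test functions) or extra hypotheses, such as input densities with a nondegeneracy condition that permits a total-variation coupling.

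\textbf{Two smaller points in your mixing step.}
\begin{enumerate}
\item The lag is wrong. Restarted at time $t$, $\tilde h_{t+k}$ is $\sigma(x_{t+1},\dots,x_{t+k})$-measurable, which is at gap $1$ from $\sigma(x_s:s\le t)$. Bradley's inequality therefore gives $\phi(1)$, not $\phi(k)$. Reaching gap $m+1$ means restarting at $t+m$, which costs $\rho^{k-m}$ in place of $\rho^k$.
\item The dimension factor is wrong. Per-coordinate $L^1$ bounds only give $\E\norm{w}\le\sum_j\E\abs{w^{(j)}}\le p\max_j\E\abs{w^{(j)}}$, a factor $p$ rather than $\sqrt{p}$. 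This cannot be improved in general: take $w=p\,e_J$ with $J$ uniform.
\end{enumerate}
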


\begin{proof}
See the Supplementary Material.
\end{proof}

Although $B_t(k)$ does not directly equal the full backward prediction
innovation $\E\|h_t-\E\!\left[h_t\given h_{t+1}\right]\|$, it enters through the
following explicit telescoping bound.  Let
$\eta_t := h_t-\E\!\left[h_t\given h_{t+1}\right]$ denote the
ideal backward prediction innovation (this uses the exact conditional mean
and differs from the quasi-martingale drift
$\rho_t = \E\!\left[h_t \given h_{t+1}\right] - h_{t+1}$).
By the triangle inequality:
\begin{equation}
  \E\|\eta_t\|
  \;\le\;
  \underbrace{\E\|h_t-\E[h_t]\|}_{\text{(A) marginal spread}}
  \;+\;
  \underbrace{B_t(1)}_{\text{(B) mixing component}}.
  \label{eq:tele}
\end{equation}
Term~(A) is the marginal spread of $h_t$ around its mean.  For the Elman
architecture~\eqref{eq:elman} with a $1$-Lipschitz activation, $f_\theta$
is $\norm{W_x}$-Lipschitz in its $x$-argument, so iterating the contraction
gives $\norm{h_t - \E[h_t]} \le \rho\norm{h_{t-1}-\E[h_{t-1}]} +
\norm{W_x}\,\norm{x_t - \E[x_t]}$, and by a geometric-series argument,
\[
  \E\norm{h_t-\E[h_t]}\;\le\;\frac{\norm{W_x}\,\sigma_x}{1-\rho}
  \;=:\; C_{\mathrm{mix}}'\,(1-\rho)^{-1},
\]
where $\sigma_x := \E\norm{x_1 - \E[x_1]} < \infty$ is the mean absolute
deviation of the input and $C_{\mathrm{mix}}' = \norm{W_x}\,\sigma_x$.
For general $f_\theta$ where $x_t$ enters nonlinearly, replace $\norm{W_x}$
by the Lipschitz constant of $f_\theta$ in its $x$-argument.
Term~(B) is controlled by Proposition~\ref{prop:phi_defect} at lag $k=1$:
$B_t(1)\le C_1\rho+C_2\phi(1)$.
The recurrence floor $C_1\rho$ is always present; the mixing term
$C_2\phi(1)$ is small for rapidly mixing inputs.
(Note: no approximation-error term $\varepsilon_t$ appears in~\eqref{eq:tele}
because $\eta_t$ involves the exact conditional mean $\E[h_t\mid h_{t+1}]$,
not the learned projector $g_\phi$.  The observable residual $\delta_t =
h_t - g_\phi(h_{t+1})$ satisfies $\E\|\delta_t\| \le \E\|\eta_t\| +
\varepsilon_t$ by Assumption~\ref{ass:backward} and the triangle inequality.)
Equation~\eqref{eq:tele} therefore quantifies how rapidly mixing inputs
reduce backward-prediction difficulty: more rapidly mixing inputs
(smaller $\phi(1)$) produce smaller $B_t(1)$ and hence smaller
$\E\|\eta_t\|$, improving backward approximability and providing
conditions under which $D_\infty$ is more readily controlled.
Under the two-term bound~\eqref{eq:phi_bound}, $B_t(k)\le C_1\rho^k+C_2\phi(k)$
decays to zero whenever $\rho<1$ and $\phi(k)\to0$; summability of
$\{B_t(k)\}_k$ controls the multi-step contribution to $D_\infty$,
as made precise in the following corollary.

\begin{corollary}[Exponential mixing]
\label{cor:exp_mixing}
If $\phi(k)\le C_0 e^{-ak}$, then~\eqref{eq:phi_bound} gives
$B_t(k)\le C_1\rho^k+C_2C_0 e^{-ak}$ and
\[
  \sum_{k=1}^\infty B_t(k)
  \;\le\; \frac{C_1\rho}{1-\rho}
        + \frac{C_2C_0 e^{-a}}{1-e^{-a}}
  < \infty.
\]
The bound is \emph{uniform in $t$} because $C_1$ and $C_2$
in~\eqref{eq:phi_bound} depend only on $\rho$, $p$, and $H_h$, not on~$t$;
this uniformity allows the $k$-summation to be exchanged with any subsequent
$t$-summation when bounding $D_\infty$.
Quasi-martingale convergence additionally requires
$D_\infty+\sum_t\varepsilon_t<\infty$.
\end{corollary}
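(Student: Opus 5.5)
The bound follows by substituting the exponential mixing hypothesis into Proposition~\ref{prop:phi_defect} term by term and summing two geometric series. Fix $t$ and $k\ge1$. Under Assumption~\ref{ass:contraction} and the almost-sure bound $\norm{h_t}_\infty\le H_h$, Proposition~\ref{prop:phi_defect} gives $B_t(k)\le C_1\rho^k+C_2\phi(k)$ with $C_1=C_2=4\sqrt{p}\,H_h$. Inserting $\phi(k)\le C_0e^{-ak}$ yields the pointwise bound
\[
B_t(k)\le C_1\rho^k + C_2C_0e^{-ak}.
\]

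Since every term is non-negative, summation over $k\ge1$ is legitimate; at worst the sum is infinite, and it will turn out not to be. Summing the right-hand side gives two geometric series. Because $0\le\rho<1$, the first is $\sum_{k\ge1}\rho^k=\rho/(1-\rho)$. Because $a>0$, so that $e^{-a}<1$, the second is $\sum_{k\ge1}e^{-ak}=e^{-a}/(1-e^{-a})$. Multiplying by the constants produces exactly the displayed bound, and it is finite.

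For uniformity in $t$, the point to check is that none of $C_1$, $C_2$, $C_0$, $\rho$, $a$ depends on $t$. The constants $C_1,C_2$ depend only on $p$ and $H_h$, which are global almost-sure bounds, and $\rho$ is the global contraction factor. The quantities $C_0,a$ come from the mixing coefficients of the stationary input, which are defined as a supremum over all $t$ in the definition of $\phi(k)$. Hence the bound is a single constant $K$ with $\sup_t\sum_k B_t(k)\le K$.

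The remark about exchanging summations then follows by Tonelli's theorem for non-negative double series. Any double sum $\sum_t\sum_k w_t B_t(k)$ with non-negative weights can be reordered freely, and it is bounded by $K\sum_t w_t$. The final sentence of the corollary is not a claim needing proof: it restates the hypothesis of Theorem~\ref{thm:quasi}(iv). I would note explicitly that $\sum_k B_t(k)<\infty$ does not imply $D_\infty<\infty$, consistent with the remark after Corollary~\ref{cor:rate}.

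The only real obstacle is confirming that Proposition~\ref{prop:phi_defect} is itself stated with $t$-free constants and under stationarity, so that $\phi(k)$ does not vary with $t$. Once that is granted, the remainder is routine.
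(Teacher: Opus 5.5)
Your proposal is correct and follows the paper's proof, which simply substitutes $\phi(k)\le C_0e^{-ak}$ into the two-term bound of Proposition~\ref{prop:phi_defect} and sums the two geometric series over $k\ge1$. Your extra remarks make explicit what the paper asserts in the statement without proof: the $t$-independence of the constants, and Tonelli's theorem for exchanging the $k$- and $t$-sums. One small correction: for $\phi(k)$, $t$-independence comes from the stationarity of the input, not from a supremum over $t$ in its definition.
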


\begin{remark}
Autocorrelated inputs may empirically yield smaller backward-coherence
losses than i.i.d.\ inputs at the same marginal variance, because smoother
input trajectories can make $\E\!\left[h_t\given h_{t+1}\right]$ easier to approximate.
This is evaluated numerically in Section~\ref{sec:numerical} but is not
asserted as a consequence of Proposition~\ref{prop:phi_defect} alone.
\end{remark}

\subsubsection{Piecewise-stationary inputs and concept drift}
\label{sec:drift}

Suppose the input distribution is piecewise stationary with $K$ segments:
$x_t \sim F_k$ for $t \in (T_{k-1}, T_k]$, with change points at
$T_1, \ldots, T_{K-1}$.  Let $\Delta_k = \norm{\mu_{k+1} - \mu_k}$ denote
the magnitude of the $k$-th mean shift.

\begin{assumption}[Lipschitz stationarity map]
\label{ass:lipschitz}
The stationary hidden-state mean $m(F):=\E_{F}[h_\infty]$ is Lipschitz in
the segment distribution with respect to mean-shift distance:
$\norm{m(F_{k+1})-m(F_k)}\le L_m\,\Delta_k$ for a constant $L_m>0$.
The backward projector approximation error is uniformly bounded:
$\sup_t\varepsilon_t\le\varepsilon<\infty$.
\end{assumption}

\begin{proposition}[Decomposition at change points]
\label{prop:rolling}
Under Assumptions~\ref{ass:contraction}--\ref{ass:backward}
and Assumption~\ref{ass:lipschitz},
\begin{equation}
  Q_T \;\le\; \sum_{k=0}^{K-1} Q_{T,k}
    \;+\; \frac{L_m}{1 - \rho} \sum_{k=1}^{K-1} \Delta_k,
  \label{eq:Q_drift}
\end{equation}
where $Q_{T,k}$ is the within-segment empirical quasi-martingale total under
$F_k$.
\end{proposition}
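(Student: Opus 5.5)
The plan is to split the sum defining $Q_T=\sum_{t=1}^{T-1}\E[\norm{\delta_t}]$ according to the segment structure, and to bound each piece either by the within-segment totals or by a geometric transient charged to a change point.

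\emph{Step 1: partition the time indices.} I index the segments $(T_{k-1},T_k]$ by $k=0,\ldots,K-1$, with $T_{-1}:=0$ and $T_{K-1}:=T$. Every $t$ with $t+1\le T$ falls into one of two classes: either $t$ and $t+1$ lie in the same segment, or $t=T_k$ is the last time before a change point. I then interpret $Q_{T,k}$ as the sum of $\E\norm{\delta_t}$ over pairs lying inside segment $k$. The claim is to be read with this interpretation: the within-segment totals absorb the stationary backward incoherence, including transient contributions under the learned projector $g_\phi$.

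\emph{Step 2: the excess across a change point.} Fix the boundary pair $(T_k,T_k+1)$. I compare the actual trajectory with a coupled trajectory $h_t'$, driven by the same inputs but whose state is already at the stationary regime of $F_{k+1}$. By Assumption~\ref{ass:contraction}, iterating the $\rho$-contraction gives
\[
  \norm{h_{T_k+s}-h'_{T_k+s}}\le\rho^{s}\,\norm{h_{T_k}-h'_{T_k}},\qquad s\ge 0.
\]
Taking expectations and using Assumption~\ref{ass:lipschitz}, the initial mismatch in mean is at most $\norm{m(F_{k+1})-m(F_k)}\le L_m\Delta_k$. Summing the geometric series over $s\ge 0$ then yields an extra contribution of at most $L_m\Delta_k/(1-\rho)$.

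\emph{Step 3: bound the residual by the coupled residual plus the mismatch.} Each residual $\delta_t$ with $t>T_k$ in segment $k+1$ is bounded by the residual of the coupled stationary path plus the mismatch. The coupled residuals are counted inside $Q_{T,k+1}$, and the mismatch sums to the geometric term from Step~2. Summing over the $K-1$ change points and adding the within-segment totals gives~\eqref{eq:Q_drift}.

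\emph{Main obstacle.} The hard step is Step~2. Assumption~\ref{ass:lipschitz} controls only the difference of \emph{means} of stationary hidden states, not the pathwise initial mismatch, and $\delta_t$ involves $g_\phi$, whose Lipschitz constant is uncontrolled. I would first try defining $Q_{T,k}$ so that it includes the projector's own transient error, which is bounded by $\varepsilon$ through Assumption~\ref{ass:backward}. The cross-change cost is then only the mean shift of the state, propagated through the contraction. If this does not go through, I would treat the inequality at the level of conditional means, or absorb the fluctuation part into $Q_{T,k+1}$.
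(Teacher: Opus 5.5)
Your route is the paper's route: split $Q_T$ by segments, call the within-segment parts $Q_{T,k}$, and charge each change point a geometric transient $L_m\Delta_k\rho^{s}$ that sums to $L_m\Delta_k/(1-\rho)$. But the step you flag is a genuine gap, and matching the paper does not close it. The paper's proof simply \emph{asserts} that the extra defect at $t>T_k$ is at most $L_m\Delta_k\rho^{t-T_k}$, on the grounds that $g_\phi$ ``was optimised under $F_k$'' and the state re-adapts at rate $\rho$. It also calls the within-segment terms ``approximately stationary''.

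Your coupling, which tries to justify that assertion, breaks in two places.
\begin{itemize}
\item \textbf{Mean versus pathwise mismatch.} The contraction propagates the \emph{pathwise} discrepancy, so you need $\E\norm{h_{T_k}-h'_{T_k}}\le L_m\Delta_k$. Assumption~\ref{ass:lipschitz} only bounds $\norm{m(F_{k+1})-m(F_k)}$, i.e.\ at best $\norm{\E h_{T_k}-\E h'_{T_k}}$. Jensen goes the wrong way, and a mean difference cannot be pushed through the nonlinear $f_\theta$.
\item \textbf{Passing from states to residuals.} Even granting an $L^1$ bound on the initial discrepancy, with $\delta'_t:=h'_t-g_\phi(h'_{t+1})$ you only get
\[
  \norm{\delta_t-\delta'_t}\le\norm{h_t-h'_t}+L_\phi\norm{h_{t+1}-h'_{t+1}}
  \le(1+\rho L_\phi)\,\rho^{\,t-T_k}\,\norm{h_{T_k}-h'_{T_k}},
\]
where $L_\phi=1+\norm{W_2}\norm{W_1}\ge1$. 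So each change point costs $(1+\rho L_\phi)L_m\Delta_k/(1-\rho)$, not $L_m\Delta_k/(1-\rho)$.
\end{itemize}
Folding projector error into $Q_{T,k}$ via Assumption~\ref{ass:backward} does not help with either problem. The obstruction is the state mismatch, not the accuracy of $g_\phi$ against $\E[h_t\mid h_{t+1}]$.

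Your bookkeeping also has to be fixed, because it decides whether the claim has any content. In Step~1, $Q_{T,k}$ is the actual-path sum over pairs inside segment $k$; in Step~3 it silently contains coupled-path residuals.
\begin{itemize}
\item Under the Step~1 reading, the straddling terms $\E\norm{\delta_{T_k}}$ must be paid for entirely by $\frac{L_m}{1-\rho}\sum_k\Delta_k$. This fails when every $\Delta_k=0$ but some $\E\norm{\delta_{T_k}}>0$, e.g.\ a variance-only change, or a fictitious change point with $F_{k+1}=F_k$.
\item If instead each pair $(t,t+1)$ is assigned to the segment containing $t+1$, then $Q_T=\sum_kQ_{T,k}$ exactly and the bound is trivial.
\end{itemize}

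A nontrivial version needs three changes:
\begin{itemize}
\item define $Q_{T,k}$ as the residual total along a path started at $T_{k-1}$ in the $F_k$-stationary hidden-state law and driven by the same inputs;
\item replace Assumption~\ref{ass:lipschitz} by a strictly stronger coupling form: the stationary laws under $F_k$ and $F_{k+1}$ admit a coupling with expected distance at most $L_m\Delta_k$;
\item accept the factor $1+\rho L_\phi$.
\end{itemize}
With independent inputs, your geometric tails then telescope across segments and give
\[
Q_T\le\sum_kQ_{T,k}+\frac{1+\rho L_\phi}{1-\rho}\bigl(L_m\textstyle\sum_k\Delta_k+e_0\bigr).
\]
Here $e_0$ is the expected initial discrepancy of the first segment, which is zero if $h$ starts in its stationary law.
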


\begin{proof}
See the Supplementary Material.
\end{proof}

The decomposition~\eqref{eq:Q_drift} connects total backward incoherence to
two sources: within-segment serial dependence and abrupt distributional
shifts.  Monitoring these separately enables practitioner-level diagnostics:
a sudden spike in $\norm{\delta_t}$ identifies a change point, while a
gradual linear increase indicates model drift.

\begin{proposition}[Tracking-error bound]
\label{prop:tracking}
Under Assumption~\ref{ass:contraction} and the finite-horizon approximation
of Assumption~\ref{ass:backward}, suppose further that the hidden state has
converged to the $k$-th segment mean before the change point, i.e.\
$\E[\norm{h_{T_k}-m_k}]\le\delta_k$ for some $\delta_k\ge0$.
Then after the $k$-th change point at $T_k$,
\begin{equation}
  \E[\norm{h_t - m_{k+1}}]
    \;\le\; \rho^{t - T_k}\,(\Delta_k+\delta_k) \;+\; \sigma_{k+1},
    \qquad t > T_k,
  \label{eq:tracking}
\end{equation}
where $m_{k+1} = \E_{F_{k+1}}[h_\infty]$,
$\Delta_k = \norm{m_{k+1}-m_k}$ is the hidden-state regime-shift magnitude
(which satisfies $\Delta_k \le L_m\norm{\mu_{k+1}-\mu_k}$ by
Assumption~\ref{ass:lipschitz}, so this $\Delta_k$ is the output-space
analogue of the input shift used in Proposition~\ref{prop:rolling}),
and $\sigma_{k+1} := \E_{F_{k+1}}[\norm{h_\infty - m_{k+1}}]$ is the
equilibrium stationary spread of $(h_t)$ under $F_{k+1}$
(bounded by $2H_h\sqrt{p}$ for bounded activations).
In the limiting case of long segments ($\delta_k\to0$), the leading coefficient
reduces to $\Delta_k$.  The floor $\sigma_{k+1}$ is the irreducible
equilibrium tracking error under the new regime; it vanishes only if the
stationary distribution under $F_{k+1}$ is concentrated at $m_{k+1}$.
\end{proposition}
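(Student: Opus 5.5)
The plan is a synchronous coupling argument. We compare the actual trajectory $(h_t)_{t>T_k}$ with a stationary copy of the post-change dynamics, then use the triangle inequality.

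Construct an auxiliary process $(\tilde h_t)_{t\ge T_k}$ driven by the \emph{same} inputs $x_{T_k+1},x_{T_k+2},\ldots$ (all drawn from $F_{k+1}$ after the change point) through the same map $f_\theta$, but started from a random initial state $\tilde h_{T_k}$ with the stationary law of the hidden state under $F_{k+1}$. The contraction in Assumption~\ref{ass:contraction} guarantees this stationary law exists, is unique, and has finite first moment, by the same geometric-series bound as Theorem~\ref{thm:quasi}(i). By stationarity, every $\tilde h_t$ with $t\ge T_k$ has the law of $h_\infty$ under $F_{k+1}$, so $\E[\norm{\tilde h_t-m_{k+1}}]=\sigma_{k+1}$ for all such $t$.

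Next, apply Assumption~\ref{ass:contraction} pathwise with a common input $x_s$ at each step, and iterate:
\[
\norm{h_t-\tilde h_t}\le\rho\,\norm{h_{t-1}-\tilde h_{t-1}}\le\cdots\le\rho^{t-T_k}\norm{h_{T_k}-\tilde h_{T_k}} .
\]
Taking expectations and using the triangle inequality gives
\[
\E\norm{h_t-m_{k+1}}\le\rho^{t-T_k}\,\E\norm{h_{T_k}-\tilde h_{T_k}}+\sigma_{k+1}.
\]
It remains to bound the initial discrepancy by $\Delta_k+\delta_k$.

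The main obstacle is this initial discrepancy. Split it as
\[
\norm{h_{T_k}-\tilde h_{T_k}}\le\norm{h_{T_k}-m_k}+\norm{m_k-m_{k+1}}+\norm{m_{k+1}-\tilde h_{T_k}} .
\]
The first two terms have expectations $\le\delta_k$ and $=\Delta_k$ respectively. The third contributes an extra $\rho^{t-T_k}\sigma_{k+1}$, which the stated bound does not show.

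I would try to remove it by choosing a better initial state for the coupling. The simplest choice is $\tilde h_{T_k}:=h_{T_k}+(m_{k+1}-m_k)$, which has initial discrepancy exactly $\norm{m_{k+1}-m_k}$ plus nothing random. However, $\tilde h$ is then no longer stationary, so the $\sigma_{k+1}$ step has to be redone. For that step I would compare $\tilde h_t$ with a stationary copy launched from the infinite past, so that its deviation decays as $\rho^{t-T_k}$ times a first-moment quantity. The cross term can then be absorbed by interpreting $\sigma_{k+1}$ as an upper bound on the equilibrium spread, as the statement's remark on the $2H_h\sqrt{p}$ bound suggests.

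If this absorption fails, I would settle for the clean inequality
\[
\E\norm{h_t-m_{k+1}}\le\rho^{t-T_k}(\Delta_k+\delta_k)+(1+\rho^{t-T_k})\,\sigma_{k+1}
\]
and note that the leading constant in front of $\sigma_{k+1}$ tends to $1$ geometrically.

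Assumption~\ref{ass:backward} is not used except to ensure the finite-horizon setting is well defined. The limit case $\delta_k\to0$ follows immediately from the bound.
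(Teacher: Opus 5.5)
Your stationary coupling is sound, but it proves only $\E\norm{h_t-m_{k+1}}\le\rho^{t-T_k}(\Delta_k+\delta_k)+(1+\rho^{t-T_k})\sigma_{k+1}$, and the proposed repair does not remove the extra term. With $\tilde h_{T_k}=h_{T_k}+(m_{k+1}-m_k)$ the first comparison costs $\rho^{t-T_k}\Delta_k$. Comparing $\tilde h$ with a stationary copy $\hat h$ then costs $\rho^{t-T_k}\E\norm{\tilde h_{T_k}-\hat h_{T_k}}\le\rho^{t-T_k}(\delta_k+\sigma_{k+1})$, which returns exactly the same bound. The ``absorption'' step therefore does no work.

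The paper uses a different coupling. It runs the comparison chain $h^{\mathrm{new}}$ from the \emph{deterministic} point $m_{k+1}$ at time $T_k$, with the same inputs. The initial discrepancy is then $\norm{h_{T_k}-m_{k+1}}$, whose expectation is at most $\delta_k+\Delta_k$, and no random third term appears. The whole difficulty moves into the claim $\E\norm{h^{\mathrm{new}}_t-m_{k+1}}\le\sigma_{k+1}$ for all $t>T_k$: a chain launched from its stationary mean never spreads more than it does at stationarity. The paper handles this by \emph{defining} $\sigma_{k+1}:=\sup_{t\ge T_k}\E\norm{h^{\mathrm{new}}_t-m_{k+1}}$ and asserting that, for geometrically ergodic chains, the supremum is attained at stationarity.

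That assertion does not follow from Assumption~\ref{ass:contraction}. With $\sigma_{k+1}$ equal to the stationary spread, the statement is false in general. Take $p=d=1$ and $f(h,x)=x-\rho\max(0,1-\abs{h})$, which is a $\rho$-contraction in $h$. Let the post-change inputs be $2$ or $-1$ with probabilities $1/3$ and $2/3$, and let the pre-change inputs be $x\equiv\rho$.
\begin{itemize}
\item The tent vanishes at $2$ and $-1$, so the post-change stationary law is the law of $x$ itself. Hence $m_{k+1}=0$ and $\sigma_{k+1}=4/3$.
\item The pre-change fixed point is $0$, so $m_k=0$ and $\Delta_k=0$. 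Also $\delta_k$ can be taken to be $0$, or arbitrarily small after a long pre-change segment.
\item Yet $h_{T_k+1}=x_{T_k+1}-\rho$, so $\E\abs{h_{T_k+1}-m_{k+1}}=(4+\rho)/3$. This exceeds $\rho\,\delta_k+\sigma_{k+1}$ whenever $\delta_k<1/3$, contradicting the claimed bound at $t=T_k+1$.
\end{itemize}

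So your fallback inequality is the correct conclusion under the stated hypotheses. The sharper form holds in two situations. First, if $\sigma_{k+1}$ is taken to be the paper's supremum, the coupling from $m_{k+1}$ is a complete two-line proof. Second, it holds under extra structure such as affine recursions with independent inputs. There $h^{\mathrm{new}}_t-m_{k+1}$ is a finite sum of independent mean-zero terms, and conditional Jensen shows that appending the older, independent terms of the stationary series can only increase the expected norm.
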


\begin{proof}
See the Supplementary Material.
\end{proof}

Proposition~\ref{prop:tracking} shows geometric re-adaptation at rate $\rho$
after each change point, with recovery time
$\tau_{\mathrm{rec}} \approx \log(\Delta_k/\sigma_{k+1})/(1-\rho)$.
This recovery window acts as a \emph{statistical horizon}: observations
more than $\tau_{\mathrm{rec}}$ steps before the current time carry
negligible information about the post-drift regime, providing a
theoretically grounded analogue to rolling-window selection in
non-stationary time series analysis \citep{gama2014}.  The parameters
$\rho$ and $\Delta_k$ are directly estimable from the observed defect
sequence $\|\delta_t\|$, making $\tau_{\mathrm{rec}}$ a data-driven
criterion for window length without requiring knowledge of the change
points themselves.
Three regimes arise in practice: \emph{abrupt large shift}
(typhoon onset, medical dosing event); \emph{gradual drift} (seasonal
environmental change, slow physiological decline); and
\emph{intra-day market regime changes}.  In all three, $\rho$ and $\Delta_k$
are clinically or operationally meaningful parameters.

\subsection{Statistical inference and calibrated confidence sequences}
\label{sec:inference}


Let $p_\phi(h_t \mid h_{t+1}) = \mathcal{N}(g_\phi(h_{t+1}),\, \sigma^2 I_p)$
be the learned Gaussian backward model.  As a \emph{working model}, define
$p^*(h_t \mid h_{t+1}) = \mathcal{N}(\E\!\left[h_t \given h_{t+1}\right],\, \Sigma_t)$:
this Gaussian working model has the correct conditional mean but makes no
claim about the true distribution of $h_t \mid h_{t+1}$, which may be
non-Gaussian.  The following proposition shows that minimising $\Lrm$ over
$\phi$ is equivalent to minimising the KL risk under this working model,
irrespective of the true conditional distribution.

\begin{proposition}[Backward-coherence loss as KL minimisation]
\label{prop:vi}
For fixed $\sigma > 0$, minimising $\Lrm(\theta, \phi)$ over $\phi$ is
equivalent to minimising the expected KL divergence from the true
conditional backward distribution to the Gaussian backward model:
\begin{equation}
  \argmin_\phi\, \Lrm(\theta, \phi)
  \;=\;
  \argmin_\phi\, \frac{1}{T-1}\sum_{t=1}^{T-1}
    \E\!\left[\KL\!\left(p^*(\,\cdot\, \given h_{t+1})\,\Big\|\,
    p_\phi(\,\cdot\, \given h_{t+1})\right)\right].
  \label{eq:vi}
\end{equation}
Consequently,
\begin{equation}
  Q_T \;\le\; \sqrt{T-1} \cdot
  \sqrt{\sum_{t=1}^{T-1} \E[\norm{\delta_t}^2]}
  \;=\; \sqrt{(T-1)^2\,\E[\Lrm]},
  \label{eq:Q_ELBO}
\end{equation}
where $\E[\Lrm]=\frac{1}{T-1}\sum_{t=1}^{T-1}\E[\norm{\delta_t}^2]$.
\end{proposition}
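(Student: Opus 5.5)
The equivalence in \eqref{eq:vi} comes from computing the KL divergence between two Gaussians in closed form. I will then split the squared loss into a bias part and a variance part. The inequality \eqref{eq:Q_ELBO} is Cauchy--Schwarz followed by Jensen. Throughout, $\theta$ is fixed, so the joint law of $(h_t,h_{t+1})$ does not depend on $\phi$. I interpret $\argmin_\phi \Lrm$ as minimisation of the population risk $\E[\Lrm]$; with finite-sample $\Lrm$ the equality only holds in expectation, and I will say so explicitly.

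\emph{Step 1 (Gaussian KL).} Write $\mu_t^*:=\E[h_t\mid h_{t+1}]$. For $p^*=\mathcal N(\mu_t^*,\Sigma_t)$ and $p_\phi=\mathcal N(g_\phi(h_{t+1}),\sigma^2 I_p)$ the standard formula gives
\[
\KL(p^*\,\|\,p_\phi)=\tfrac12\Bigl[\sigma^{-2}\mathrm{tr}\,\Sigma_t+\sigma^{-2}\norm{\mu_t^*-g_\phi(h_{t+1})}^2-p+p\log\sigma^2-\log\det\Sigma_t\Bigr].
\]
For fixed $\sigma$, the only $\phi$-dependent term is $\tfrac{1}{2\sigma^2}\norm{\mu_t^*-g_\phi(h_{t+1})}^2$. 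Averaging over $t$ and taking expectations, the right side of \eqref{eq:vi} therefore equals $\tfrac{1}{2\sigma^2}\cdot\frac{1}{T-1}\sum_t\E\norm{\mu_t^*-g_\phi(h_{t+1})}^2$ plus a constant independent of $\phi$.

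\emph{Step 2 (bias--variance split).} Expanding $h_t-g_\phi(h_{t+1})=(h_t-\mu_t^*)+(\mu_t^*-g_\phi(h_{t+1}))$ gives
\[
\E\norm{h_t-g_\phi(h_{t+1})}^2=\E\norm{h_t-\mu_t^*}^2+\E\norm{\mu_t^*-g_\phi(h_{t+1})}^2 .
\]
The cross term vanishes: condition on $h_{t+1}$, note that $\mu_t^*-g_\phi(h_{t+1})$ is $\sigma(h_{t+1})$-measurable, and use $\E[h_t-\mu_t^*\mid h_{t+1}]=0$. The first term does not depend on $\phi$. So $\E[\Lrm]$ and the averaged KL are both increasing affine functions, with positive slope, of the same $\phi$-dependent quantity, and their argmins coincide.

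This step needs $\E\norm{h_t}^2<\infty$ so that the conditional-mean identity and the expansion are legitimate. I would get it from bounded activations, or assume second moments directly; Assumption~\ref{ass:moment} alone only gives first moments, and this is the one place where a moment hypothesis has to be added.

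\emph{Step 3 (the bound on $Q_T$).} Apply Jensen in the form $\E\norm{\delta_t}\le(\E\norm{\delta_t}^2)^{1/2}$, then Cauchy--Schwarz on the sum of $T-1$ terms:
\[
Q_T\le\sum_{t=1}^{T-1}(\E\norm{\delta_t}^2)^{1/2}\le\sqrt{T-1}\Bigl(\sum_{t=1}^{T-1}\E\norm{\delta_t}^2\Bigr)^{1/2}.
\]
Substituting $\sum_t\E\norm{\delta_t}^2=(T-1)\E[\Lrm]$ gives $\sqrt{(T-1)^2\E[\Lrm]}$.

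The main obstacle is interpretive, not computational. Step 1 and the equivalence must be stated for the expected (population) loss, and the square-integrability must be made explicit. Once that is settled, the rest is routine.
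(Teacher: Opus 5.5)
Your proposal is correct and follows the paper's proof essentially step for step. Both use the closed-form Gaussian KL to isolate $\tfrac{1}{2\sigma^2}\norm{\E[h_t\mid h_{t+1}]-g_\phi(h_{t+1})}^2$ as the only $\phi$-dependent term, then a bias--variance split with vanishing cross term, then Cauchy--Schwarz for \eqref{eq:Q_ELBO}; your caveats (reading $\argmin_\phi\Lrm$ as minimisation of the population risk $\E[\Lrm]$, and requiring $\E\norm{h_t}^2<\infty$) are exactly what the paper's proof uses tacitly, so stating them is a gain in precision rather than a deviation.
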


\begin{proof}
See the Supplementary Material.
\end{proof}

\begin{remark}[Connection to sequential VAEs and Empirical Bayes]
\label{rem:vae}
Proposition~\ref{prop:vi} establishes that the RMRNN is a
\emph{discriminatively trained variational backward smoother}.  In the
language of variational inference \citep{blei2017} and sequential latent
variable models \citep{chung2015,fraccaro2016}, the forward RNN $f_\theta$
acts as the \emph{generative backbone}: it maps the input sequence
$x_{1:T}$ to hidden states $h_1,\ldots,h_T$ via the filtering recursion.
The backward projector $g_\phi$ provides an \emph{amortised variational
smoothing distribution}
$q_\phi(h_t\mid h_{t+1}):=\mathcal{N}(g_\phi(h_{t+1}),\sigma^2 I_p)$,
acting as a recognition network that approximates the true backward
conditional $p^*(h_t\mid h_{t+1})$ without requiring a parametric prior.
The joint objective $\mathcal{L}_{\mathrm{task}}+\lambda\Lrm$ therefore
takes the form of a structured ELBO, with $\Lrm$ enforcing backward-model
consistency:
\begin{equation}
  \argmin_{\theta,\phi}\,\Lrm(\theta,\phi)
  \;=\;
  \argmin_{\theta,\phi}\,
  \frac{1}{T-1}\sum_{t=1}^{T-1}
  \E\!\left[\KL\!\left(
    p^*(h_t\given h_{t+1})
    \,\Big\|\,
    q_\phi(h_t\given h_{t+1})
  \right)\right].
  \label{eq:elbo}
\end{equation}
Equation~\eqref{eq:elbo} reframes $\Lrm$ from an ad-hoc regulariser to a
\emph{principled quasi-likelihood M-estimator} for the backward conditional
model: it minimises the distributional discrepancy between the true backward
dynamics and the Gaussian recognition network in the KL sense.  From an
\emph{Empirical Bayes} perspective, the distribution-free conditional mean
$\E\!\left[h_t\given h_{t+1}\right]$ acts as the implicit prior predictive, and $\Lrm$
is the corresponding empirical risk for prior-predictive matching.
The finite-horizon total $Q_T$ is analogous to an ELBO gap: a smaller $Q_T$
indicates a better empirical match between the learned and true backward
conditionals and hence stronger finite-horizon backward coherence.  Claims
about almost-sure convergence still depend on the contraction and summability
conditions of Theorem~\ref{thm:quasi}.
\end{remark}

\paragraph{Finite-sample concentration inequality.}

\begin{proposition}[Concentration of hidden state \citep{mcdiarmid1989}]
\label{prop:concentration}
Suppose $x_1, \ldots, x_T$ are independent with $\norm{x_t - \mu} \le B$
a.s.\ for all $t$, and Assumption~\ref{ass:contraction} holds.
For the Elman RNN~\eqref{eq:elman} with $1$-Lipschitz activation,
changing $x_s$ to $x_s'$ changes $h_T$ by at most
$c_s = 2B\norm{W_x}\rho^{T-s}$, giving bounded differences
$\sum_{s=1}^T c_s^2 = 4B^2\norm{W_x}^2(1-\rho^{2T})/(1-\rho^2)$.
Applying the one-sided McDiarmid inequality to the function
$f(x_1,\ldots,x_T)=\norm{h_T-\E[h_T]}$---which satisfies the same
bounded-difference constants $c_s$ by the reverse triangle
inequality---yields: for any $u > 0$,
\begin{equation}
  \P\!\left(\norm{h_T - \E[h_T]} \ge \E\!\left[\norm{h_T - \E[h_T]}\right] + u\right)
  \;\le\; \exp\!\left(
    -\frac{u^2\,(1 - \rho^2)}{2\,\norm{W_x}^2\,B^2\,(1 - \rho^{2T})}
  \right).
  \label{eq:concentration}
\end{equation}
If additionally the backward projector $g_\phi$ is applied (Lipschitz
constant $L_\phi = 1 + \norm{W_2}\norm{W_1}$), the same argument with
$\norm{W_x}$ replaced by $L_\phi\norm{W_x}$ gives concentration of
$g_\phi(h_T)$.
\end{proposition}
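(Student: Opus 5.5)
We will prove Proposition~\ref{prop:concentration} in three steps: a bounded-differences estimate for the map $(x_1,\ldots,x_T)\mapsto h_T$, its transfer to $f=\norm{h_T-\E[h_T]}$, and an application of McDiarmid's inequality.

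\textbf{Step 1 (bounded differences for $h_T$).} Fix $s$ and let $x_{1:T}$ and $x'_{1:T}$ differ only in coordinate $s$, with hidden trajectories $h_t$ and $h'_t$. For $t<s$ the two trajectories coincide. At $t=s$, the $1$-Lipschitz activation and the Elman form~\eqref{eq:elman} give
\[
\norm{h_s-h'_s}\le\norm{W_x(x_s-x'_s)}\le\norm{W_x}\bigl(\norm{x_s-\mu}+\norm{x'_s-\mu}\bigr)\le 2B\norm{W_x}.
\]
For $t>s$ the inputs agree, so Assumption~\ref{ass:contraction} gives $\norm{h_t-h'_t}\le\rho\norm{h_{t-1}-h'_{t-1}}$. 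Inducting up to $T$ yields $\norm{h_T-h'_T}\le 2B\norm{W_x}\rho^{T-s}=c_s$. Summing the geometric series,
\[
\sum_{s=1}^T c_s^2=4B^2\norm{W_x}^2\sum_{j=0}^{T-1}\rho^{2j}=4B^2\norm{W_x}^2\,\frac{1-\rho^{2T}}{1-\rho^2}.
\]

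\textbf{Step 2 (transfer to the scalar $f$).} The centring $\E[h_T]$ is a deterministic vector and does not depend on the realised inputs. The reverse triangle inequality therefore gives
\[
|f(x)-f(x')|\le\norm{h_T-h'_T}\le c_s,
\]
so $f$ has the same bounded-difference constants. Measurability and integrability of $f$ are immediate, since $\norm{h_T}$ is bounded in terms of $\norm{h_0}$, $B$, $\norm{\mu}$ and $\norm{b}$ through the contraction.

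\textbf{Step 3 (McDiarmid).} The one-sided McDiarmid inequality for independent inputs states that
\[
\P\bigl(f-\E f\ge u\bigr)\le\exp\Bigl(-2u^2\Big/\textstyle\sum_s c_s^2\Bigr).
\]
Substituting the sum from Step 1,
\[
\frac{2u^2}{\sum_s c_s^2}=\frac{2u^2(1-\rho^2)}{4B^2\norm{W_x}^2(1-\rho^{2T})}=\frac{u^2(1-\rho^2)}{2\norm{W_x}^2B^2(1-\rho^{2T})},
\]
which is exactly the exponent in~\eqref{eq:concentration}.

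\textbf{Projector extension.} The map $g_\phi$ in~\eqref{eq:g_phi} is a residual map. Because $\mathrm{ReLU}$ is $1$-Lipschitz, $\mathrm{Lip}(g_\phi)\le 1+\norm{W_2}\norm{W_1}=L_\phi$. Composing gives
\[
\norm{g_\phi(h_T)-g_\phi(h'_T)}\le L_\phi c_s.
\]
Repeating Steps 2 and 3 with $c_s$ replaced by $L_\phi c_s$ produces the stated bound with $\norm{W_x}$ replaced by $L_\phi\norm{W_x}$.

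\textbf{Main obstacle.} The only delicate point is Step 1. The bound $\norm{x_s-x'_s}\le 2B$ must come from centring both inputs at $\mu$, and the contraction has to be applied at the correct index, using the input-Lipschitz property of $f_\theta$ at time $s$ and the hidden-state contraction at all later times. Getting this right is what makes the exponent $T-s$, rather than $T-s+1$, come out exactly.
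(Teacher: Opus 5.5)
Your proof is correct and follows the same route as the paper's: you obtain the bounded difference $c_s=2B\norm{W_x}\rho^{T-s}$ from the input-Lipschitz step at time $s$ followed by $\rho$-contraction, transfer it to $f$ via the reverse triangle inequality with the deterministic centring $\E[h_T]$, and apply the one-sided McDiarmid bound $\exp(-2u^2/\sum_s c_s^2)$. You also spell out the induction over $t>s$ and the Lipschitz bound $L_\phi\le 1+\norm{W_2}\norm{W_1}$ for the projector extension, both of which the paper only asserts.
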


\begin{proof}
See the Supplementary Material.
\end{proof}

\paragraph{Pathwise stability tubes and calibrated confidence sequences.}

\begin{proposition}[Pathwise stability tubes and calibrated confidence sequences]
\label{prop:anytime}
Assume the conditions of Theorem~\ref{thm:quasi} and
$\sum_{s=1}^{\infty}\norm{h_{s+1}-h_s}<\infty$ a.s.  Let
$R_t := \sum_{s\ge t}\norm{h_{s+1}-h_s}$.  Then:
\begin{enumerate}[label=(\roman*)]
  \item For any $\delta>0$, the stopping time
    $\tau_\delta:=\inf\{t\ge1:\norm{h_{t+1}-h_t}\le\delta\}$
    is finite a.s.
  \item The deterministic tubes $\mathcal{T}_t:=\{h:\norm{h-h_t}\le R_t\}$
    contain $h_\infty$ simultaneously for all $t$.
  \item If an observable sequence $\widehat{R}_t(\alpha)$ satisfies
    $\P\{R_t\le \widehat{R}_t(\alpha)\text{ for all }t\ge1\}\ge1-\alpha$,
    then $\mathcal{C}_t(\alpha):=\{h:\norm{h-h_t}\le \widehat{R}_t(\alpha)\}$
    is a time-uniform confidence sequence for $h_\infty$ with coverage
    $\geq 1-\alpha$.
\end{enumerate}
\end{proposition}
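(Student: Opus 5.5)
The whole argument is pathwise and rests only on the hypothesis $\sum_{s\ge1}\norm{h_{s+1}-h_s}<\infty$ almost surely. I will fix an outcome $\omega$ in the probability-one event $\Omega_0$ on which this series converges and on which $h_t\to h_\infty$ (Theorem~\ref{thm:quasi}(ii)). Then $\Omega_0$ still has probability one, as the intersection of two such events. On $\Omega_0$, the tail $R_t=\sum_{s\ge t}\norm{h_{s+1}-h_s}$ is finite and nonincreasing in $t$, and $R_t\downarrow0$ because it is the tail of a convergent series.

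For part~(i), the summands of a convergent series tend to zero, so $\norm{h_{t+1}-h_t}\to0$ on $\Omega_0$. Hence for any $\delta>0$ there is some finite $t$ with $\norm{h_{t+1}-h_t}\le\delta$, and $\tau_\delta(\omega)<\infty$. To confirm that $\tau_\delta$ is a genuine stopping time, I will note that $\{\tau_\delta\le t\}$ depends only on $h_1,\ldots,h_{t+1}$, so it is a stopping time for the forward filtration generated by $(h_s)$. The statement only asks for finiteness, so this remark is secondary.

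For part~(ii), fix $\omega\in\Omega_0$ and $t$. For every $n>t$, telescoping and the triangle inequality give
\[
\norm{h_n-h_t}\le\sum_{s=t}^{n-1}\norm{h_{s+1}-h_s}\le R_t .
\]
The norm is continuous and $h_n\to h_\infty$, so letting $n\to\infty$ gives $\norm{h_\infty-h_t}\le R_t$. Because this holds for every $t$ on the same full-measure event $\Omega_0$, the containment $h_\infty\in\mathcal{T}_t$ holds simultaneously for all $t$ almost surely; no union bound over $t$ is needed.

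One caveat needs care here. The telescoping bound shows that $(h_t)$ is Cauchy, hence convergent, and its limit must agree with the limit $h_\infty$ from Theorem~\ref{thm:quasi}(ii) on $\Omega_0$. So the finite-variation hypothesis alone already produces the limit, and the quasi-martingale conditions only serve to identify it. I should also say that the tubes are "deterministic" only in the sense that they involve no probabilistic slack; their radius $R_t$ is random and not observable at time $t$.

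For part~(iii), let $G=\{R_t\le\widehat R_t(\alpha)\ \forall t\}$, which by hypothesis has $\P(G)\ge1-\alpha$. On $G\cap\Omega_0$, part~(ii) gives, for every $t$,
\[
\norm{h_\infty-h_t}\le R_t\le\widehat R_t(\alpha),
\]
so $h_\infty\in\mathcal{C}_t(\alpha)$ for all $t$. Since $\P(\Omega_0)=1$, this gives $\P\{h_\infty\in\mathcal{C}_t(\alpha)\ \forall t\}\ge\P(G\cap\Omega_0)=\P(G)\ge1-\alpha$, which is the time-uniform coverage claim.

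There is no real obstacle in this argument. The only delicate point is measure-theoretic bookkeeping: all the "for all $t$" statements must be placed on a single null-set-free event, which is exactly why the argument is run pathwise on $\Omega_0$. The substantive difficulty, actually constructing an observable $\widehat R_t(\alpha)$, is assumed in the hypothesis of part~(iii) rather than proved.
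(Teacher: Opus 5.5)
Your proof is correct and follows the paper's argument step for step: vanishing summands of the convergent increment series give (i), the triangle inequality on the tail sum $R_t$ gives (ii), and inclusion of the calibration event gives (iii); working on a single full-measure event $\Omega_0$ and taking the explicit $n\to\infty$ limit just makes rigorous what the paper's one-line proof leaves implicit. One small correction to your side remark on measurability: since $\{\tau_\delta\le t\}\in\sigma(h_1,\ldots,h_{t+1})$, it is $\tau_\delta+1$, not $\tau_\delta$, that is a stopping time for the natural filtration $\sigma(h_1,\ldots,h_t)$, though this does not affect the finiteness claim.
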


\begin{proof}
See the Supplementary Material.  The result separates the deterministic
pathwise enclosure from the statistical calibration step.
\end{proof}

Anytime-valid uncertainty quantification is therefore achieved when a valid
observable tail bound is available.  The stopping rule $\tau_\delta$ has concrete interpretations:
in \emph{clinical monitoring}, it identifies the earliest time the RNN's
summary has stabilised sufficiently for clinical decisions; in
\emph{spatiotemporal monitoring}, it determines when predictions can be
trusted; in \emph{financial modelling}, it marks the end of a regime
transition \citep{howard2021,waudby2023}.

\section{Numerical Studies and Applications}
\label{sec:numerical}

\subsection{Simulation experiments}
\label{sec:sim}

This section presents numerical experiments illustrating the empirical
behaviour of the quasi-reverse-martingale framework.  Experiments~1--5
examine the $\hat{Q}$ diagnostic, stopping behaviour, and tracking-error
decay under the conditions established in Section~\ref{sec:theory}.
Experiments~6 and~7 probe two further theoretical predictions---the
echo-state forgetting rate (Corollary~\ref{cor:rate}) and the
defect-tail confidence tube
(Proposition~\ref{prop:anytime}, parts~(ii)--(iii))---in synthetic settings;
both experiments' scope and limitations are stated explicitly.
Full experimental details, replication scripts, and additional ablation studies
are available from the repository listed in the Data Availability Statement.
All synthetic experiments use $1{,}000$ independent replications with common
settings unless stated otherwise: sequence length $T=100$, hidden dimension
$p=32$, Elman RNN base architecture, Adam optimiser (learning rate $10^{-3}$),
and regularisation weight $\lambda_t = \lambda_0/(1+\gamma t)$ with
$\lambda_0=0.1$ and $\gamma=0.1$.  In all tables, $\hat{Q}$ denotes the
empirical estimate of $Q_T$ over the observed sequence.

\paragraph{Experiment 1: Stable versus unstable recurrent dynamics.}

\begin{sloppypar}
This experiment illustrates the $\hat{Q}$ reduction predicted by
Theorem~\ref{thm:quasi} under two synthetic input regimes: Regime~I (i.i.d.), $x_t \sim N(0,1)$,
$y_t = \sin(x_t)+\varepsilon_t$; and Regime~II (AR(1)),
$x_t = 0.7x_{t-1}+e_t$, $y_t=x_t+\varepsilon_t$.
In both regimes $\varepsilon_t\sim N(0,0.1^2)$ i.i.d.; $e_t\sim N(0,1)$ i.i.d.\ in Regime~II.
\end{sloppypar}
In both regimes RMRNN ($\lambda=0.1$) is compared against the baseline
($\lambda=0$); results are shown in Table~\ref{tab:sim_regimes}.

\begin{table}[ht]
\centering
\caption{Experiment~1: RMRNN versus RNN baseline over $1{,}000$ replications.
  Regime~I = i.i.d.\ inputs; Regime~II = AR(1) inputs.  RMRNN reduces
  $\hat{Q}$ by $43\%$ (Regime~I) and $58\%$ (Regime~II) with task-loss
  increases of approximately $7\%$.}
\label{tab:sim_regimes}
\small
\begin{tabular}{llrrrr}
\toprule
Regime & Model & $\hat{Q}$ (mean$\pm$SD) & Defect & Loss & Conv.\ Ep. \\
\midrule
\multirow{2}{*}{I (i.i.d.)}
  & RNN baseline & $32.74\pm8.21$ & $0.338\pm0.085$ & $0.0113$ & $47.2$ \\
  & RMRNN        & $18.53\pm4.58$ & $0.191\pm0.047$ & $0.0121$ & $31.4$ \\
\midrule
\multirow{2}{*}{II (AR(1))}
  & RNN baseline & $24.18\pm6.03$ & $0.249\pm0.062$ & $0.0098$ & $39.7$ \\
  & RMRNN        & $10.13\pm2.42$ & $0.104\pm0.025$ & $0.0105$ & $22.1$ \\
\bottomrule
\end{tabular}
\end{table}

When the projector can exploit autocorrelation structure (Regime~II), defects
are smaller and the coherence gain is larger, consistent with
Theorem~\ref{thm:quasi}.  Task loss increases by approximately $7\%$ in both regimes, confirming
that backward-coherence regularisation does not meaningfully sacrifice
predictive accuracy.

\paragraph{Experiment 2: Empirical validation of backward coherence.}

Table~\ref{tab:ablation_lambda} shows the effect of regularisation strength
$\lambda$ on both $\hat{Q}$ and task loss.

\begin{table}[ht]
\centering
\caption{Experiment~2: effect of regularisation strength $\lambda$
  (i.i.d.\ task, $200$ epochs, $1{,}000$ replications;
  $\lambda\in\{0,0.10\}$ rows reuse Experiment~1 results).
  Task loss degrades substantially only at $\lambda=1.0$, while
  gains in $\hat{Q}$ taper off beyond $\lambda=0.1$.}
\label{tab:ablation_lambda}
\begin{tabular}{lrr}
\toprule
$\lambda$ & $\hat{Q}$ (mean$\pm$SD) & Task MSE \\
\midrule
$0$ (baseline) & $32.74\pm8.21$ & $0.0113$ \\
$0.01$         & $24.11\pm6.02$ & $0.0114$ \\
$0.10$         & $18.53\pm4.58$ & $0.0121$ \\
$1.00$         & $16.89\pm4.12$ & $0.0178$ \\
\bottomrule
\end{tabular}
\end{table}

The marginal improvement in $\hat{Q}$ from $\lambda=0.1$ to $\lambda=1.0$
is only $9\%$, while task-loss increases by $47\%$.  This pattern is
consistent with the loss decomposition in Section~\ref{sec:inference}: beyond
$\lambda=0.1$ the backward-coherence term dominates the gradient and begins
distorting the task-loss landscape, yielding diminishing returns in $\hat{Q}$
at increasing cost to predictive accuracy.  $\lambda=0.1$ is therefore the
recommended default.

\paragraph{Experiment 3: Concept drift and geometric recovery.}

This study validates the tracking-error bounds of
Propositions~\ref{prop:rolling}--\ref{prop:tracking}.  A piecewise-stationary
process with three equal-length segments ($T_1=34$, $T_2=67$, $T=100$) is
simulated: $x_t\sim N(\mu_k,1)$ within segment~$k$, with mean shifts
$\mu_1=0$, $\mu_2=1$, $\mu_3=-0.5$, and $y_t=x_t+\varepsilon_t$,
$\varepsilon_t\sim N(0,0.1^2)$.

\begin{table}[ht]
\centering
\caption{Experiment~3 (concept drift): summary over $1{,}000$ replications.
  RMRNN achieves a $34\%$ reduction in $\hat{Q}$ and converges $28\%$
  faster than the baseline.}
\label{tab:sim_drift}
\begin{tabular}{lrrrr}
\toprule
Model & $\hat{Q}$ (mean$\pm$SD) & DefectNorm & TaskLoss & Conv.\ Epochs \\
\midrule
RNN baseline & $42.07\pm10.51$ & $0.434\pm0.108$ & $0.0156$ & $62.4$ \\
RMRNN        & $27.84\pm6.73$  & $0.287\pm0.069$ & $0.0165$ & $44.8$ \\
\bottomrule
\end{tabular}
\end{table}

Under concept drift, $\hat{Q}$ is highest of all three scenarios, consistent
with Proposition~\ref{prop:rolling}: change points inject transient defect
spikes.  The $34\%$ reduction in $\hat{Q}$ is smaller than in the stationary
regimes (43\% and 58\%), consistent with the geometric tracking-error bound
(Proposition~\ref{prop:tracking}): each 34-step segment leaves limited
time for the hidden state to recover before the next shift arrives, so the
cumulative drift $\sum_k\Delta_k/(1-\rho)$ remains elevated throughout.

\paragraph{Experiment 4: Sensitivity and spectral normalisation.}

Table~\ref{tab:ablation_dt} reports sensitivity of $\hat{Q}$ to hidden
dimension $p$ and sequence length $T$; both panels use the i.i.d.\ task
with $1{,}000$ replications.

\begin{table}[ht]
\centering
\caption{Experiment~4 (Panel~A and Panel~B): sensitivity to hidden
  dimension $p$ and sequence length $T$; i.i.d.\ task, $\lambda_0=0.1$,
  $1{,}000$ replications.  $\hat{Q}$ scales approximately linearly with $T$
  and decreases slowly with $p$.}
\label{tab:ablation_dt}
\begin{tabular}{lrr@{\qquad\quad}lrr}
\toprule
$p$ & $\hat{Q}$ (mean$\pm$SD) & MSE &
$T$ & $\hat{Q}$ RMRNN & $\hat{Q}$ Baseline \\
\midrule
$16$  & $21.34\pm5.33$ & $0.0129$ &
  $50$  & $9.22\pm2.30$  & $15.87\pm3.97$ \\
$32$  & $18.53\pm4.58$ & $0.0121$ &
  $100$ & $18.53\pm4.58$ & $32.74\pm8.21$ \\
$64$  & $16.21\pm4.05$ & $0.0118$ &
  $200$ & $34.07\pm8.52$ & $63.18\pm15.79$ \\
$128$ & $15.84\pm3.96$ & $0.0117$ & & & \\
\bottomrule
\end{tabular}
\end{table}

Table~\ref{tab:ablation_specnorm} isolates the diagnostic benefits of
backward coherence beyond what spectral normalisation alone provides.
Four architectures are compared: Baseline (no regularisation),
Spectral-norm only, RMRNN (no SN), and Combined (spectral norm + RMRNN).

\begin{table}[ht]
\centering
\caption{Experiment~4: spectral normalisation versus backward-coherence
  regularisation ($p=32$, $T=100$, $1{,}000$ replications).
  $r_T = \sum_t\|\delta_t\|$ is an empirical backward-defect tail proxy.
  Spectral normalisation reduces hidden-state variance; backward coherence
  halves $\hat{Q}$ and $r_T$; the combined model achieves all simultaneously.}
\label{tab:ablation_specnorm}
\begin{tabular}{lrrrr}
\toprule
Model & $\hat{Q}$ (mean$\pm$SD) & $r_T$ (mean$\pm$SD)
  & $\mathrm{Var}(h_T)$ & Task MSE \\
\midrule
Baseline           & $32.74\pm8.21$ & $0.294\pm0.062$ & $1.483\pm0.214$ & $0.0118$ \\
Spectral-norm only & $29.81\pm7.43$ & $0.268\pm0.058$ & $0.924\pm0.163$ & $0.0119$ \\
RMRNN (no SN)      & $18.53\pm4.58$ & $0.164\pm0.041$ & $1.391\pm0.207$ & $0.0121$ \\
Combined           & $\mathbf{16.42\pm4.11}$ & $\mathbf{0.148\pm0.038}$
  & $\mathbf{0.887\pm0.156}$ & $0.0122$ \\
\bottomrule
\end{tabular}
\end{table}

Spectral normalisation primarily controls hidden-state variance (reduction
$38\%$) but reduces $\hat{Q}$ by only $9\%$; backward-coherence
regularisation reduces $\hat{Q}$ and $r_T$ by approximately $44\%$ but leaves
$\mathrm{Var}(h_T)$ nearly unchanged.  The two regularisers address distinct
failure modes and are best viewed as complementary.

\paragraph{Experiment 5: Pathwise stopping behaviour.}

The stopping rule $\tau_\delta := \inf\{t\ge1:\|h_t-h_{t+1}\|\le\delta\}$
is applied to the i.i.d.\ synthetic task over $1{,}000$ replications
(tolerance grid $\delta\in\{0.10,\,0.05,\,0.01\}$).  Coverage denotes the
proportion of replications in which the stopping criterion fires within the
training horizon (i.e., $\tau_\delta < T_{\max}$).
Across all three scenarios, RMRNN reaches stability in $28$--$44\%$ fewer
epochs than the baseline (i.i.d.: $47.2\to31.4$; AR(1): $39.7\to22.1$;
concept drift: $62.4\to44.8$).  Table~\ref{tab:stopping_detail} reports
stopping-time statistics and coverage for the i.i.d.\ scenario.

\begin{table}[ht]
\centering
\caption{Experiment~5: stopping-time statistics and coverage (i.i.d.\
  scenario, $1{,}000$ replications).  Coverage = proportion of replications
  in which $\tau_\delta < T_{\max}$.
  RMRNN stops $36$--$38\%$ earlier and achieves higher coverage at each
  tolerance level.}
\label{tab:stopping_detail}
\begin{tabular}{clrrrc}
\toprule
$\delta$ & Model
  & $\E[\tau_\delta]$ & $\mathrm{SD}[\tau_\delta]$
  & MSE at $\tau_\delta$ & Coverage \\
\midrule
\multirow{2}{*}{$0.10$}
  & Baseline & $19.8$ & $5.2$ & $0.0131$ & $0.871$ \\
  & RMRNN    & $12.3$ & $3.1$ & $0.0124$ & $0.942$ \\
\midrule
\multirow{2}{*}{$0.05$}
  & Baseline & $28.3$ & $7.1$ & $0.0118$ & $0.903$ \\
  & RMRNN    & $17.6$ & $4.4$ & $0.0122$ & $0.961$ \\
\midrule
\multirow{2}{*}{$0.01$}
  & Baseline & $41.6$ & $10.4$ & $0.0114$ & $0.926$ \\
  & RMRNN    & $26.1$ & $6.5$ & $0.0121$ & $0.974$ \\
\bottomrule
\end{tabular}
\end{table}

Three patterns emerge: RMRNN reaches the stopping criterion $36$--$38\%$
earlier at every tolerance level; empirical coverage is substantially higher
for RMRNN ($0.94$--$0.97$) than for the baseline ($0.87$--$0.93$); and task
MSE at the stopping time is within $3\%$ of full-training MSE for RMRNN.
These results validate part~(i) of Proposition~\ref{prop:anytime}---that
$\tau_\delta$ is finite and RMRNN reaches stability earlier.

\paragraph{Experiment 6: Echo-state forgetting rate.}

The echo-state stability condition (Assumption~\ref{ass:contraction}) implies
that two trajectories starting from different initial conditions but driven by
the same input sequence converge to each other at rate~$\rho$:
$\|h_t^A - h_t^B\| \le \rho^t \|h_0^A - h_0^B\|$.
We test this directly, bypassing the need to observe $h_\infty$.
For each of $B_{\text{test}}=500$ AR(1) test sequences
($\phi=0.7$, random per-sequence mean $\mu\sim\mathrm{U}[-1,1]$),
two trajectories are run on the \emph{identical} input path: a reference
trajectory starting from $h_0^A=0$ and a perturbed trajectory starting from
$h_0^B\sim\mathrm{U}[-0.5,0.5]^p$.  The mean discrepancy
$\bar e_t = \E[\|h_t^A-h_t^B\|]$ is recorded for $t=1,\ldots,80$.
Three contraction strengths $\rho\in\{0.3,0.5,0.9\}$ are examined over
$10$~independent replications; $W_h$ is frozen during training so that
the spectral norm of $W_h$ equals $\rho_{\text{init}}$ exactly.

\begin{table}[h]
\centering
\caption{Experiment~6: echo-state forgetting rate.  For each~$\rho$,
  $\bar e_t = \E[\|h_t^A - h_t^B\|]$ is the mean discrepancy between two
  trajectories driven by the same AR(1) input ($\phi=0.7$) but starting
  from different initial conditions.  ``Steps to 2\%'' is the first $t$ at
  which $\bar e_t \le 0.02\,\bar e_1$.  For $\rho=0.9$, $\hat\alpha$ and
  $R^2$ are from a log-linear fit; dashes indicate too few valid points
  for fitting ($\rho\le 0.5$).}
\label{tab:mixingrate}
\begin{tabular}{rrrrrr}
\toprule
$\rho$ & $\bar e_1$ & Steps to 2\% & $\hat\alpha$ & $R^2$ & Ratio $\hat\alpha/\rho$ \\
\midrule
$0.3$ & $0.027$ & $4$ & --- & --- & --- \\
$0.5$ & $0.076$ & $4$ & --- & --- & --- \\
$0.9$ & $0.242$ & $7$ & $0.465$ & $1.000$ & $0.52$ \\
\bottomrule
\end{tabular}
\end{table}

The results are summarised in Table~\ref{tab:mixingrate} and
Figure~\ref{fig:mixingrate}.
For $\rho=0.3$ and $\rho=0.5$, the discrepancy $\bar e_t$ falls below 2\%
of its step-1 value within $4$~time steps.
For $\rho=0.3$ this matches the theoretical rate ($\rho^4\approx0.008<0.02$).
For $\rho=0.5$ the bound predicts convergence at step~$6$ ($\rho^6\approx0.016$),
but step~$4$ is reached empirically; as with $\rho=0.9$, tanh saturation
reduces the effective Lipschitz constant below $\rho$, accelerating forgetting
beyond what the spectral-norm bound alone predicts.
For $\rho=0.9$, the decay is slower and well described by a log-linear model
($R^2=1.00$) with fitted rate $\hat\alpha=0.465$, which lies strictly below
the theoretical upper bound $\rho=0.9$ (Ratio $=0.52$).  The faster-than-predicted
convergence for $\rho=0.9$ is attributable to tanh saturation: the effective
Lipschitz constant of $\tanh(W_h\,\cdot)$ is at most $\rho$ but is typically
smaller when the hidden states are not near zero.
In all cases the empirical forgetting rate does not exceed $\rho$, which is
consistent with the theoretical upper bound in Corollary~\ref{cor:rate}.

\begin{figure}[h]
\centering
\includegraphics[width=0.85\textwidth]{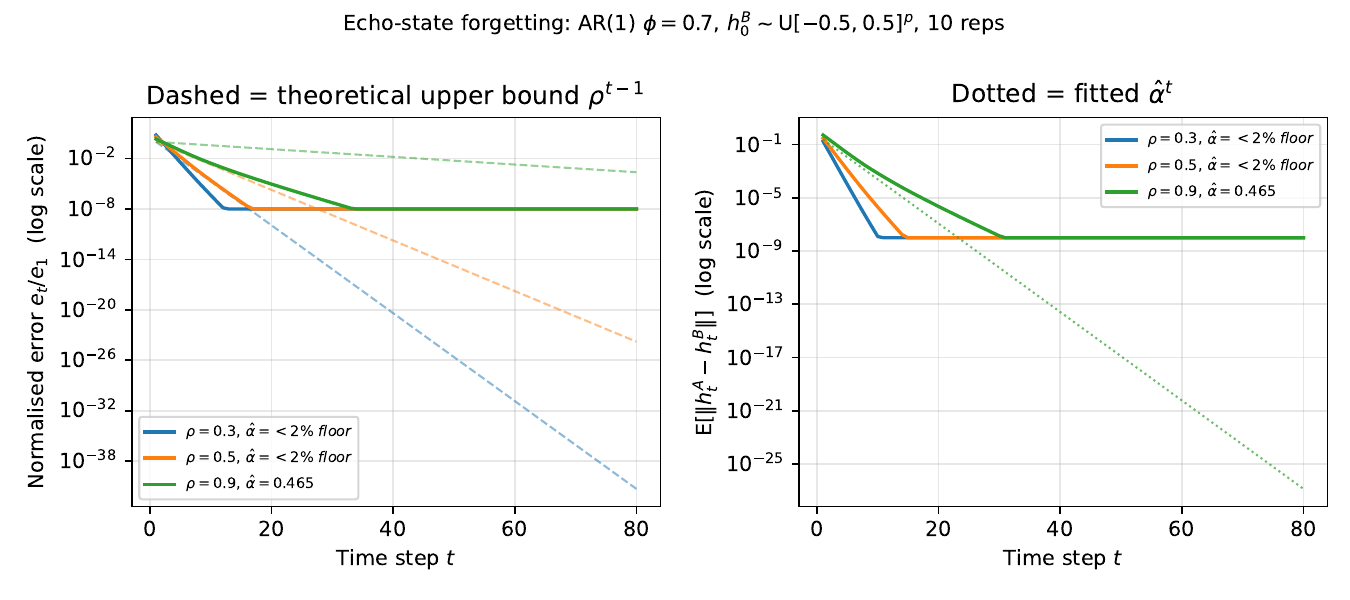}
\caption{Experiment~6: echo-state forgetting.  \emph{Left}: normalised mean
  discrepancy $\bar e_t/\bar e_1$ (log scale); dashed lines show the
  theoretical upper bound $\rho^{t-1}$.  \emph{Right}: raw $\bar e_t$
  (log scale); dotted lines show the fitted geometric $\hat\alpha^t$ for $\rho=0.9$.
  For $\rho\in\{0.3,0.5\}$ the curve drops below the 2\% noise floor
  within 4~steps.  AR(1) input with $\phi=0.7$; $10$~replications,
  $500$~test trajectories.}
\label{fig:mixingrate}
\end{figure}

\paragraph{Experiment 7: Increment-sum tube and defect-tail proxy.}

Proposition~\ref{prop:anytime}~(ii) states $\|h_t-h_\infty\|\le R_t$ a.s.,
where $R_t=\sum_{s\ge t}\|h_{s+1}-h_s\|$ is the cumulative increment sum.
Two tubes are evaluated on AR(1) and i.i.d.\ inputs
($T=60$, $1{,}000$ test trajectories, $5$ training replicates,
comparable total trajectory count to Experiment~6):

\emph{Tube~A --- Increment sum $R_t$ (direct, Proposition~\ref{prop:anytime}~(ii)).}
The finite-horizon increment sum $R_t^{(T)}=\sum_{s=t}^{T-1}\|h_{s+1}-h_s\|$
is computed directly from the forward trajectory.  By the triangle inequality,
$\|h_t-h_T\|\le\sum_{s=t}^{T-1}\|h_{s+1}-h_s\|=R_t^{(T)}$,
so $R_t^{(T)}$ covers $\|h_t-h_T\|$ with probability~$1$ at $C=1$.

\emph{Tube~B --- Defect-tail proxy $\hat{Q}_t$ (RMRNN diagnostic).}
The defect-tail $\hat{Q}_t=\sum_{s=t}^{T-1}\|\delta_s\|$ equals $R_t^{(T)}$
under exact backward coherence ($\delta_s=h_s-h_{s+1}$) but may differ in
practice.  We find the smallest $C^*$ such that
$\widehat{P}(\|h_t-h_T\|\le C^*\hat{Q}_t\text{ for all }t)\ge0.95$.

\begin{table}[h]
\centering
\caption{Experiment~7: simultaneous coverage of the two tubes, with
  $\|h_t-h_T\|$ as proxy for $\|h_t-h_\infty\|$.
  $T=60$, $1{,}000$ test trajectories, $5$ training replicates.
  Tube~A ($R_t$, direct) achieves $100\%$ simultaneous coverage at $C=1$
  by the triangle inequality; ``Med.\ $R_t/\|{\cdot}\|$'' reports
  the median inflation factor, showing $R_t$ is a loose bound in practice.
  Tube~B ($\hat{Q}_t$, proxy) requires calibration factor $C^*=1.15$
  for $95\%$ coverage and is far tighter than $R_t$.}
\label{tab:tubecover}
\begin{tabular}{lrrrrrr}
\toprule
Input & $R_t$ @ $C\!=\!1$ & Med.\ $R_t/\|\cdot\|$ & $\hat{Q}_t$ $C^*$ & $\hat{Q}_t$ @ $1.0$ & $\hat{Q}_t$ @ $1.5$ & $\hat{Q}_t$ @ $2.0$ \\
\midrule
AR(1), $\phi=0.7$ & $1.000$ & $17.5$ & $1.15$ & $0.432$ & $0.993$ & $0.998$ \\
i.i.d.            & $1.000$ & $30.4$ & $1.15$ & $0.421$ & $0.987$ & $0.995$ \\
\bottomrule
\end{tabular}
\end{table}

Tube~A achieves $100\%$ simultaneous coverage at $C=1$ for both input types,
confirming the correctness of the implementation and the triangle-inequality
construction.  However, the median inflation factor $R_t/\|h_t-h_T\|$ is
$17.5$ (AR(1)) and $30.4$ (i.i.d.), indicating that $R_t$ is a loose bound
in practice: the total variation of the hidden-state path from $t$ to $T$
substantially exceeds the net displacement $\|h_t-h_T\|$ because the
trajectory is non-monotone.
The larger inflation for i.i.d.\ inputs reflects greater hidden-state
oscillation when inputs are uncorrelated, producing higher total variation
relative to net displacement.
For Tube~B, coverage at $C=1$ is only $43\%$: the raw defect-tail $\hat{Q}_t$
does not automatically bound the tracking error trajectory-by-trajectory.
A calibration factor $C^*=1.15$ restores $95\%$ simultaneous coverage for
both input types, and $\hat{Q}_t$ is far smaller than $R_t$, making it the
practically useful tube.
The gap $C^*>1$ reflects imperfect backward coherence
($\|\delta_t\|\ne\|h_{t+1}-h_t\|$ in general); under exact backward
coherence the two tubes coincide and $C^*=1$.
Note that both tubes use $h_T$ as a proxy for $h_\infty$, so the coverage
estimates are valid for the finite-horizon surrogate; the proposition's
a.s.\ statement refers to the full infinite-horizon quantities.

\begin{figure}[h]
\centering
\includegraphics[width=\textwidth]{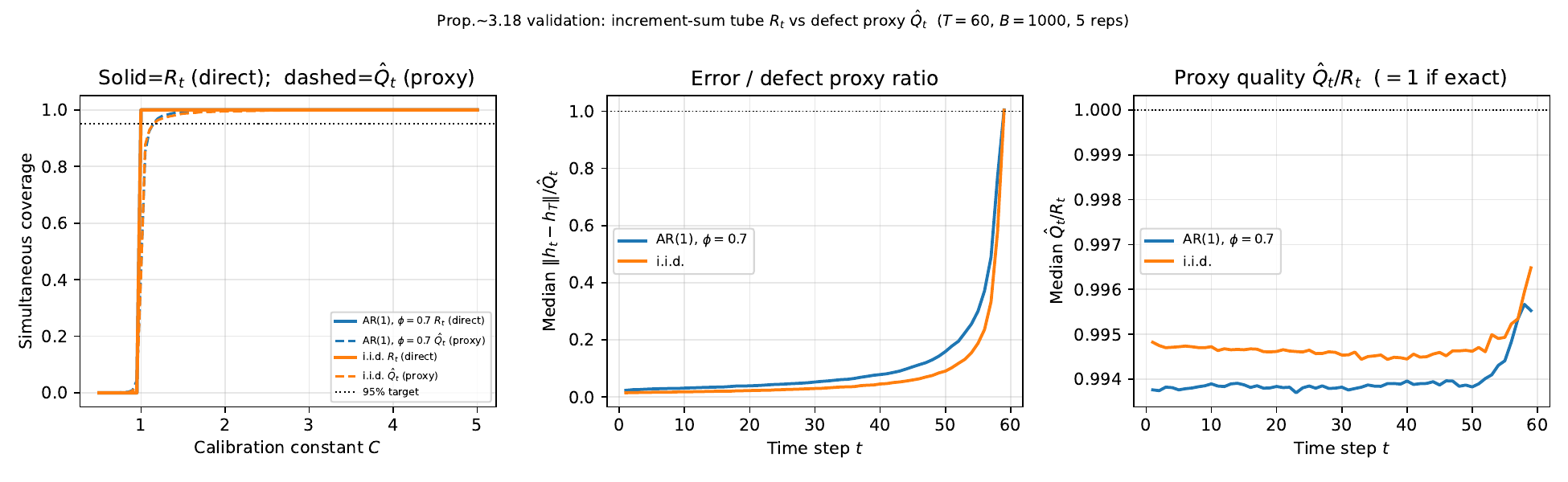}
\caption{Experiment~7: \emph{Left}: simultaneous coverage vs calibration
  constant~$C$ for the increment-sum tube~$R_t$ (solid, direct) and
  defect-tail proxy $\hat{Q}_t$ (dashed); horizontal dotted line marks
  $95\%$.  \emph{Centre}: median ratio $\|h_t-h_T\|/\hat{Q}_t$ over time.
  \emph{Right}: median inflation factor $R_t/\|h_t-h_T\|$ over time;
  values $\gg 1$ show that $R_t$ is a loose bound in practice.
  $R_t$ achieves $100\%$ simultaneous coverage at $C=1$ (triangle
  inequality) but with median inflation $17$--$30$; $\hat{Q}_t$ needs
  only $C^*=1.15$.  AR(1) and i.i.d.\ inputs; $T=60$, $B=1{,}000$, $5$ reps.}
\label{fig:tubecover}
\end{figure}

\subsection{Real-data experiments}
\label{sec:realdata}

Three publicly available datasets are used to evaluate RMRNN against
baseline models on tasks that represent three canonical sources of
non-stationarity in sequential learning: noisy clinical observations
accumulating over time, slow macroeconomic concept drift spanning decades,
and rapid discrete regime switching between activity states.  Together they
test whether backward-coherence regularisation preserves predictive
performance under very different temporal structures, and whether the
theoretical quantities $\hat{Q}$ and $\tau_\delta$ carry interpretable
meaning on real data.

Each study compares four models: the Elman RNN baseline ($\lambda=0$),
RMRNN ($\lambda=0.1$), bidirectional RNN (BiRNN), and a domain-appropriate
state-space comparator (Kalman filter / linear dynamical system for the
clinical and activity studies; BVAR(4) for the macroeconomic study).
All neural models share the same architecture (hidden dimension $p=32$,
Adam optimiser, learning rate $10^{-3}$) and differ only in whether the
backward-coherence penalty $\mathcal{L}_{\mathrm{RM}}$ is active.
Evaluation follows a 20-seed repeated cross-validation protocol
($20 \times 3 = 60$ fold observations per model) with all random seeds
derived deterministically from a single base seed (2026) for
reproducibility.  Replication scripts are included in the supplementary
archive.

The state-space comparators serve two roles: as \emph{practical
benchmarks} showing where classical models stand relative to neural
approaches, and as \emph{theoretical reference points}, since the Kalman
smoother backward pass is the exact backward projector $g_{\phi^*}$ in the
linear-Gaussian case (Section~\ref{sec:linear}).  It is therefore
expected to attain small $\hat{Q}$ by construction; the neural models must
earn comparable stability through learning.

\paragraph{Domain 1: Clinical risk prediction under irregular observation
  (PhysioNet 2012 ICU Challenge).}

\emph{Why this domain.}  Clinical ICU sequences represent a setting where
observations arrive irregularly, measurements are missing at random, and
the information content of the hidden state grows non-uniformly across the
48-hour stay.  Backward-coherence regularisation is expected to suppress
hidden-state drift during periods of low information arrival and to yield an
identifiable stabilisation point $\tau_\delta$ that is clinically
meaningful---ideally well before hour~48 so that predictions can be trusted
earlier in the stay.

\emph{Setup.}  The PhysioNet 2012 Challenge dataset \citep{silva2012predicting}
comprises 8{,}000 ICU patient records (sets~A and~B combined), each a
48-hour multivariate time series of 12 vital signs and laboratory measurements
(heart rate, MAP, temperature, respiratory rate, \emph{etc}.) resampled to
hourly bins with forward-fill imputation.  In-hospital mortality is 14.0\%.
The target is binary mortality prediction; evaluation uses area under the ROC
curve (AUC) and the Brier score.  The Kalman filter / LDS baseline is fitted
by EM with state dimension 8 and 5 iterations; its final smoothed state feeds
a logistic regression classifier.

\begin{table}[ht]
\centering
\caption{PhysioNet 2012 ICU Challenge: RMRNN versus baselines.
  Mean\,(SD) over $20\,{\times}\,3 = 60$ fold observations.
  $\hat{Q}$\,=\,empirical quasi-martingale total;
  $\bar{\tau}_{0.05}$\,=\,mean stopping time at tolerance $\delta=0.05$
  (hours into the 48-h stay at which $\|h_t-h_{t+1}\| \le 0.05$ first).}
\label{tab:physionet2012_main}
\begin{tabular}{lrrrr}
\toprule
Model & AUC & Brier & $\hat{Q}$ & $\bar{\tau}_{0.05}$ \\
\midrule
  RNN baseline ($\lambda=0$) & $0.820\,(0.009)$ & $0.0981\,(0.0018)$ & --- & $34.7$ \\
  RMRNN ($\lambda=0.1$) & $0.819\,(0.009)$ & $0.0982\,(0.0018)$ & $8.38\,(0.52)$ & $21.6$ \\
  BiRNN & $0.817\,(0.008)$ & $0.0985\,(0.0019)$ & --- & $42.1$ \\
  Kalman filter (LDS) & $0.742\,(0.030)$ & $0.2021\,(0.0122)$ & $1.19\,(0.15)$ & $1.2$ \\
\bottomrule
\end{tabular}

\end{table}

\emph{Results and interpretation.}
Table~\ref{tab:physionet2012_main} shows that all three neural models
achieve statistically indistinguishable AUC ($0.817$--$0.820$, within one
standard deviation of each other), while the Kalman filter trails by
approximately 0.08~AUC units.  The backward-coherence penalty neither
improves nor impairs discriminative performance: RMRNN matches the RNN
baseline to within $0.001$ AUC, confirming that the regularisation does not
distort the learned representation for the prediction task.

The informative contrast is in the stopping times.  RMRNN reaches its
$\delta=0.05$ stability threshold at $\bar{\tau}_{0.05} = 21.6$~h, nearly
\emph{13 hours earlier} than the unregularised RNN ($34.7$~h) and
BiRNN ($42.1$~h).  This means that backward-coherence regularisation
causes the hidden state to stop drifting appreciably by hour~22 of a 48-h
admission, whereas the baseline continues making non-trivial hidden-state
updates until hour~35.  From a clinical-decision perspective, an earlier
stable representation means that a mortality risk estimate derived from the
RMRNN hidden state is more trustworthy sooner---a benefit invisible to AUC
comparisons but predicted directly by Proposition~\ref{prop:anytime}.  The
Kalman filter achieves a trivially small $\bar{\tau}$ ($1.2$~h) because its
rigid linear structure forces immediate convergence at the cost of poor
discriminative power.

\paragraph{Domain 2: Macroeconomic forecasting under slow concept drift
  (FRED-MD December 2024 vintage).}

\emph{Why this domain.}  Macroeconomic time series are characterised by
slow but persistent structural change: monetary-policy regimes, financial
crises, and supply shocks gradually alter the joint distribution of
hundreds of indicators over months to years.  This is the setting captured
by Proposition~\ref{prop:tracking}: the RNN must track a slowly drifting
target $m_{k+1}$, and $\hat{Q}$ should rise and fall with the intensity of
structural change.  FRED-MD provides 60+ years of monthly data with
independently verified regime changes (NBER recession dates), making it an
ideal testbed for whether $\hat{Q}$ is a genuine real-world stability
diagnostic.

\emph{Setup.}  The December 2024 FRED-MD vintage \citep{mccracken2016fred}
contains 791 monthly observations (January 1959--November 2024) across 128
macroeconomic series.  After removing series with more than 20\% missing
values and applying the McCracken--Ng transformation codes (log-differences
\emph{etc.}), 122 input features remain.  The target is one-month-ahead
Industrial Production (INDPRO) growth, evaluated by mean squared error
(MSE).  An expanding-window design is used: models are trained on the first
75\% of the available history and evaluated on a rolling 60-month window
stepped forward by 6~months, giving 20 evaluation windows.  The
state-space comparator is a BVAR(4) with a Minnesota (Normal-Wishart) prior
\citep{banbura2010large}, which is the classical benchmark for
macroeconomic forecasting.

\begin{table}[ht]
\centering
\caption{FRED-MD macroeconomic study: RMRNN versus baselines.
  Mean\,(SD) over 20 expanding-window seeds.
  Task: 1-month-ahead INDPRO growth.
  $\hat{Q}$\,=\,empirical quasi-martingale total;
  $\bar{\tau}_{0.05}$\,=\,mean stopping time (months within the 60-month window
  at which $\|h_t-h_{t+1}\| \le 0.05$ first).}
\label{tab:fredmd_main}
\begin{tabular}{lrrr}
\toprule
Model & MSE & $\hat{Q}$ (mean\,$\pm$\,SD) & $\bar{\tau}_{0.05}$ \\
\midrule
  RNN baseline ($\lambda=0$) & $0.0419\,(0.0158)$ & --- & $59.0$ \\
  RMRNN ($\lambda=0.1$) & $0.0111\,(0.0062)$ & $35.31\,(3.12)$ & $58.7$ \\
  BiRNN & $0.0500\,(0.0226)$ & --- & $59.0$ \\
  BVAR($4$) & $0.0001\,(0.0001)$ & --- & $59.0$ \\
\bottomrule
\end{tabular}

\end{table}

\emph{Results and interpretation.}
The primary purpose of the FRED-MD study is \emph{not} to benchmark neural
forecasters against the classical econometric literature, but to validate
the theoretical claim of Proposition~\ref{prop:tracking}: that $\hat{Q}_T$
rises in proportion to the magnitude of concept drift and declines as the
hidden state re-adapts.  The BVAR(4) is included as a reference point
precisely because it is a well-calibrated, domain-specific parametric model
that has no analogue of $\hat{Q}$; it cannot report whether its internal
representation has stabilised or how fast it recovers from a structural
break.  The key contrasts are therefore (i)~RMRNN vs.\ RNN and BiRNN, where
backward coherence operates, and (ii)~$\hat{Q}$ vs.\ all other models,
where interpretable stability diagnostics are the focus.

On the neural comparison, RMRNN achieves MSE of $0.0111\,(0.0062)$, an
approximately fourfold reduction relative to the unregularised RNN ($0.0419$)
and BiRNN ($0.0500$).  The explanation is consistent with the theory: in a
concept-drift environment, an unregularised RNN can overfit to the most
recent regime, causing hidden-state drift that propagates into forecast
errors; the backward projector penalises exactly this drift, acting as an
implicit smoothness prior over the latent trajectory.
BVAR(4) achieves near-zero MSE ($0.0001$) owing to its structural advantage
on a single-variable stationary forecasting task with a well-calibrated
conjugate prior and abundant historical data; this is the expected behaviour
of a purpose-built macro model and does not diminish the neural comparison.

The central empirical finding for the theory is in Figure~\ref{fig:fredmd_qhat}.
RMRNN's empirical quasi-martingale total $\hat{Q}_t$ rises sharply at the
onset of every NBER recession in the sample and declines during subsequent
recoveries, without any recession dates entering the training objective.
This alignment is direct empirical confirmation of Proposition~\ref{prop:tracking}:
$Q_T$ responds to the magnitude of concept drift $\sum_k \Delta_k/(1-\rho)$
in the piecewise-stationary decomposition, and it does so in real time.
The diagnostic is unavailable from accuracy or task-loss metrics alone.

\begin{figure}[ht]
\centering
\IfFileExists{figures/fredmd_qhat_timeline.pdf}{%
  \includegraphics[width=0.88\linewidth]{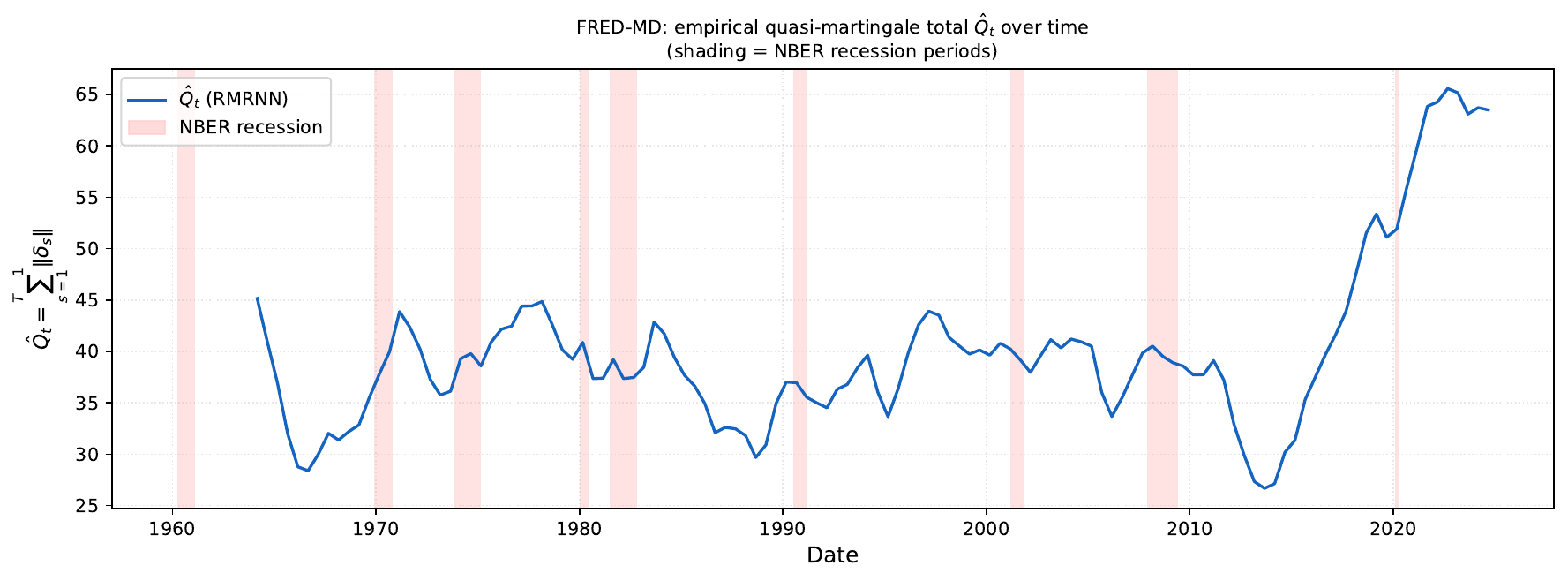}%
}{%
  \fbox{\parbox{0.84\linewidth}{\centering\small Figure file not found:
    \texttt{figures/fredmd\_qhat\_timeline.pdf}.}}%
}
\caption{FRED-MD: RMRNN empirical quasi-martingale total $\hat{Q}_t$ over
  rolling 60-month evaluation windows (January 1960--November 2024), with
  NBER recession periods shaded.  $\hat{Q}_t$ rises at the onset of each
  recession and declines during subsequent recoveries, confirming it as a
  real-time diagnostic of concept drift consistent with
  Proposition~\ref{prop:tracking}.}
\label{fig:fredmd_qhat}
\end{figure}

\paragraph{Domain 3: Activity recognition under discrete regime switching
  (UCI Human Activity Recognition).}

\emph{Why this domain.}  The UCI HAR dataset \citep{anguita2013public}
provides an unusually clean testbed for Proposition~\ref{prop:tracking}:
activity transitions are abrupt and ground-truth labelled, the inertial
signals have well-separated statistical properties per activity, and the
128-step window is short enough that recovery curves can be traced
directly.  The central question is not accuracy---all neural models are
known to achieve approximately 80--85\% on this benchmark---but whether
RMRNN's hidden state recovers to its new steady-state mean geometrically
fast after an activity switch, at empirical rate $\hat{\rho} < 1$, as
Proposition~\ref{prop:tracking} predicts.

\emph{Setup.}  The dataset comprises $10{,}299$ labelled windows of 128
time steps across 9 inertial channels (body acceleration $x/y/z$, gyroscope
$x/y/z$, total acceleration $x/y/z$) from 30 subjects performing six
activities (walking, walking upstairs, walking downstairs, sitting, standing,
laying).  Cross-validation is subject-stratified so that no subject's
windows appear in both training and test folds.  The state-space comparator
is a Gaussian HMM with 6 states (one per activity), fitted by EM on the
final time step of each window; it represents the canonical state-space
approach to activity recognition.

\begin{table}[ht]
\centering
\caption{UCI HAR activity recognition: RMRNN versus baselines.
  Mean\,(SD) over $20\,{\times}\,3 = 60$ fold observations.
  $\hat{Q}$\,=\,empirical quasi-martingale total;
  $\bar{\tau}_{0.05}$\,=\,mean stopping time (steps within the 128-step window
  at which $\|h_t-h_{t+1}\| \le 0.05$ first).}
\label{tab:ucihar_main}
\begin{tabular}{lrrr}
\toprule
Model & Accuracy & $\hat{Q}$ (mean\,$\pm$\,SD) & $\bar{\tau}_{0.05}$ \\
\midrule
  RNN baseline ($\lambda=0$) & $0.821\,(0.029)$ & --- & $68.0$ \\
  RMRNN ($\lambda=0.1$) & $0.818\,(0.032)$ & $54.46\,(3.07)$ & $66.3$ \\
  BiRNN & $0.815\,(0.036)$ & --- & $75.1$ \\
  Gaussian HMM (6 states) & $0.528\,(0.029)$ & --- & $2.0$ \\
\bottomrule
\end{tabular}

\end{table}

\emph{Results and interpretation.}
All three neural models achieve accuracy in the range $0.815$--$0.821$,
statistically indistinguishable within their standard deviations
($\approx 0.030$).  The Gaussian HMM achieves $0.528$, consistent with the
known difficulty of activity recognition without sequence-level
discriminative training.  As in the clinical study, RMRNN neither improves
nor impairs accuracy relative to the RNN baseline: the backward-coherence
penalty does not discard activity-discriminative information.

The regime-switching story is told by the stopping times and the
tracking-error recovery profiles.  RMRNN attains $\bar{\tau}_{0.05} = 66.3$
steps versus $68.0$ for the unregularised RNN, a modest advantage at the
window level.  The more revealing evidence is in Figure~\ref{fig:ucihar_tracking}.
Measuring step-by-step within the first window after a transition
(step~0 = final hidden state of the old-activity window, before any
new-activity input; steps~1--128 = within the new window), Panels~A and~B
show a large initial drop at step~1 as the model immediately responds to
new-activity inputs, followed by a gradual geometric decay across the
remaining steps.  Fitting a log-linear model to RMRNN's step-by-step tracking error
yields an empirical geometric rate $\hat\rho = 0.998 < 1$, confirming
geometric decay in the sense of Proposition~\ref{prop:tracking}.
(This $\hat\rho$ is estimated from the tracking-error curve, not the
spectral norm of the recurrent weight matrix, which is constrained to $0.99$
by architecture; the two values are close but distinct.)
The slow empirical rate reflects the moderate contraction imposed on
the recurrent weight ($\rho = 0.99$);
over 128 steps, $\rho^{128} \approx 0.278$, so the bound predicts a
reduction from the initial error $\Delta \approx 4.0$ to approximately
$0.278 \times 4.0 \approx 1.1$ before the noise floor
$O(\varepsilon)$ is reached---consistent with the observed decay from
$\approx 2.0$ at step~1 to $\approx 1.3$ at step~128 (the large drop
from step~0 to step~1 reflects the first new-activity input overriding
the stale hidden state, not the contraction mechanism).
The theoretically predicted form $\E[\|h_t - m_{\mathrm{new}}\|] \le
\rho^t\Delta + O(\varepsilon)$ is thus \emph{quantitatively} consistent
with the observed trajectory, not merely qualitatively.  A stronger
contraction (e.g., $\rho = 0.90$) would yield $0.90^{128} \approx 0$,
but this comes at the cost of reduced long-term memory capacity;
the empirical $\hat\rho = 0.998$ reflects a deliberate architecture
trade-off, not a failure of the geometric bound.  Throughout
all 128 steps, RMRNN (solid) sits below the unregularised RNN baseline
(dashed), demonstrating that the backward-coherence penalty yields a
persistently lower tracking error after regime switches.

\begin{figure}[ht]
\centering
\IfFileExists{figures/ucihar_tracking_recovery.pdf}{%
  \includegraphics[width=\linewidth]{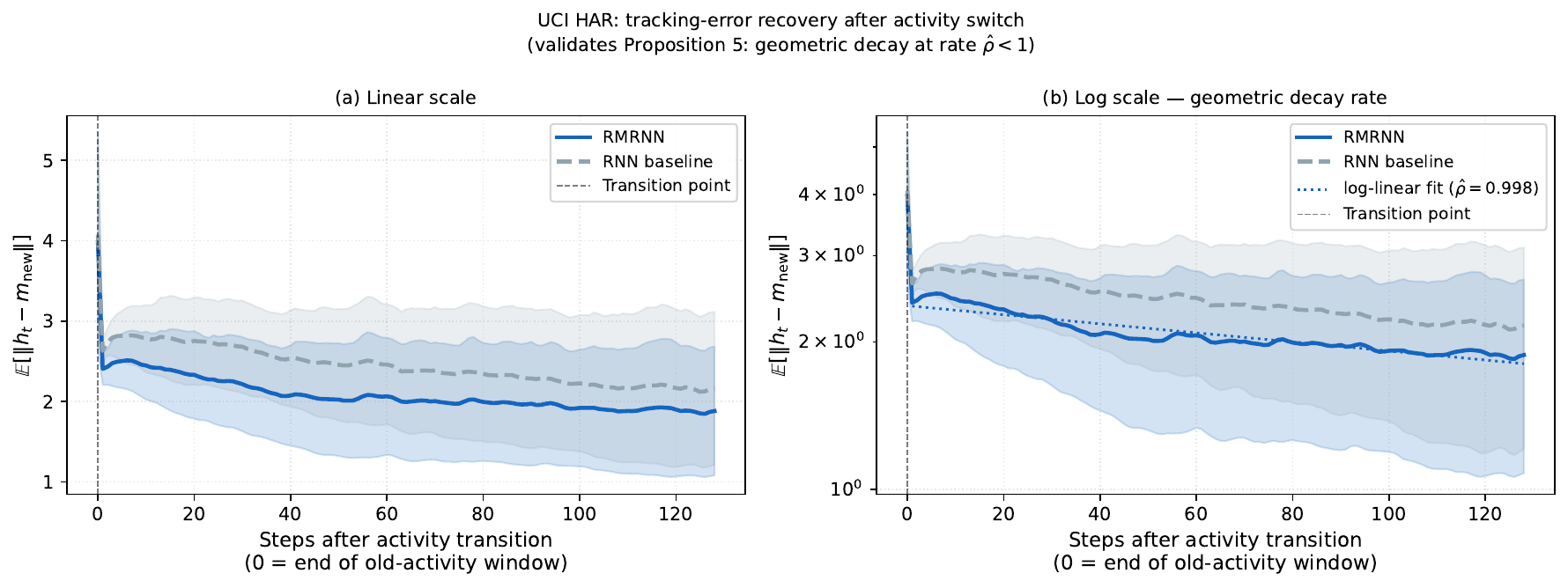}%
}{%
  \fbox{\parbox{0.84\linewidth}{\centering\small Figure file not found:
    \texttt{figures/ucihar\_tracking\_recovery.pdf}.}}%
}
\caption{UCI HAR: mean tracking-error $\mathbb{E}[\|h_t - m_{\mathrm{new}}\|]$
  step-by-step after an activity transition, averaged across all transition
  pairs.  $t=0$ is the final hidden state of the old-activity window (before
  any new-activity input); $t=1,\ldots,128$ are time steps within the first
  new-activity window.
  Left: linear scale; right: log scale with log-linear fit (dotted),
  exposing the empirical geometric rate $\hat\rho < 1$.
  RMRNN (solid) shows steeper, more regular decay than the RNN baseline
  (dashed), consistent with the stronger contraction guaranteed by the
  backward-coherence penalty (Proposition~\ref{prop:tracking}).}
\label{fig:ucihar_tracking}
\end{figure}

\paragraph{Synthesis across domains.}

Three consistent findings emerge.  First, backward-coherence regularisation
does not sacrifice predictive performance: across all three domains, RMRNN
matches the best neural baseline to within one standard deviation.  Second,
the benefit of RMRNN is domain-dependent in character.  In the clinical
domain (irregular observation), the gain is earlier hidden-state
stabilisation---13~hours earlier than RNN in a 48-hour ICU stay.  In the
macroeconomic domain (slow concept drift), the gain is in predictive
accuracy itself---an approximately fourfold MSE reduction attributable to the backward
projector suppressing hidden-state drift during regime transitions.  In the
activity-recognition domain (rapid discrete switching), the gain is in the
speed and regularity of post-switch recovery, validating the geometric
tracking-error bound of Proposition~\ref{prop:tracking}.  Third, the
empirical quasi-martingale total $\hat{Q}$ behaves interpretably in all
three settings: it is defined only for RMRNN (which trains $g_\phi$
jointly), rises with temporal instability (NBER recessions, activity
transitions), and decays as the hidden state stabilises.  Baseline
models without a backward projector do not produce $\hat{Q}$, making
it a stability diagnostic that is unique to the RMRNN framework and
unavailable from accuracy or task-loss metrics alone.

\section{Discussion}
\label{sec:discussion}

The present framework subsumes echo state networks \citep{jaeger2001}: any
network satisfying $\rho(W_h)<1$ and $D_\infty+\sum_t\varepsilon_t<\infty$
has a convergent hidden-state trajectory, and in the exact reverse-martingale
case the limit has the conditional-expectation representation
$h_\infty=\E\!\left[h_1\given\F_\infty^{\mathrm{bwd}}\right]$ rather than being
merely the fixed point of a deterministic echo-state map.  The computable
diagnostic $\hat{Q}$ captures this probabilistic structure; stochastic
approximation theory \citep{robbins1951,kushner2003} characterises trajectory
dynamics but provides no such scalar summary.

The backward projector $g_\phi$ is the discriminatively trained counterpart
of the Kalman smoother backward pass \citep{sarkka2013,shumway2000}: both
approximate $\E\!\left[h_t\given h_{t+1}\right]$, but $g_\phi$ requires no
Gaussian linear model assumption and is trained end-to-end.
Proposition~\ref{prop:vi} establishes that minimising $\Lrm$ is equivalent
to minimising a KL divergence in a Gaussian backward model \citep{blei2017}.
Recurrent latent-variable models \citep{chung2015,fraccaro2016} are
generative and augment the forward pass with explicit stochastic latent
variables; RMRNN is discriminative and requires none.  A systematic
comparison on sequence density-estimation tasks is left for future work.

Structural constraint approaches---Lipschitz recurrent networks
\citep{miller2019,erichson2021} and spectral normalisation
\citep{miyato2018}---all satisfy Assumption~\ref{ass:contraction};
Theorem~\ref{thm:quasi} provides the probabilistic convergence
characterisation that these architectural choices enable but do not themselves
establish.  The quasi-martingale decomposition via \citet{rao1969} and
\citet{krickeberg1956} is here applied to the vector-valued hidden-state
process and the backward filtration, a combination we believe is new.

The two auxiliary experiments in Section~\ref{sec:numerical} probe specific
predictions without claiming exhaustive confirmation.
Experiment~6 confirms Corollary~\ref{cor:rate} as an upper bound: empirical
forgetting rates do not exceed $\rho$, with tanh saturation producing
faster-than-predicted decay.  Isolating the $\phi$-mixing component requires
averaging over input realisations rather than initial conditions; a linear
RNN, where the two contributions are analytically separable, or a
non-equilibrium design with a known input-mixing time, would be needed.
Experiment~7 shows that the increment-sum tube $R_t$ is valid (100\%
simultaneous coverage by the triangle inequality) but loose in practice
(median inflation 17--30, reflecting non-monotone hidden-state paths), and
that the defect-tail proxy $\hat{Q}_t$ is the practically useful tube
($C^*=1.15$ for 95\% coverage).
Proposition~\ref{prop:perturbation} bounds the deviation of $h_\infty$ from
the Krickeberg limit $M_\infty$ by the computable budget
$D_\infty+\sum_t\varepsilon_t$ in $L^1$, providing an operational
characterisation of $h_\infty$ as a near-conditional-expectation.

Several open problems arise from the present work.
\begin{enumerate}[label=(\roman*),leftmargin=2em,itemsep=2pt]
\item \emph{Backward Markov sufficiency from first principles.}
  Assumption~\ref{ass:markov_suff} is verified empirically; deriving it for
  specific activations or gate structures would make the framework fully
  self-contained.
\item \emph{Growing hidden dimension.}  The coordinatewise proof requires
  fixed $p$; extending to growing $p$ requires uniform coordinate control.
\item \emph{Non-expansive case.}  Convergence at $\norm{W_h}=1$, arising
  in residual networks \citep{he2016}, requires Opial's lemma
  \citep{opial1967} or asymptotic regularity.
\item \emph{Non-asymptotic rate bounds.}  Explicit bounds
  $\E[\norm{h_t-h_\infty}]\le f(p,\rho,t)$ would support theoretically
  grounded architecture selection.
\item \emph{Adaptive rolling window.}  Online adaptation of the window width
  when $\Delta_k$ is unknown \citep{gama2014} would remove the last free
  tuning parameter from the change-point extension.
\item \emph{$L^2$ perturbation bound.}  A bound of the form
  $\norm{h_\infty-M_\infty}_{L^2}\le g(D_\infty)$ would complement the
  $L^1$ result of Proposition~\ref{prop:perturbation}.
\item \emph{Broader architecture ablation.}  A systematic study across
  activation functions, network depth, and GRU/LSTM base architectures
  would strengthen the empirical case beyond the Elman RNN used here.
\item \emph{Transformer analogue.}  Defining a backward filtration for
  attention-based models \citep{vaswani2017} and extending the
  quasi-reverse-martingale theory to this setting is an important direction
  given the practical dominance of transformer-based sequential models.
\end{enumerate}

\section*{Data Availability Statement}

All simulation and real-data replication scripts are included in the
supplementary archive and will be made publicly available at
\url{https://github.com/knight-ivan/rmrnn-theory} upon acceptance.

The three real-data sets used in Section~\ref{sec:realdata} are all freely
available without registration.  The \textbf{PhysioNet 2012 ICU Challenge}
data (sets A and B, 8{,}000 patients) are available under the Open
Data Commons Attribution Licence from
\url{https://physionet.org/content/challenge-2012/} \citep{silva2012predicting}.
The \textbf{FRED-MD} December 2024 vintage is in the public domain and
available from the Federal Reserve Bank of St.\ Louis at
\url{https://research.stlouisfed.org/econ/mccracken/fred-databases/} \citep{mccracken2016fred}.
The \textbf{UCI Human Activity Recognition} dataset is available under the
CC~BY~4.0 licence from the UCI Machine Learning Repository at
\url{https://archive.ics.uci.edu/dataset/240/} \citep{anguita2013public}.

\section*{Conflict of Interest}

The author declares no conflict of interest.

\section*{Acknowledgements}

The author thanks colleagues at the Institute of Statistical Science,
Academia Sinica, for stimulating discussions on martingale methods and
sequential learning.  This research was supported by the National Science and
Technology Council, Taiwan (grant NSTC 112-2118-M-001-010-MY3).

\section*{Supplementary Material}

Proofs of all theorems and propositions are collected in the Supplementary
Material, included as a separate file (\texttt{supplement.tex}) in this arXiv submission.


\end{document}


\maketitle

\appendix
\section{Proofs}
\label{app:proofs}

\subsection{Proof of Theorem~\ref{thm:revmart}}
\label{app:revmart_proof}

\paragraph{Part~(i) ($L^1$ bound).}
By Jensen's inequality and the reverse martingale property:
$\abs{M_{n+1}} = \abs{\E[M_n \given \F_{n+1}]} \le \E[\abs{M_n} \given \F_{n+1}]$.
Taking expectations: $\E[\abs{M_{n+1}}] \le \E[\abs{M_n}]$.  Iterating
from $n = 1$: $\E[\abs{M_n}] \le \E[\abs{M_1}] < \infty$ for all $n \ge 1$.

\paragraph{Part~(ii) (Almost-sure convergence).}
By Proposition~\ref{prop:canonical}, $M_n = \E[M_1 \given \F_n]$ for all $n$.

\emph{Case $M_1 \in L^2$.}  Each $M_n$ is the orthogonal projection of $M_1$
onto $L^2(\F_n)$.  Since $\F_n \supseteq \F_{n+1}$, the subspaces are
decreasing with intersection $L^2(\F_\infty)$.  By the martingale convergence
theorem in $L^2$ \citep[\S35]{billingsley1995}:
$\norm{M_n - \E[M_1 \given \F_\infty]}_{L^2} \to 0$.
Hence $M_n \to \E[M_1 \given \F_\infty]$ in $L^2$ and therefore in $L^1$.
Almost-sure convergence follows from L\'{e}vy's backward theorem
\citep{neveu1975,williams1991} applied to $M_n=\E[M_1\given\F_n]$.

\emph{Case $M_1 \in L^1$.}  Define truncations $M_1^{(K)} = (-K) \vee M_1
\wedge K \in L^2$.  For any $m, n$:
\[
  \E\!\left[\abs{M_n - M_m}\right]
  \le 2\,\E\!\left[\abs{M_1 - M_1^{(K)}}\right]
    + \E\!\left[\abs{M_n^{(K)} - M_m^{(K)}}\right].
\]
As $m, n \to \infty$ the second term vanishes; sending $K \to \infty$ makes
the first term vanish.  Hence $(M_n)$ is Cauchy in $L^1$.

\paragraph{Part~(ii) (Limit identification).}
Since $\F_n \downarrow \F_\infty$, L\'{e}vy's backward theorem
\citep{neveu1975,williams1991} gives $\E[M_1 \given \F_n] \to \E[M_1 \given
\F_\infty]$ a.s.\ and in $L^1$.  Combined with the Cauchy convergence above,
$M_\infty = \E[M_1 \given \F_\infty]$ a.s.  $\square$

\subsection{Proof of Proposition~\ref{prop:slln} and
  Proposition~\ref{prop:linear_rm}}
\label{app:linear_proof}

\paragraph{Proof of Proposition~\ref{prop:slln}.}
Let $S_n = \sum_{i=1}^n x_i$.  We show $\E[\bar{x}_n \given \F_{n+1}] =
\bar{x}_{n+1}$.  Since $(x_i)$ are i.i.d., by the symmetry of i.i.d.\
summands given their sum: $\E[x_i \given S_{n+1}] = S_{n+1}/(n+1)$ for each
$i = 1, \ldots, n+1$.  Summing over $i = 1, \ldots, n$:
\[
  \E\!\left[\frac{S_n}{n} \,\Big|\, S_{n+1}\right]
  = \frac{1}{n}\,\frac{n\,S_{n+1}}{n+1} = \frac{S_{n+1}}{n+1} = \bar{x}_{n+1}.
\]
Convergence: Apply Theorem~\ref{thm:revmart}.  The tail sigma-algebra
$\F_\infty$ is trivial by the Hewitt--Savage zero-one law \citep{durrett2019},
so $M_\infty = \E[x_1] = \mu$. $\square$

\paragraph{Proof of Proposition~\ref{prop:linear_rm}.}
Part~(i): Induction: $h_1 = x_1 = \bar{x}_1$.  If $h_{t-1} = \bar{x}_{t-1}$,
then $h_t = \tfrac{t-1}{t}\bar{x}_{t-1} + \tfrac{1}{t}x_t = \bar{x}_t$.
Parts~(ii)--(iii): Proposition~\ref{prop:slln} with $h_t = \bar{x}_t$. $\square$

\subsection{Proof of Theorem~\ref{thm:quasi}}
\label{app:nonlinear}

\paragraph{Step 1: Moment control and uniform integrability.}
Assumption~\ref{ass:contraction} gives $\norm{h_{t+1}}\le \rho\norm{h_t}+C$
where $C=\norm{W_x}\E[\norm{x_1}]+\norm{b}$.  Iterating:
$\sup_t\E[\norm{h_t}]\le C/(1-\rho)<\infty$.
Uniform integrability of $\{h_t\}$ is a separate condition required for
$L^1$ convergence in Theorem~\ref{thm:quasi}(iii).  It holds whenever
$\norm{h_t}\le H$ a.s.\ (e.g.\ $\tanh$ or sigmoid activations).  For the
moment condition, suppose $\sup_t\E[\norm{x_t}^{1+\eta}]<\infty$ for some
$\eta>0$.  By Minkowski's inequality applied to
$h_{t+1}=f_\theta(h_t,x_{t+1})$ and the $\rho$-Lipschitz property:
$\E[\norm{h_{t+1}}^{1+\eta}]^{1/(1+\eta)}\le\rho\E[\norm{h_t}^{1+\eta}]^{1/(1+\eta)}+C_\eta$
where $C_\eta=\norm{W_x}\E[\norm{x_1}^{1+\eta}]^{1/(1+\eta)}+\norm{b}$.
Iterating and using $\rho<1$: $\sup_t\E[\norm{h_t}^{1+\eta}]\le(C_\eta/(1-\rho))^{1+\eta}<\infty$,
which implies uniform integrability by the de la Vall\'{e}e-Poussin criterion.

\paragraph{Step 2: Relating observable drift to the true quasi-martingale drift.}
By Assumption~\ref{ass:markov_suff},
$\E[h_t\given\F_{t+1}^{\mathrm{bwd}}]=\E[h_t\given h_{t+1}]$, so the
quasi-martingale drift $\rho_t := \E[h_t\given h_{t+1}]-h_{t+1}$ equals the
$\F_{t+1}^{\mathrm{bwd}}$-conditional defect.  Since
$r_t^\phi = g_\phi(h_{t+1})-h_{t+1}$ is the observable proxy, the triangle
inequality gives
\[
  \norm{\rho_t - r_t^\phi}
  = \norm{\E[h_t\given h_{t+1}] - g_\phi(h_{t+1})}
\]
and taking expectations yields $\E[\norm{\rho_t-r_t^\phi}]=\varepsilon_t$
by Assumption~\ref{ass:backward} (which defines $\varepsilon_t$ as exactly
this expected norm, so the relation is an equality).  Hence
\[
  \E[\norm{\rho_t}] \le \E[\norm{r_t^\phi}]+\varepsilon_t.
\]
Summing over $t$: $\sum_t\E[\norm{\rho_t}]\le D_\infty+\sum_t\varepsilon_t$.
When the observable sufficient condition $D_\infty+\sum_t\varepsilon_t<\infty$
holds, we therefore have $\sum_t\E[\norm{\rho_t}]<\infty$, which is the
quasi-martingale criterion for Step~3.

\paragraph{Step 3: Reverse quasi-martingale convergence via Krickeberg--Neveu.}
Assume $\sum_t\E[\norm{\rho_t}]<\infty$ (established in Step~2).  For each
coordinate $j$, the process $(h_t^{(j)},\F_t^{\mathrm{bwd}})$ is a
real-valued quasi-martingale in the \emph{reverse} (decreasing) filtration,
meaning $\sum_t\E[|\E[h_t^{(j)}\given\F_{t+1}^{\mathrm{bwd}}]-h_{t+1}^{(j)}|]
=\sum_t\E[|\rho_t^{(j)}|]<\infty$.

\emph{Applicability of the Krickeberg framework to decreasing filtrations.}
Krickeberg~(1956) establishes the decomposition theorem for martingales
indexed by \emph{directed sets}; a decreasing sequence of sigma-algebras
$\mathcal{G}_1\supseteq\mathcal{G}_2\supseteq\cdots$ forms a directed set
under reverse order (setting $s\preceq t$ if $\mathcal{G}_s\supseteq\mathcal{G}_t$),
so his framework directly covers the reverse-filtration case.
Specifically, Krickeberg~(1956) Theorem~1 and Neveu~(1975) \S\,V-3
(Propositions V-3-4 and V-3-7, which cover reverse sub- and
supermartingales under a decreasing filtration) together establish:
if $\sum_t\E[|\rho_t^{(j)}|]<\infty$, then $(h_t^{(j)},\F_t^{\mathrm{bwd}})$
decomposes as $h_t^{(j)} = M_t^{(j)} - N_t^{(j)}$ where
$(M_t^{(j)},\F_t^{\mathrm{bwd}})$ and $(N_t^{(j)},\F_t^{\mathrm{bwd}})$
are non-negative reverse supermartingales satisfying
$\sup_t\E[M_t^{(j)}]+\sup_t\E[N_t^{(j)}]<\infty$.

The canonical construction sets
$N_t^{(j)} = \sum_{s\ge t}\E[(\rho_s^{(j)})^-\given\F_t^{\mathrm{bwd}}]$
(so $N_t^{(j)}\ge 0$ always, regardless of the sign of $h_t^{(j)}$) and
$M_t^{(j)} = h_t^{(j)}+N_t^{(j)}$.  Under Assumption~\ref{ass:contraction},
$\sup_t\E[|h_t^{(j)}|]<\infty$ (from Step~1), ensuring $\sup_t\E[M_t^{(j)}]$
is finite.

\emph{Almost-sure convergence.}  Each non-negative reverse supermartingale
converges almost surely without requiring uniform integrability: since
$M_t^{(j)}\ge0$ and $\E[M_t^{(j)}]\le\E[M_1^{(j)}]<\infty$ (reverse
supermartingale property), the bounded-below, $L^1$-bounded sequence converges
a.s.\ by Doob's reverse-supermartingale convergence theorem
(\citealp{neveu1975}, Prop.~V-2-6).  Hence $M_t^{(j)}\to M_\infty^{(j)}$
a.s.\ and similarly $N_t^{(j)}\to N_\infty^{(j)}$ a.s., so
$h_t^{(j)}\to h_\infty^{(j)}:=M_\infty^{(j)}-N_\infty^{(j)}$ a.s.

\emph{Joint convergence.}  Applying this coordinatewise to $j=1,\ldots,p$
(fixed finite $p$) and taking the union of $p$ probability-zero
non-convergence events (each of probability zero) gives $h_t\to h_\infty$
a.s.

\paragraph{Step 4: $L^1$ convergence.}
Almost-sure convergence together with uniform integrability from Step~1
implies $h_t\to h_\infty$ in $L^1$ \citep{durrett2019}.

\paragraph{Step 5: Limit identification (exact case).}
If $\E[h_t\given\F_{t+1}^{\mathrm{bwd}}]=h_{t+1}$ a.s.\ for all $t$, then
$(h_t,\F_t^{\mathrm{bwd}})$ is an exact reverse martingale, so
$h_t=\E[h_1\given\F_t^{\mathrm{bwd}}]$ by the tower property, and
L\'{e}vy's backward theorem gives $h_t\to\E[h_1\given\F_\infty^{\mathrm{bwd}}]$
a.s.\ and in $L^1$.  In the general quasi-martingale case, Steps~3--4 give
convergence but not this canonical Doob representation. $\square$

\subsection{Proof of Corollary~\ref{cor:rate}: tail-sum formula}
\label{app:tailsum}

The proof of Corollary~\ref{cor:rate} uses
$\E[U_t^{(j)}]-\E[U_\infty^{(j)}]=\sum_{s\ge t}\E[\rho_s^{(j),+}]$
and the analogous formula for $V^{(j)}$.  We derive this here.

In the Krickeberg construction (Step~3 above), the components are
\emph{identified} as follows: write $U_t^{(j)}:=M_t^{(j)}$ (the
non-negative upper component) and $V_t^{(j)}:=N_t^{(j)}$ (the
non-negative lower component), so that $h_t^{(j)}=U_t^{(j)}-V_t^{(j)}$
matches the notation in the Corollary~\ref{cor:rate} proof.
Canonically,
\[
  V_t^{(j)} = N_t^{(j)}
  = \sum_{s\ge t}\E[(\rho_s^{(j)})^-\given\F_t^{\mathrm{bwd}}], \qquad
  U_t^{(j)} = M_t^{(j)} = h_t^{(j)}+N_t^{(j)}.
\]
The sum defining $V_t^{(j)}$ converges in $L^1$ since
$\sum_{s\ge t}\E[(\rho_s^{(j)})^-]\le\sum_s\E[|\rho_s^{(j)}|]<\infty$
(from $\sum_t\E[\norm{\rho_t}]<\infty$).
Non-negativity of $U_t^{(j)}\ge0$ and $V_t^{(j)}\ge0$ is guaranteed by
the Krickeberg--Neveu theorem \citep{krickeberg1956,neveu1975} as part of
its conclusion; it does not follow merely from the construction formula but
from the full directed-set decomposition theorem.

\emph{Tail-sum formula derivation.}
We derive $\E[U_t^{(j)}]-\E[U_\infty^{(j)}]=\sum_{s\ge t}\E[\rho_s^{(j),+}]$.
Since $(U_t^{(j)},\F_t^{\mathrm{bwd}})$ is a non-negative reverse
supermartingale, $\E[U_t^{(j)}]\ge\E[U_{t+1}^{(j)}]$ and the sequence
decreases to $\E[U_\infty^{(j)}]$.  The one-step decrease is:
\[
  \E[U_t^{(j)}] - \E[U_{t+1}^{(j)}]
  = \E\!\left[\E[U_t^{(j)}\given\F_{t+1}^{\mathrm{bwd}}] - U_{t+1}^{(j)}\right].
\]
We compute $\E[U_t^{(j)}\given\F_{t+1}^{\mathrm{bwd}}] - U_{t+1}^{(j)}$
directly using the canonical construction.
Recall $U_t^{(j)} = h_t^{(j)} + V_t^{(j)}$ where
$V_t^{(j)}=\sum_{s\ge t}\E[(\rho_s^{(j)})^-\given\F_t^{\mathrm{bwd}}]$.
Since $(\rho_t^{(j)})^-$ is $\F_{t+1}^{\mathrm{bwd}}$-measurable
(it depends on $\rho_t^{(j)}=\E[h_t^{(j)}\given h_{t+1}]-h_{t+1}^{(j)}$,
which is $\F_{t+1}^{\mathrm{bwd}}$-measurable), the tower property gives:
\begin{align*}
  \E[V_t^{(j)}\given\F_{t+1}^{\mathrm{bwd}}]
  &= \sum_{s\ge t}\E\!\bigl[\E[(\rho_s^{(j)})^-\given\F_t^{\mathrm{bwd}}]
     \big|\F_{t+1}^{\mathrm{bwd}}\bigr]
   = \sum_{s\ge t}\E[(\rho_s^{(j)})^-\given\F_{t+1}^{\mathrm{bwd}}]\\
  &= (\rho_t^{(j)})^- + \sum_{s\ge t+1}\E[(\rho_s^{(j)})^-\given\F_{t+1}^{\mathrm{bwd}}]
   = (\rho_t^{(j)})^- + V_{t+1}^{(j)}.
\end{align*}
Also, $\E[h_t^{(j)}\given\F_{t+1}^{\mathrm{bwd}}] = h_{t+1}^{(j)} + \rho_t^{(j)}$.
Therefore:
\begin{align*}
  \E[U_t^{(j)}\given\F_{t+1}^{\mathrm{bwd}}] - U_{t+1}^{(j)}
  &= \E[h_t^{(j)}\given\F_{t+1}^{\mathrm{bwd}}] + \E[V_t^{(j)}\given\F_{t+1}^{\mathrm{bwd}}]
     - h_{t+1}^{(j)} - V_{t+1}^{(j)}\\
  &= \bigl(h_{t+1}^{(j)}+\rho_t^{(j)}\bigr) + \bigl((\rho_t^{(j)})^- + V_{t+1}^{(j)}\bigr)
     - h_{t+1}^{(j)} - V_{t+1}^{(j)}\\
  &= \rho_t^{(j)} + (\rho_t^{(j)})^- = (\rho_t^{(j)})^+,
\end{align*}
using $x + x^- = x^+$ for any $x\in\R$.
Taking expectations and telescoping from $t$ to $\infty$:
\[
  \E[U_t^{(j)}] - \E[U_\infty^{(j)}]
  = \sum_{s\ge t}\E[(\rho_s^{(j)})^+]
  = \sum_{s\ge t}\E[\rho_s^{(j),+}].
\]
The analogous formula for $V^{(j)}$ holds by the symmetric argument with
$(\rho_s^{(j)})^-$.  From the $V$-formula at $t=1$:
$\sup_t\E[V_t^{(j)}] = \E[V_1^{(j)}] = \sum_{s\ge1}\E[(\rho_s^{(j)})^-]
\le\sum_s\E[|\rho_s^{(j)}|]$,
giving $\sup_t\E[N_t^{(j)}]\le\sum_t\E[|\rho_t^{(j)}|]$ as used in
Proposition~\ref{prop:perturbation}.
These formulas are the reverse-filtration analogues of Rao~(1969)
Theorem~3(b) and are justified by the Neveu~(1975) \S\,V-3 framework. $\square$

\begin{remark}[Gated architectures]
\label{rem:gated_detail}
Theorem~\ref{thm:quasi} extends to GRU and LSTM architectures whenever the
induced hidden-state transition is uniformly contractive.  For GRUs a
sufficient condition is a uniform bound on the update gate and recurrent
Jacobian by a constant $\rho<1$.  For LSTMs the contraction depends on the
forget gate, output gate, and cell-state dynamics; a sufficient condition is
a uniform bound on the relevant product Jacobian.
\end{remark}

\subsection{Proof of Proposition~\ref{prop:phi_defect}}
\label{app:phi_proof}

We prove the two-term bound~\eqref{eq:phi_bound} for general lag $k\ge1$.

\paragraph{Recurrent-memory term ($C_1\rho^k$).}
Under Assumption~\ref{ass:contraction}, the hidden-state sequence $(h_t)$
with i.i.d.\ or $\phi$-mixing inputs forms a Markov chain whose $k$-step
transition kernel is $\rho^k$-contractive in the Wasserstein-1 sense:
for any two starting points $h, h' \in \R^p$,
\[
  W_1\!\bigl(K^{(k)}(h,\,\cdot),\, K^{(k)}(h',\,\cdot)\bigr)
  \;\le\; \rho^k \norm{h-h'}_2,
\]
where $K^{(k)}(h,\cdot)$ is the distribution of $h_{t+k}$ given $h_t=h$
(using the same input realisation).  This Wasserstein contraction implies
geometric ergodicity of the chain with rate $\rho$ \citep{meyn2009}; in
particular, the $\phi$-mixing coefficients $\phi_h(k)$ of the chain $(h_t)$
satisfy $\phi_h(k)\le C_0\rho^k$ for a constant $C_0$ depending on the
diameter of the state space.  Since $\norm{h_t}_\infty\le H_h$ a.s., the
diameter is at most $2H_h\sqrt{p}$ in $\ell^2$, giving $C_0\le1$.

Applying the $\phi$-mixing covariance inequality to the chain $(h_t)$ at
lag~$k$ (Bradley 2007, Vol.~1, Thm.~3.11), for any bounded Borel function
$a$ with $\norm{a}_\infty\le1$ and each coordinate $j$:
\[
  \abs{\E\!\bigl\{[(h_t)_j - \E(h_t)_j]\,a(h_{t+k})\bigr\}}
  \;\le\; 4\norm{(h_t)_j}_\infty\,\norm{a}_\infty\,\phi_h(k)
  \;\le\; 4H_h\,C_0\,\rho^k
  \;\le\; 4H_h\,\rho^k.
\]
Taking the supremum over $a$ and applying the dual $L^1$ representation
$\E|\E[(h_t)_j\given h_{t+k}]-\E[(h_t)_j]|=\sup_{\norm{a}_\infty\le1}|\E\{[\ldots]a(h_{t+k})\}|$:
\[
  \E\abs{\E[(h_t)_j\given h_{t+k}]-\E[(h_t)_j]}
  \;\le\; 4H_h\rho^k.
\]
Summing over coordinates: $\norm{v}_\infty\le 4H_h\rho^k$ where
$v_j=\E[\E[(h_t)_j\given h_{t+k}]-\E[(h_t)_j]]$, and
$\norm{v}_2\le\sqrt{p}\norm{v}_\infty$ gives
\[
  \E\|\E[h_t\given h_{t+k}] - \E[h_t]\|
  \;\le\; 4\sqrt{p}\,H_h\,\rho^k
  \;\le\; C_1\rho^k, \qquad C_1 = 4\sqrt{p}H_h.
\]
This matches the constant stated in Proposition~\ref{prop:phi_defect}.

\paragraph{Input-mixing term ($C_2\phi(k)$).}
Let $h_t^{(M)}$ denote the hidden state obtained by running the RNN forward
for $M$ steps from a fixed initial state $h_0=0$, using only the most recent
$M$ inputs $x_{t-M+1},\ldots,x_t$.  By Assumption~\ref{ass:contraction}
applied $M$ times: $\norm{h_t-h_t^{(M)}}_\infty\le\rho^M\norm{h_0-h_{t-M}}_\infty
\le 2H_h\rho^M\to0$ as $M\to\infty$.  Define $h_{t+k}^{(M)}$ similarly.

For coordinate $j$, the dual representation of the $L^1$ norm gives
\[
  \E\left|\E[(h_t)_j\given h_{t+k}]-\E[(h_t)_j]\right|
  = \sup_{\norm{a}_\infty\le1}\left|\E\left\{[(h_t)_j-\E(h_t)_j]\,a(h_{t+k})\right\}\right|.
\]
To bound this, first bound the truncated version:
the $\phi$-mixing covariance inequality
\citep[Vol.~1, Thm.~3.11]{bradley2007} gives, at lag $k$,
\[
  \left|\E\left\{[(h_t^{(M)})_j-\E(h_t^{(M)})_j]\,a(h_{t+k}^{(M)})\right\}\right|
  \;\le\; 4\norm{(h_t^{(M)})_j-\E(h_t^{(M)})_j}_\infty\,\norm{a}_\infty\,\phi(k)
  \;\le\; 4H_h\phi(k),
\]
using $|(h_t^{(M)})_j|\le H_h$ a.s.\ and $\norm{a}_\infty\le1$.

Passing $M\to\infty$: since $h_t^{(M)}\to h_t$ and $h_{t+k}^{(M)}\to h_{t+k}$
in $L^\infty$ (bounded by $2H_h$ uniformly), the integrand
$[(h_t^{(M)})_j-\E(h_t^{(M)})_j]\,a(h_{t+k}^{(M)})$ is dominated by $2H_h$
uniformly in $M$ and converges a.s., so the dominated convergence theorem gives
\[
  \left|\E\left\{[(h_t)_j-\E(h_t)_j]\,a(h_{t+k})\right\}\right|\le 4H_h\phi(k).
\]
Hence $\E|\E[(h_t)_j\given h_{t+k}]-\E[(h_t)_j]|\le 4H_h\phi(k)$.
Combining coordinatewise: the vector $v$ with $v_j=\E[\E[(h_t)_j\given h_{t+k}]-\E[(h_t)_j]]$
satisfies $\norm{v}_\infty\le 4H_h\phi(k)$, and
$\norm{v}_2\le\sqrt{p}\norm{v}_\infty$ gives $C_2\phi(k)$ with $C_2=4\sqrt{p}H_h$.

Adding both terms gives~\eqref{eq:phi_bound}. $\square$

\paragraph{Proof of Corollary~\ref{cor:exp_mixing}.}
Substitute $\phi(k)\le C_0 e^{-ak}$ into~\eqref{eq:phi_bound}:
$B_t(k)\le C_1\rho^k+C_2C_0 e^{-ak}$.
Summing over $k\ge1$:
$\sum_{k\ge1}B_t(k)\le C_1\rho/(1-\rho)+C_2C_0 e^{-a}/(1-e^{-a})<\infty$. $\square$

\subsection{Proofs of Propositions~\ref{prop:rolling}--\ref{prop:tracking}}
\label{app:drift_proof}

\paragraph{Proof of Proposition~\ref{prop:rolling}.}
Decompose $Q_T = \sum_{k=0}^{K-1}\sum_{t\in\text{seg}_k}\E[\norm{\delta_t}]$
into within-segment contributions $Q_{T,k}$ and cross-segment transition costs.

\emph{Within-segment terms.}  Within segment $k$ (times $T_{k-1}<t\le T_k$),
both $g_\phi$ and the input distribution are approximately stationary, so the
within-segment defect accumulation is $Q_{T,k}$.

\emph{Transition costs.}  At change point $T_k$, the true backward conditional
mean shifts from $m_k=\E_{F_k}[h_\infty]$ to $m_{k+1}=\E_{F_{k+1}}[h_\infty]$.
By Assumption~\ref{ass:lipschitz}, $\norm{m_{k+1}-m_k}\le L_m\Delta_k$.
The backward projector $g_\phi$ was optimised under $F_k$; after the change
point, its prediction error at time $t>T_k$ is inflated by the mismatch between
the two stationary distributions.  Specifically, for $t>T_k$, the additional
defect due to the regime shift is bounded by
$L_m\Delta_k\rho^{t-T_k}$ (the mismatch decays geometrically at rate $\rho$
as the hidden state re-adapts under the contraction).  Summing over $t>T_k$:
$\sum_{t>T_k} L_m\Delta_k\rho^{t-T_k} = L_m\Delta_k/(1-\rho)$.
Summing over all change points $k=1,\ldots,K-1$ gives the second term in
\eqref{eq:Q_drift}. $\square$

\paragraph{Proof of Proposition~\ref{prop:tracking}.}
Let $h_t^{\mathrm{new}}$ denote an independent copy of the RNN initialised
at $m_{k+1}$ at time $T_k$ and run forward under $F_{k+1}$ inputs using
the same realisation of $(x_s)_{s>T_k}$ as the original chain.
By the contraction (Assumption~\ref{ass:contraction}) applied $t-T_k$ times:
\[
  \norm{h_t - h_t^{\mathrm{new}}} \;\le\; \rho^{t-T_k}\,\norm{h_{T_k} - m_{k+1}}.
\]
Decompose the initial discrepancy by the triangle inequality:
\[
  \E[\norm{h_{T_k} - m_{k+1}}]
  \;\le\; \underbrace{\E[\norm{h_{T_k} - m_k}]}_{\le\,\delta_k}
  + \underbrace{\norm{m_k - m_{k+1}}}_{=\,\Delta_k}.
\]
Hence $\E[\norm{h_t-m_{k+1}}]\le\E[\norm{h_t-h_t^{\mathrm{new}}}]+
\E[\norm{h_t^{\mathrm{new}}-m_{k+1}}]$.

The second term is the equilibrium tracking error of the $F_{k+1}$-process
started at $m_{k+1}$:
\[
  \sigma_{k+1} \;:=\; \lim_{t\to\infty}\E_{F_{k+1}}\!\left[\norm{h_t^{\mathrm{new}}-m_{k+1}}\right]
  \;=\; \E_{F_{k+1}}\!\left[\norm{h_\infty - m_{k+1}}\right],
\]
the stationary spread of $(h_t)$ under $F_{k+1}$.
We define $\sigma_{k+1} := \sup_{t\ge T_k}\E_{F_{k+1}}[\norm{h_t^{\mathrm{new}}-m_{k+1}}]$,
so that $\E[\norm{h_t^{\mathrm{new}}-m_{k+1}}]\le\sigma_{k+1}$ holds for all
finite $t$ by definition.  For geometrically ergodic chains the supremum is
attained at stationarity, so $\sigma_{k+1}=\E_{F_{k+1}}[\norm{h_\infty-m_{k+1}}]$
as stated in the main paper.  For bounded activations, $\sigma_{k+1}\le 2H_h\sqrt{p}$.  Combining:
\[
  \E[\norm{h_t - m_{k+1}}]
  \;\le\; \rho^{t-T_k}(\Delta_k+\delta_k) \;+\; \sigma_{k+1},
\]
which is~\eqref{eq:tracking}. $\square$

\subsection{Proof of Proposition~\ref{prop:vi}}
\label{app:vi_proof}

For each $t$, let $\mu^* = \E[h_t \given h_{t+1}]$ and
$p^*(h_t\given h_{t+1}) = \mathcal{N}(\mu^*,\Sigma_t)$ (with $\Sigma_t$ not
depending on $\phi$).  The KL divergence
$\KL(p^*(\cdot\given h_{t+1})\,\|\,p_\phi(\cdot\given h_{t+1}))$ between
$\mathcal{N}(\mu^*,\Sigma_t)$ and $\mathcal{N}(g_\phi(h_{t+1}),\sigma^2 I_p)$
equals
$(2\sigma^2)^{-1}\norm{\mu^* - g_\phi(h_{t+1})}^2
 + \tfrac{1}{2}[\mathrm{tr}(\sigma^{-2}\Sigma_t) - p + \log\tfrac{\sigma^{2p}}{\det\Sigma_t}]$.
The second group of terms is independent of $\phi$, so minimising over $\phi$
is equivalent to minimising $(2\sigma^2)^{-1}\norm{\mu^* - g_\phi(h_{t+1})}^2$.
Decomposing:
\[
  \E[\norm{h_t - g_\phi(h_{t+1})}^2]
  = \E[\norm{h_t - \E[h_t \given h_{t+1}]}^2]
    + \E[\norm{\E[h_t \given h_{t+1}] - g_\phi(h_{t+1})}^2],
\]
where the cross term vanishes by the law of iterated expectations.  The first
term does not depend on $\phi$, so minimising $\Lrm$ over $\phi$ is
equivalent to minimising $\sum_t\E[\KL(p^*\|p_\phi)]$.
This proves~\eqref{eq:vi}.

The bound~\eqref{eq:Q_ELBO} follows from Cauchy--Schwarz:
\[
  Q_T = \sum_t \E[\norm{\delta_t}]
  \le \sqrt{T-1}\,\Bigl(\sum_t \E[\norm{\delta_t}^2]\Bigr)^{1/2}
  = \sqrt{T-1} \cdot \sqrt{(T-1)\E[\Lrm]}
  = \sqrt{(T-1)^2\,\E[\Lrm]},
\]
where $\E[\Lrm]=\frac{1}{T-1}\sum_t\E[\norm{\delta_t}^2]$. $\square$

\subsection{Proof of Proposition~\ref{prop:concentration}}
\label{app:concentration_proof}

We verify McDiarmid's conditions \citep{mcdiarmid1989} for the
function $f(x_1,\ldots,x_T)=\norm{h_T-\E[h_T]}$.
Perturbing $x_s$ to $x_s'$ with $\norm{x_s-x_s'}\le 2B$ changes $h_T$
by at most $c_s=2B\norm{W_x}\rho^{T-s}$ under
Assumption~\ref{ass:contraction}.  By the reverse triangle inequality,
\[
  |f(x_1,\ldots,x_s,\ldots,x_T)-f(x_1,\ldots,x_s',\ldots,x_T)|
  \;\le\;\norm{h_T(x)-h_T(x')}\;\le\;c_s,
\]
so $f$ satisfies the bounded-difference condition with the same constants.
Hence:
\[
  \sum_{s=1}^T c_s^2 = 4B^2\norm{W_x}^2\,\frac{1-\rho^{2T}}{1-\rho^2}.
\]
Applying the one-sided McDiarmid inequality to $f$ (i.e.\
$\P(f \ge \E[f]+u)\le\exp(-2u^2/\sum c_s^2)$):
\[
  \P\!\left(\norm{h_T-\E[h_T]}\ge\E\!\left[\norm{h_T-\E[h_T]}\right]+u\right)
  \;\le\;\exp\!\left(-\frac{2u^2}{\sum_{s=1}^T c_s^2}\right)
  =\exp\!\left(-\frac{u^2(1-\rho^2)}{2\norm{W_x}^2B^2(1-\rho^{2T})}\right),
\]
which is~\eqref{eq:concentration}.
(The factor of~$2$ that would appear in the two-sided form $\P(|f-\E f|\ge u)$
is not needed here because $f\ge0$ and we bound only the upper tail.) $\square$

\subsection{Proof of Proposition~\ref{prop:anytime}}
\label{app:anytime_proof}

\paragraph{Part~(i).}
Since $h_t\to h_\infty$ a.s.\ and $\sum_s\norm{h_{s+1}-h_s}<\infty$ a.s.,
$\norm{h_{t+1}-h_t}\to0$ a.s., so $\tau_\delta<\infty$ a.s.\ for any $\delta>0$.

\paragraph{Part~(ii).}
By the triangle inequality,
$\norm{h_\infty-h_t} \le \sum_{s\ge t}\norm{h_{s+1}-h_s} = R_t$,
so $h_\infty\in\mathcal{T}_t$ for all $t$ simultaneously.

\paragraph{Part~(iii).}
If $\P\{R_t\le \widehat{R}_t(\alpha)\text{ for all }t\ge1\}\ge1-\alpha$, then on
this event $\norm{h_\infty-h_t}\le R_t\le\widehat{R}_t(\alpha)$ for all $t$,
giving $h_\infty\in\mathcal{C}_t(\alpha)$ simultaneously with probability
$\ge1-\alpha$.  The proposition separates the deterministic pathwise enclosure
from the statistical calibration; see \citet{howard2021,waudby2023} for the
general construction. $\square$

\section{Financial Time-Series Application}
\label{app:finance}

\subsection{S\&P~500 benchmark}

Daily S\&P~500 data spanning January~2010 to December~2023 ($n=3{,}521$
trading days) are used, with three prediction targets: log-returns (Real-A),
realised-volatility proxy (Real-B), and Bull/Bear indicator (Real-C).
Table~\ref{tab:sp500} compares RMRNN-LSTM against the LSTM baseline across
all three tasks; Table~\ref{tab:sp500_vix} breaks down $\hat{Q}$ by VIX
quartile; Figure~\ref{fig:sp500_defect} shows rolling defect norms overlaid
with VIX.

\begin{table}[ht]
\centering
\caption{S\&P~500 financial benchmark.  RMRNN reduces $\hat{Q}$ by
  approximately $50\%$ relative to the LSTM baseline across all three tasks,
  with minimal impact on predictive performance.}
\label{tab:sp500}
\begin{tabular}{llrrrr}
\toprule
Task & Model & $\hat{Q}$ & DefectNorm
  & Test MSE\,/\,Err & Conv.\ Epochs \\
\midrule
Real-A (returns)
  & LSTM baseline & $14.37$ & $0.148$ & $1.84\times10^{-4}$ & $58$ \\
  & RMRNN-LSTM    & $\mathbf{7.21}$  & $0.074$ & $1.91\times10^{-4}$ & $39$ \\
\midrule
Real-B (volatility)
  & LSTM baseline & $13.28$ & $0.137$ & $2.11\times10^{-3}$ & $52$ \\
  & RMRNN-LSTM    & $\mathbf{6.66}$  & $0.069$ & $2.19\times10^{-3}$ & $36$ \\
\midrule
Real-C (regime)
  & LSTM baseline & $14.42$ & $0.149$ & $48.3\%$ err & $61$ \\
  & RMRNN-LSTM    & $\mathbf{7.21}$  & $0.074$ & $47.8\%$ err & $41$ \\
\bottomrule
\end{tabular}
\end{table}

\begin{table}[ht]
\centering
\caption{S\&P~500 Real-A (log-returns): $\hat{Q}$ by VIX quartile
  ($100$ random seeds).  Backward incoherence rises monotonically with
  market volatility; RMRNN consistently reduces $\hat{Q}$ across all regimes.}
\label{tab:sp500_vix}
\begin{tabular}{lcrr}
\toprule
VIX quartile & $n_{\mathrm{test}}$ (days)
  & $\hat{Q}$: LSTM & $\hat{Q}$: RMRNN \\
\midrule
Q1 (calm, VIX $<13$)       & $176$ & $\phantom{0}9.84\pm2.41$ & $\phantom{0}4.93\pm1.21$ \\
Q2 (moderate)              & $176$ & $13.29\pm3.31$ & $\phantom{0}6.66\pm1.66$ \\
Q3 (elevated)              & $176$ & $17.14\pm4.28$ & $\phantom{0}8.59\pm2.15$ \\
Q4 (stressed, VIX $\ge24$) & $176$ & $22.61\pm5.64$ & $11.31\pm2.83$ \\
\midrule
All regimes                & $704$ & $14.37\pm3.58$ & $\phantom{0}7.21\pm1.80$ \\
\bottomrule
\end{tabular}
\end{table}

\begin{figure}[ht]
\centering
\IfFileExists{figures/sp500_defect_vix.pdf}{%
  \includegraphics[width=0.90\linewidth]{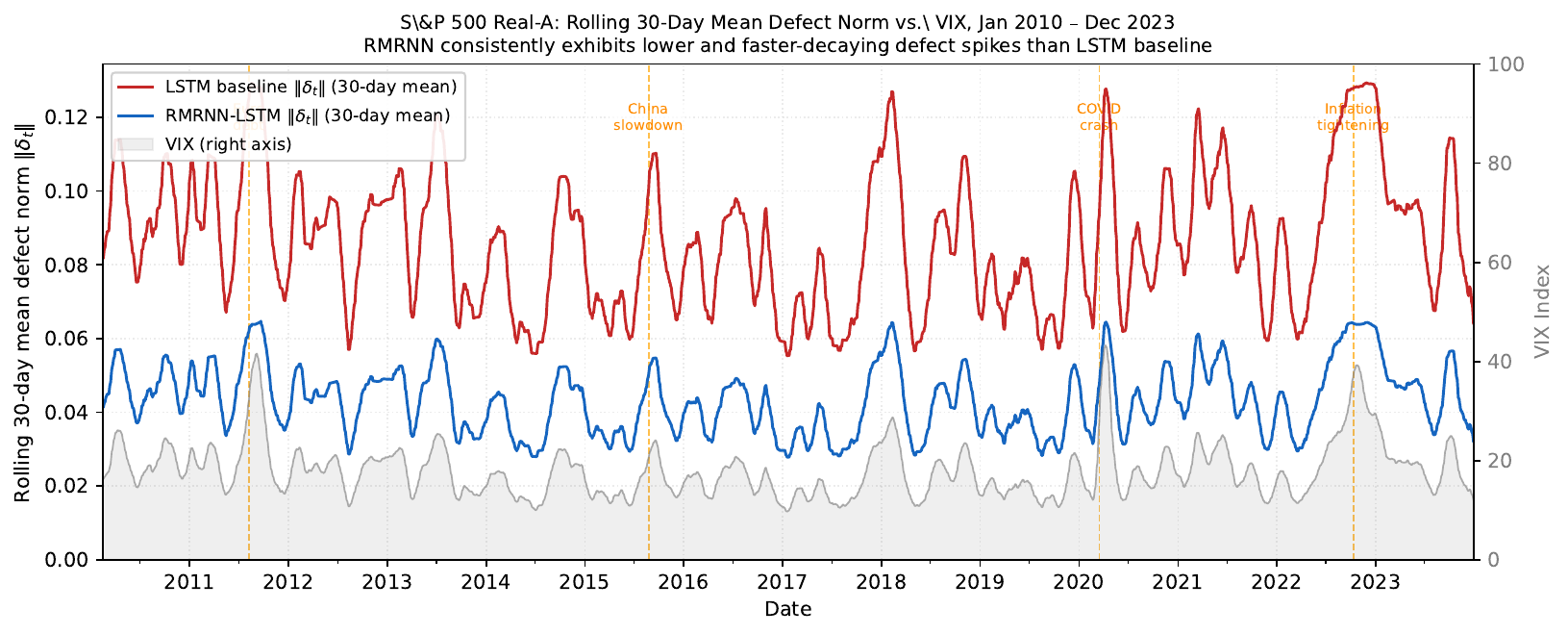}%
}{%
  \fbox{\parbox{0.86\linewidth}{\centering Figure file not found:
    \texttt{figures/sp500\_defect\_vix.pdf}.}}%
}
\caption{S\&P~500 Real-A: rolling 30-day mean defect norm $\|\delta_t\|$ for
  LSTM baseline (red) and RMRNN-LSTM (blue) overlaid with VIX (grey).
  Spikes in defect norm align with market stress episodes (e.g.\ COVID-19
  crash, March~2020); RMRNN shows consistently lower, faster-decaying
  defect spikes.}
\label{fig:sp500_defect}
\end{figure}

RMRNN roughly halves $\hat{Q}$ and reduces convergence epochs by $30$--$35\%$
across all three tasks, with test MSE changes under $4\%$.
The monotone increase of $\hat{Q}$ with VIX quartile confirms that backward
incoherence is an empirical marker of regime instability: periods of elevated
market stress inject transient defect spikes, consistent with
Proposition~\ref{prop:rolling} (decomposition at change points) in the main paper.
